\documentclass[twoside]{article}

\usepackage[accepted]{aistats2025}
%
%


\usepackage[round]{natbib}


\usepackage{amsmath, amssymb, amsthm}
\usepackage{bm, dsfont}
\usepackage{mathtools}
\usepackage{hyperref}
\usepackage{xcolor}
\usepackage{subcaption}
\usepackage[group-minimum-digits = 4, group-separator = {,}]{siunitx}
\usepackage[ruled]{algorithm2e}

\mathtoolsset{showonlyrefs, showmanualtags}

\usepackage[columnwise, switch]{lineno}

\theoremstyle{plain}
\newtheorem{theorem}{Theorem}[section]
\newtheorem{proposition}[theorem]{Proposition}

\newtheorem{corollary}[theorem]{Corollary}
\theoremstyle{definition}
\newtheorem{definition}[theorem]{Definition}

\theoremstyle{remark}

\newcommand*{\tr}{\mathrm{tr}}

\begin{document}

%

%

\twocolumn[

\aistatstitle{A Family of Distributions of Random Subsets for Controlling Positive and Negative Dependence}

\aistatsauthor{ Takahiro Kawashima \And Hideitsu Hino }

\aistatsaddress{ ZOZO Research \And Institute of Statistical Mathematics \\ RIKEN AIP } ]

\begin{abstract}
    Positive and negative dependence are fundamental concepts that characterize
    the attractive and repulsive behavior of random subsets.
    Although some probabilistic models
    are known to exhibit positive or negative dependence,
    it is challenging to seamlessly bridge them with a practicable probabilistic model.
    In this study, we introduce a new family of distributions, named the discrete kernel point process (DKPP), which includes determinantal point processes and parts of Boltzmann machines.
    We also develop some computational methods for probabilistic operations and inference with DKPPs,
    such as calculating marginal and conditional probabilities and learning the parameters.
    Our numerical experiments demonstrate the controllability of positive and negative dependence and the effectiveness of the computational methods for DKPPs.
\end{abstract}

\section{INTRODUCTION}
Random subset selection from a ground set is often encountered in problems related to statistics and machine learning.
One common problem is modeling the purchasing behavior of customers;
buying items from a ground set of products can be seen as the occurrence of a random subset.
To go beyond the independent selection of items, we should consider a probabilistic model on
the powerset of the ground set.
Positive and negative dependence are fundamental concepts that characterize probabilistic models of random subsets.
If the model has positive dependence, an attractive force is present and similar items tend to appear in a random subset.
Conversely, if the model has negative dependence, a repulsive force occurs and random subsets are likely to contain diverse items.

\begin{figure}[t]
    \centering
    \begin{minipage}{0.49\columnwidth}
        \centering
        \includegraphics[width = 1.0\columnwidth]{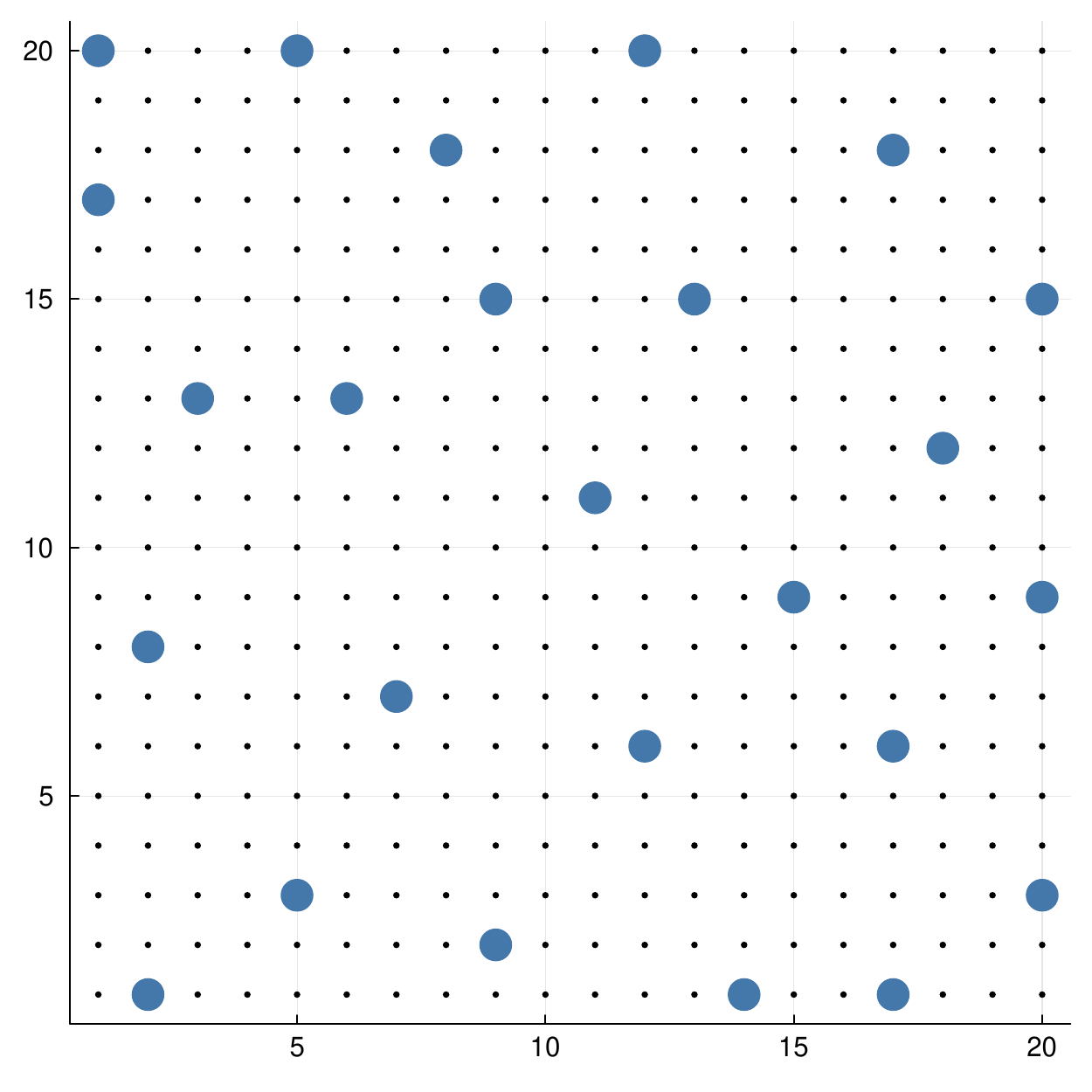}
        \subcaption{
            \texttt{SCATTERED}
        }
        \label{subfig:scattered}
    \end{minipage}
    \begin{minipage}{0.49\columnwidth}
        \centering
        \includegraphics[width = 1.0\columnwidth]{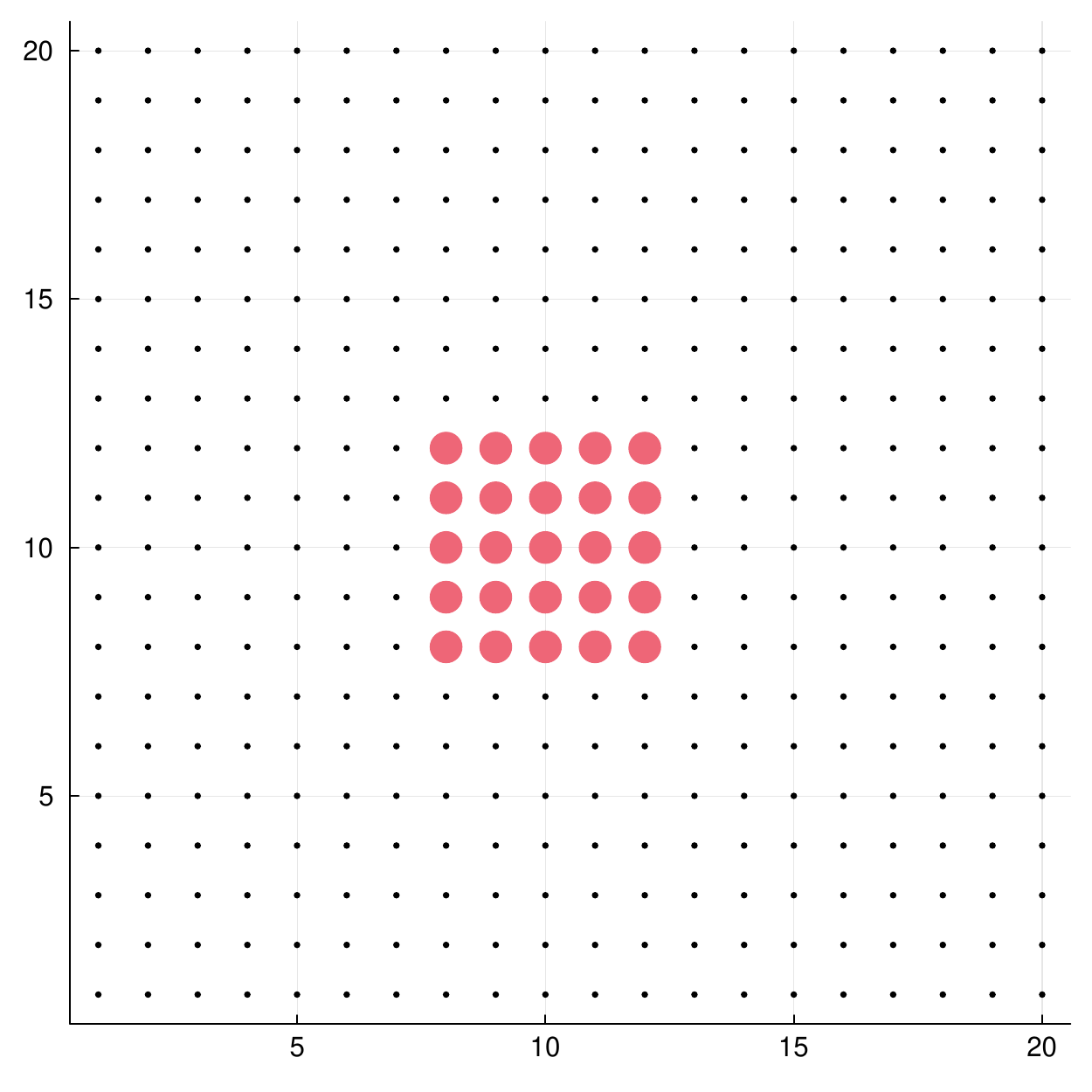}
        \subcaption{
            \texttt{GATHERED}
        }
        \label{fig:gathered}
    \end{minipage}
    \\
    \centering
    \includegraphics[width = 0.8\columnwidth]{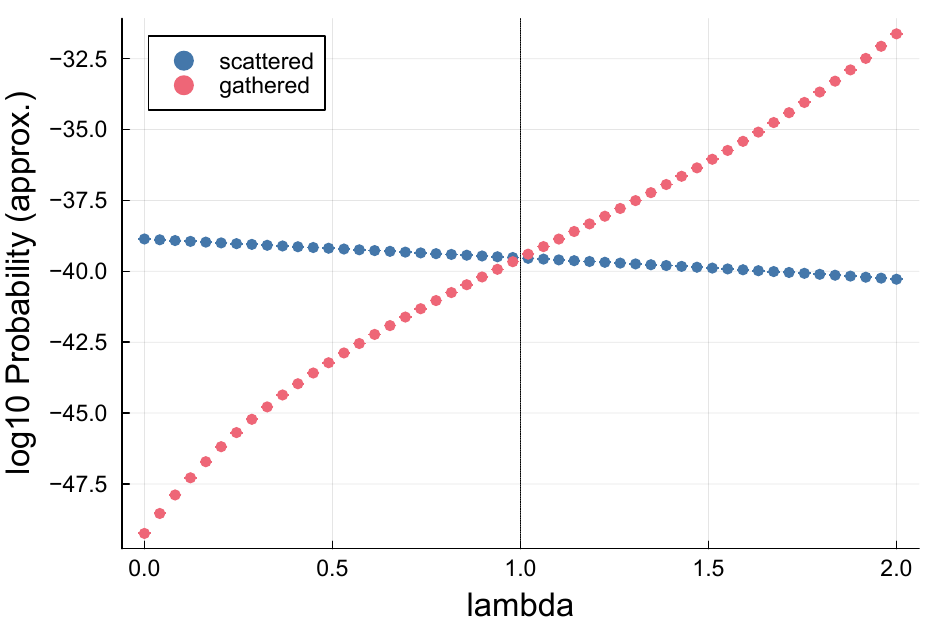}
    \caption{
        Two examples of subsets on the grid and their
        conditional probabilities $\log_{10} P(\mathcal{A} ~|~ \lvert \mathcal{A} \rvert = k)$
        with slight change in function that determines DKPPs.
        See Section \ref{sec:experiments} for more details.
    }
    \label{fig:cond_probs}
\end{figure}

Log-supermodularity and log-submodularity are representative characterizations of positive and negative dependence \citep{fortuin1971, pemantle2000, borcea2009}.
For example, ferromagnetic Ising models and determinantal point processes (DPPs) \citep{macchi1975, borodin2005}
are log-supermodular and log-submodular probabilistic models, respectively.
Since the review article by \citet{kulesza2012},
DPPs have gained increasing attention in the machine learning community owing to their diversity-promoting property.

One of the reasons that makes DPPs popular is their parameterization by a kernel matrix,
which consists of pairwise similarities of items.
Since kernel matrices can be constructed from any features of items,
this parameterization is suitable for various machine learning and statistical problems.
Indeed, DPPs are now applied to widespread problems including
image search \citep{kulesza2011}, document summarization \citep{gillenwater2012a, dupuy2018}, recommender systems \citep{wilhelm2018},
randomized numerical linear algebra \citep{derezinski2020a, derezinski2021},
experimental design \citep{derezinski2020, derezinski2022},
and counterfactual explanation \citep{mothilal2020}.
On the other hand, diversity is not a general prescription for such problems.
Recommending similar glasses to a customer who previously bought certain glasses may not be effective,
but it may be different with socks.
Otherwise, we may want to adjust the strength of the repulsive force.

In this paper, we propose a new family of distributions for random subsets,
discrete kernel point processes (DKPPs), by generalizing DPPs.
DKPPs are determined by a kernel matrix and a scalar function on $\mathbb{R}_{\geq 0}$.
Similarly to DPPs, the kernel matrix provides the pairwise similarity of items.
Furthermore, the presence of the scalar function enables us to control positive and negative dependence.
Figure \ref{fig:cond_probs} shows the actual behavior of DKPPs (see Section \ref{sec:experiments} for details).
Attractive and repulsive forces are flexibly determined by changing a parameter $\lambda$.
We also develop computational methods for evaluating marginal and conditional probabilities
and learning DKPPs for practical use.
Furthermore, we conduct an experiment on repulsive and attractive subset acquisition to demonstrate the applicability of the DKPPs.


\subsection{Related Works}
There are a few previous works that aim to develop a family of distributions
to control positive and negative dependence.
The closest ones to our study are immanantal point processes \citep{diaconis2000} and $\alpha$-DPPs \citep{vere-jones1997},
in which the determinants in DPPs are generalized to the immanant and $\alpha$-determinant, respectively.
Although both immanantal point processes and $\alpha$-DPPs include
permanental point processes (i.e., behave like bosons with positive dependence \citep{macchi1975}),
the computational issue is not resolved;
even computing the permanent is {\#}P-hard \citep{valiant1979}.
For negative dependence only, \citet{mariet2018} considered the exponentiated strong Rayleigh distributions that can control the negative dependence and developed an approximate sampler for them.

\citet{iyer2015a} introduced the submodular and log-submodular point processes as
the family of distributions on a powerset.
Although they mainly discussed how to handle these distributions,
such as probabilistic operations and parameter learning,
they did not focus on the control of positive and negative dependence.


\section{PRELIMINARY}
\subsection{Supermodular and Submodular Functions}
Let $\mathcal{Y} = \{1, \ldots, N\}$ be a finite ground set with $N$ items.
A set function $f: 2^\mathcal{Y} \to \mathbb{R}$ is said to be submodular if
$
    f(\mathcal{S}) + f(\mathcal{T}) \geq f(\mathcal{S}\cup\mathcal{T}) + f(\mathcal{S}\cap\mathcal{T})
$
holds for every $\mathcal{S}, \mathcal{T} \subseteq \mathcal{Y}$.
A set function $f$ is said to be supermodular if $-f$ is submodular
and modular if $f$ satisfies both submodularity and supermodularity.

For a continuous model with a probability density function $p$, the log-concavity of $p$ is considered rather than
concavity \citep{an1996, borzadaran2011}.
In a similar spirit, we will focus on log-supermodular and log-submodular probability functions later;
the set function $f$ has log-supermodularity if $\log f$ is supermodular,
and the same applies to log-submodularity.
If $f$ is log-submodular and log-supermodular, $f$ is said to be a log-modular function.

The multilinear extension of the set function $f$ is one approach
to extend the discrete $f$ to a continuous function and was first introduced for submodular maximization problems \citep{calinescu2007, chekuri2014}.
\begin{definition}
    Let $f:2^\mathcal{Y} \to \mathbb{R}$ be a set function.
    The multilinear extension of $f$, denoted by $\tilde{f}: [0, 1]^N \to \mathbb{R}$, is defined by
    \begin{align}
        \label{eq:multilinear}
        \tilde{f}(\bm{q}) \coloneqq \sum_{\mathcal{A} \subseteq \mathcal{Y}} f(\mathcal{A})
        \prod_{i \in \mathcal{A}} q_i
        \prod_{i \notin \mathcal{A}} (1 - q_i),
    \end{align}
    where $\bm{q} \in [0, 1]^N$.
\end{definition}
Suppose that $Q_{\bm{q}}$ is the probability function of $N$ independent Bernoulli trials with the parameters $\{q_i\}$:
\begin{align}
    \label{eq:variational_distribution}
    Q_{\bm{q}}(\bm{\xi})
    \coloneqq \prod^N_{i=1} Q_{q_i}(\xi_i)
    \coloneqq \prod^N_{i=1} \mathrm{Bernoulli}(\xi_i; q_i),
\end{align}
for $\bm{\xi} \in \{0,1\}^{N}$.
Then, the multilinear extension \eqref{eq:multilinear} can also be written as
\begin{align}
    \label{eq:multilinear_expectation}
    \tilde{f}(\bm{q}) = \mathbb{E}_{\bm{\xi} \sim Q_{\bm{q}}}[f(\mathcal{A}_{\bm{\xi}})],
\end{align}
where $\mathcal{A}_{\bm{\xi}} \coloneqq \{i \in \mathcal{Y} : \xi_i = 1\}$.

\subsection{Positive and Negative Dependence}
Throughout this paper, we consider probability distributions on $2^\mathcal{Y}$,
which assign occurrence probabilities to every subset $\mathcal{A} \subseteq \mathcal{Y}$.
Such distributions are equivalent to those of $N$-dimensional random binary vectors and discrete point processes on $\mathcal{Y}$.

Positive and negative dependence are essential concepts that characterize distributions on $2^\mathcal{Y}$.
Generally, the probability function $P:2^\mathcal{Y} \to [0, 1]$ exhibits positive dependence
when a random subset $\mathcal{A} \sim P$ tends to contain similar elements.
For example, consider a ferromagnetic Ising model.
In this model, closely located spins tend to align in the same direction,
with similarity determined by their distance on the grid.
This scenario illustrates positive dependence.
Conversely, an antiferromagnetic Ising model leads to a random subset
with diverse elements, as adjacent spins tend to have opposite directions.
This is an example of negative dependence.


Log-supermodularity and log-submodularity of the probability function $P$ are representative characterizations
of positive and negative dependence, respectively.
For an intuitive understanding, we consider two singletons $\mathcal{S} = \{i\}$ and $\mathcal{T} = \{j\}$ such that $i \neq j$.
If $P$ is log-submodular, the inequality $P(\{i, j\}) \leq Z P(\{i\}) P(\{j\}) \propto P(\{i\}) P(\{j\})$ holds, where $Z$ is the normalizing constant of $P$.
The co-occurrence of $i$ and $j$ is upper bounded by a constant multiple of the probability that each appears alone, which implies the negative dependence.
The same applies to the positive dependence.
The log-supermodularity of the probability function $P$ is a sufficiently strong condition.
Indeed, the Fortuin–-Kasteleyn–-Ginibre (FKG) inequality states that log-supermodularity leads to other major characterizations \citep{fortuin1971}.
On the other hand, the log-submodularity of $P$ is a relatively weak condition for the negative dependence
induced by other characterizations but not vice versa \citep{pemantle2000, borcea2009}.
However, log-submodular distributions are often used to model diverse random subsets \citep{iyer2015a, tschiatschek2016, djolonga2018} because of their tractability.
In Subsection \ref{subsec:posneg_dkpp}, we also use log-supermodularity and log-submodularity
to control positive and negative dependence.

\subsection{Operator Monotonicity and Convexity}
Let $\phi: \mathbb{R} \to \mathbb{R}$ be a function.
Given the $N \times N$ Hermitian matrix $\bm{X}$ that
can be diagonalized as $\bm{X} = \bm{U} \mathrm{diag}(\lambda_1, \ldots, \lambda_N) \bm{U}^\ast$,
we also regard $\phi$ as a matrix operator such that $\phi: \bm{X} \mapsto \bm{U} \mathrm{diag}(\phi(\lambda_1), \ldots, \phi(\lambda_N)) \bm{U}^\ast$.
Note that a matrix logarithm and a matrix exponential are special cases of $\phi$.
Now, we can define the monotonicity of $\phi$ as an operator.

\begin{definition}\label{def:operator_monotone}
    The function $\phi$ is said to be operator monotone if $\bm{A} \succeq \bm{B}$ implies $\phi(\bm{A}) \succeq \phi(\bm{B})$
    for all $n \in \mathbb{N}$ and for all the $n \times n$ Hermitian matrices $\bm{A}, \bm{B}$.
    The function $\phi$ is operator antitone if $-\phi$ is operator monotone.
\end{definition}
Here, we use $\succeq$ for positive semidefinite ordering.
Since ordinary monotonicity is the special case of $n = 1$ in Definition \ref{def:operator_monotone},
operator monotonicity is much stronger than ordinary monotonicity.
Indeed, $\phi(x) = x^p$ is operator monotone on $[0, \infty)$ for $p \in [0, 1]$, but not for $p = 2$
\citep[see][Chapter 5]{bhatia1997}.

We can also define operator convexity and concavity.
\begin{definition}\label{def:operator_convex}
    The function $\phi$ is said to be operator convex if
    \begin{align}
        t\phi(\bm{A}) + (1 - t) \phi(\bm{B}) \succeq \phi(t\bm{A} + (1-t)\bm{B}),~ t \in [0, 1]
    \end{align}
    holds for all $n \in \mathbb{N}$ and for all the Hermitian matrices $\bm{A}, \bm{B}$.
    A function $\phi$ is operator concave if $-\phi$ is operator convex.
\end{definition}
Let $\phi'$ be a function and $\phi$ be a primitive of $\phi'$.
As with ordinary monotone functions, $\phi$ is operator convex if $\phi'$ is operator monotone.
We denote $\bm{X}[\mathcal{A}] \coloneqq (X_{ij})_{i, j \in \mathcal{A}}$ for $\mathcal{A} \subseteq \mathcal{Y}$.
\citet{friedland2013} obtained an interesting result that bridges operator monotonicity/antitonicity and supermodularity/submodularity.

\begin{theorem}{\citep{friedland2013}}\label{thm:friedland}
    Suppose that $\phi$ is a real continuous function on the interval $\mathcal{E} \subset \mathbb{R}$
    and that $\phi$ is a primitive of the operator monotone function $\phi'$ on $\mathcal{E}$.
    Then, for every $N \times N$ Hermitian matrix $\bm{X}$ whose eigenvalues are all in $\mathcal{E}$, the set function
    \begin{align}
        \label{eq:friedland}
        f: 2^\mathcal{Y} \to \mathbb{R}: \mathcal{A} \mapsto \tr \phi(\bm{X}[\mathcal{A}])
    \end{align}
    is supermodular.
    If $\phi$ is a primitive of the operator antitone $\phi'$, the set function $f$ is submodular.
\end{theorem}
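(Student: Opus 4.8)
The plan is to exploit the linearity of the desired conclusion in $\phi$ together with Löwner's integral representation of operator monotone functions, reducing the claim to a handful of elementary building blocks. The key observation is that supermodularity of $\mathcal{A}\mapsto \tr\phi(\bm{X}[\mathcal{A}])$ is preserved under nonnegative linear combinations of $\phi$, under addition of affine terms (which contribute only modular pieces), and under integration against a positive measure. Indeed, the defining difference $f(\mathcal{S})+f(\mathcal{T})-f(\mathcal{S}\cup\mathcal{T})-f(\mathcal{S}\cap\mathcal{T})$ equals a finite sum of evaluations of $\phi$ at eigenvalues, hence is a finite linear functional of $\phi$, so Fubini lets us pass it through an integral. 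It therefore suffices to verify supermodularity for the generators of the cone of operator monotone $\phi'$.

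First I would record the generators. By Löwner's theorem, an operator monotone $\phi'$ on $\mathcal{E}$ is a Pick function admitting a representation $\phi'(x) = \alpha + \beta x + \int \left(\frac{1}{\lambda - x} - \frac{\lambda}{1+\lambda^2}\right)d\mu(\lambda)$ with $\beta \ge 0$ and a positive measure $\mu$ supported outside $\mathcal{E}$. Integrating term by term, $\phi$ is, up to an additive constant (which contributes the modular term $\mathrm{const}\cdot|\mathcal{A}|$), an affine function, a multiple $\tfrac{\beta}{2}x^2$, and an integral of the kernels $x\mapsto -\log|\lambda - x|$. The affine part makes $\tr\phi(\bm{X}[\mathcal{A}])$ modular. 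For the quadratic part, $\tfrac{\beta}{2}\tr(\bm{X}[\mathcal{A}]^2)=\tfrac{\beta}{2}\sum_{a,b\in\mathcal{A}}|X_{ab}|^2$ with $\beta\ge 0$ is manifestly supermodular, since adjoining an index $i$ changes the value by $\tfrac{\beta}{2}(|X_{ii}|^2+2\sum_{b\in\mathcal{A}}|X_{ib}|^2)$, a marginal gain that increases monotonically in $\mathcal{A}$.

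The crux is the Pick kernel. For $\lambda$ outside $\mathcal{E}$, Cauchy interlacing places the eigenvalues of every $\bm{X}[\mathcal{A}]$ strictly on one side of $\lambda$, so $\lambda\bm{I}-\bm{X}[\mathcal{A}]$ (or its negative) is positive definite, and the primitive gives $\tr\phi(\bm{X}[\mathcal{A}]) = -\log\det(\lambda\bm{I}-\bm{X}[\mathcal{A}])$ up to a modular term. Here I would invoke the Hadamard--Fischer (Koteljanskii) inequality: for a positive definite Hermitian $\bm{M}$ and arbitrary index sets, $\det\bm{M}[\mathcal{S}\cup\mathcal{T}]\,\det\bm{M}[\mathcal{S}\cap\mathcal{T}] \le \det\bm{M}[\mathcal{S}]\,\det\bm{M}[\mathcal{T}]$; taking $-\log$ shows $\mathcal{A}\mapsto -\log\det\bm{M}[\mathcal{A}]$ is supermodular. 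Applying this with $\bm{M}=\lambda\bm{I}-\bm{X}$ (so that $\bm{M}[\mathcal{A}]=\lambda\bm{I}-\bm{X}[\mathcal{A}]$) settles the generator, and assembling the pieces through the positive measure $\mu$ yields supermodularity of $f$. The submodular case is then immediate by symmetry: if $\phi'$ is operator antitone, then $-\phi'$ is operator monotone with primitive $-\phi$, so $-f$ is supermodular and $f$ is submodular.

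I expect the main obstacle to be making the first two paragraphs' bookkeeping rigorous: pinning down the exact Pick representation valid on the given interval $\mathcal{E}$ (in particular that $\mu$ is supported off $\mathcal{E}$, which is precisely what guarantees definiteness of $\lambda\bm{I}-\bm{X}[\mathcal{A}]$) and justifying the interchange of the discrete difference with integration against $\mu$. An alternative, more computational route avoids the representation entirely: reduce to the two-element local inequality, write the mixed second difference as $\int_{[0,1]^2}\partial_s\partial_t\,\tr\phi(\bm{Y}(s,t))\,ds\,dt$ along a path that continuously switches on the couplings of the two extra indices, and expand $\partial_s\partial_t$ using the Daleckii--Krein divided-difference formula for the Fréchet derivative of $\phi'$. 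There the nonnegativity of the integrand hinges on positive semidefiniteness of the Löwner matrix of $\phi'$, which is again Löwner's characterization of operator monotonicity. The representation route is cleaner, but either way operator monotonicity enters through exactly this positivity.
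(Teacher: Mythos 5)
The paper does not actually prove Theorem \ref{thm:friedland}: it is imported verbatim from \citet{friedland2013}, so there is no internal proof to compare against, and your proposal must be judged against the original source's argument. Your proof is correct and is essentially that argument: the L\"{o}wner--Nevanlinna representation of the operator monotone derivative $\phi'$ reduces the claim, modulo modular (affine) contributions and the directly checkable supermodularity of the quadratic term $\tfrac{\beta}{2}\tr(\bm{X}[\mathcal{A}]^2)$ with $\beta \ge 0$, to the supermodularity of $\mathcal{A}\mapsto -\log\lvert\det(\lambda\bm{I}-\bm{X}[\mathcal{A}])\rvert$ for $\lambda$ outside $\mathcal{E}$, which is exactly Koteljanskii's (Hadamard--Fischer) inequality; positivity of $\mu$, linearity of the four-point supermodularity defect in $\phi$, and negation for the antitone case then close the proof. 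The technical points you flag---that $\mu$ charges nothing in $\mathcal{E}$ (which, via Cauchy interlacing, gives definiteness of $\lambda\bm{I}-\bm{X}[\mathcal{A}]$), convergence of the term-wise primitive, and the Fubini exchange of the finite difference with $\int d\mu$---are genuine but routine, so I see no gap of substance.
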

Conceptually, Theorem \ref{thm:friedland} says that the operator convexity/concavity of $\phi$
corresponds to the supermodularity/submodularity of $f: \mathcal{A} \to \bm{X}[\mathcal{A}]$\footnote{This is not always true because there are operator convex functions that cannot be obtained as primitives of operator motonone functions, as stated by \citet{friedland2013}.}.
Note that the submodularity of the set function $\mathcal{A} \mapsto \log\det(\bm{X}[\mathcal{A}])$ is well known \citep{bach2013},
and it is a special case of \eqref{eq:friedland} with $\phi = \log$.
Another example from Theorem \ref{thm:friedland} is $\phi: x \mapsto x^p$;
it leads to a submodular $f$ when $p \in [0, 1]$ and a supermodular $f$ when $p \in [1, 2]$.

\section{DISCRETE KERNEL POINT PROCESSES}
\begin{definition}\label{def:dkpp}
    Let $\phi: \mathbb{R}_{\geq 0} \to \mathbb{R}$ be a continuous function
    and $\bm{L}$ be an $N \times N$ positive semidefinite Hermitian matrix.
    The probability function $P_\phi(\cdot; \bm{L}): 2^\mathcal{Y} \to [0, 1]$ is called a discrete kernel point process (DKPP)
    if it is given by
    \begin{align}
        \label{eq:dkpp}
        &P_\phi(\mathcal{A}; \bm{L}) = \frac{1}{Z_\phi(\bm{L})} \exp \left ( \tr \phi(\bm{L}[\mathcal{A}]) \right )
        \eqqcolon \frac{\tilde{P}_\phi(\mathcal{A}; \bm{L})}{Z_\phi(\bm{L})},\\
        &\quad \mbox{where}\quad Z_\phi(\bm{L}) \coloneqq \sum_{\mathcal{A} \subseteq \mathcal{Y}} \exp \left ( \tr \phi(\bm{L}[\mathcal{A}]) \right )
    \end{align}
    for every $\mathcal{A} \subseteq \mathcal{Y}$.
\end{definition}

Let $\mathcal{C}(\mathbb{R}_{\geq 0}, \mathbb{R})$ denote the set of all functions from $\mathbb{R}_{\geq 0}$ to $\mathbb{R}$.
The set
\begin{align}
    \mathcal{F}_{\mathrm{DKPP}} \coloneqq
    \{P_\phi(\cdot; \bm{L}):~ \phi \in \mathcal{C}(\mathbb{R}_{\geq 0}, \mathbb{R}), \bm{L} \in \mathbb{H}^N_{\geq 0}\}
\end{align}
is a family of specific distributions on $2^\mathcal{Y}$.
We readily see that DPPs belong to $\mathcal{F}_{\mathrm{DKPP}}$.
\begin{proposition}\label{prop:dkpp_dpp}
    If $\phi = \log$, a DKPP $P_\phi(\cdot; \bm{L})$ is a DPP.
\end{proposition}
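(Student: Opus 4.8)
The plan is to reduce the DKPP density with $\phi = \log$ to the standard $L$-ensemble form of a DPP, namely $P(\mathcal{A}) = \det(\bm{L}[\mathcal{A}]) / \det(\bm{L} + \bm{I})$. The entire argument rests on the spectral identity $\tr \log \bm{M} = \log \det \bm{M}$, valid for every positive definite Hermitian $\bm{M}$, together with the classical minor-summation identity for the normalizer.

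First I would unpack the unnormalized weight $\tilde{P}_\phi(\mathcal{A}; \bm{L})$. Diagonalizing $\bm{L}[\mathcal{A}] = \bm{U}\, \mathrm{diag}(\mu_1, \ldots, \mu_k)\, \bm{U}^\ast$ with eigenvalues $\mu_i > 0$, the matrix-operator definition of $\phi = \log$ gives $\tr \log \bm{L}[\mathcal{A}] = \sum_i \log \mu_i = \log \prod_i \mu_i = \log \det \bm{L}[\mathcal{A}]$. Exponentiating, $\tilde{P}_\phi(\mathcal{A}; \bm{L}) = \exp(\tr \log \bm{L}[\mathcal{A}]) = \det \bm{L}[\mathcal{A}]$. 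Second, I would evaluate the normalizer using $\sum_{\mathcal{A} \subseteq \mathcal{Y}} \det \bm{L}[\mathcal{A}] = \det(\bm{L} + \bm{I})$, so that $Z_\phi(\bm{L}) = \det(\bm{L} + \bm{I})$. Substituting both pieces into $P_\phi = \tilde{P}_\phi / Z_\phi$ recovers exactly the DPP with kernel $\bm{L}$, completing the identification.

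The main obstacle is the boundary of the domain: $\bm{L}$ is only assumed positive semidefinite, so a principal submatrix $\bm{L}[\mathcal{A}]$ may be singular, and then $\log$ is undefined at the eigenvalue $0$. I would dispatch this by adopting the natural convention $\log 0 = -\infty$, under which $\tr \log \bm{L}[\mathcal{A}] = -\infty$ and hence $\exp(\tr \log \bm{L}[\mathcal{A}]) = 0 = \det \bm{L}[\mathcal{A}]$; this keeps the identity $\tilde{P}_\phi(\mathcal{A}; \bm{L}) = \det \bm{L}[\mathcal{A}]$ valid on all of $2^\mathcal{Y}$. Alternatively one can argue by continuity, replacing $\bm{L}$ by $\bm{L} + \epsilon \bm{I}$, applying the positive definite case, and letting $\epsilon \to 0^+$. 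Once this degenerate case is handled, the remaining steps are routine verifications of the two cited matrix identities.
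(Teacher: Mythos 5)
Your proof is correct and rests on the same key identity the paper uses, namely $\tr \log \bm{X} = \log \det \bm{X}$, so the approach is essentially identical. The extra steps you include---evaluating $Z_\phi(\bm{L}) = \det(\bm{L} + \bm{I})$ explicitly and handling singular principal submatrices via the convention $\log 0 = -\infty$ (or an $\epsilon$-perturbation)---are careful refinements the paper glosses over, but they do not change the route of the argument.
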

\begin{proof}
    In general, we have the identity $\tr \log \bm{X} = \log \det \bm{X}$ for every positive semidefinite $\bm{X}$.
    This leads to
    $
        P_{\phi}(\mathcal{A}; \bm{L}) \propto \exp(\tr\log\bm{L}[\mathcal{A}])
        = \exp(\log \det \bm{L}[\mathcal{A}]) = \det\bm{L}[\mathcal{A}].
    $
\end{proof}

One of the representative probabilistic models for a random binary vector $\bm{\xi} \in \{0, 1\}^N$
(equivalently random subsets) is the Boltzmann machine \citep{ackley1985}.
A fully visible Boltzmann machine with bias vectors $\bm{h} \in \mathbb{R}^N$ and symmetric connections $\bm{W}~ (\mathrm{diag}(\bm{W}) = \bm{0},~ W_{ij} = W_{ji})$ is modeled as
\begin{align}
    \label{eq:boltzmann_machine}
    P_{\mathrm{BM}}(\bm{\xi}; \bm{h}, \bm{W})
    \propto \exp \left ( \sum^N_{i = 1} h_i \xi_i + \sum^N_{i, j = 1} W_{ij} \xi_i \xi_j \right ).~~
\end{align}
In the Boltzmann machine \eqref{eq:boltzmann_machine}, two-body interactions of $\bm{\xi}$ are captured by the quadratic term.
We can see that a DKPP becomes a Boltzmann machine if $\phi$ is a quadratic function.
\begin{proposition}\label{prop:dkpp_bm}
    Suppose that $\phi(x) = ax^2 + bx + c$. Then, the DKPP $P_\phi(\cdot; \bm{L})$ is equivalent to
    the Boltzmann machine \eqref{eq:boltzmann_machine} with the parameters
    \begin{align}
        \label{eq:l_to_w}
        W_{ij} &= \left \{~
        \begin{aligned}
        & 0 & \quad (i = j), \\
        & a \lvert L_{ij} \rvert^2 & \quad (i \neq j),
        \end{aligned}
        \right .\\
        \label{eq:l_to_h}
        h_i &= aL^2_{ii} + bL_{ii} + c,
    \end{align}
    for $i, j = 1, \ldots, N$.
\end{proposition}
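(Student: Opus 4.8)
The plan is to show that the exponents of the two distributions coincide as functions of the indicator vector $\bm{\xi} \in \{0,1\}^N$ associated with $\mathcal{A}$ (so that $\xi_i = 1$ iff $i \in \mathcal{A}$, i.e.\ $\mathcal{A} = \mathcal{A}_{\bm{\xi}}$). Since both $P_\phi(\cdot; \bm{L})$ and $P_{\mathrm{BM}}(\cdot; \bm{h}, \bm{W})$ are defined, up to their normalizing constants, as $\exp(\cdot)$ of such a function, equality of the exponents will force equality of the normalized distributions. First I would use the fact that, for a polynomial $\phi(x) = ax^2 + bx + c$, the matrix-operator action introduced in Section~2.3 reduces to ordinary polynomial evaluation, i.e.\ $\phi(\bm{M}) = a\bm{M}^2 + b\bm{M} + c\bm{I}$ for any Hermitian $\bm{M}$; this is immediate from applying $\phi$ to the eigenvalues and recombining through the spectral decomposition. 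Applying this with $\bm{M} = \bm{L}[\mathcal{A}]$ and taking the trace gives, by linearity, $\tr\phi(\bm{L}[\mathcal{A}]) = a\tr\big((\bm{L}[\mathcal{A}])^2\big) + b\tr(\bm{L}[\mathcal{A}]) + c\lvert\mathcal{A}\rvert$.

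Next I would evaluate the three traces in terms of $\bm{\xi}$. The linear and constant terms are immediate: $\tr(\bm{L}[\mathcal{A}]) = \sum_i L_{ii}\xi_i$ and $\lvert\mathcal{A}\rvert = \sum_i \xi_i$. For the quadratic term I would expand $\tr\big((\bm{L}[\mathcal{A}])^2\big) = \sum_{i,j\in\mathcal{A}} L_{ij}L_{ji}$ and invoke the Hermitian property $L_{ji} = \overline{L_{ij}}$ to rewrite it as $\sum_{i,j} \lvert L_{ij}\rvert^2 \xi_i\xi_j$. Splitting off the diagonal $i=j$ and using the idempotency $\xi_i^2 = \xi_i$ (together with $\lvert L_{ii}\rvert^2 = L_{ii}^2$, as $L_{ii}$ is real) folds the diagonal quadratic contribution into the linear term. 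Collecting everything yields $\tr\phi(\bm{L}[\mathcal{A}]) = a\sum_{i\neq j}\lvert L_{ij}\rvert^2\xi_i\xi_j + \sum_i\big(aL_{ii}^2 + bL_{ii} + c\big)\xi_i$.

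Finally I would match this against the Boltzmann exponent $\sum_i h_i\xi_i + \sum_{i,j} W_{ij}\xi_i\xi_j$ with $\mathrm{diag}(\bm{W}) = \bm{0}$, reading off $W_{ij} = a\lvert L_{ij}\rvert^2$ for $i \neq j$ and $h_i = aL_{ii}^2 + bL_{ii} + c$, which are exactly the claimed parameters \eqref{eq:l_to_w}--\eqref{eq:l_to_h}. Since the two exponents agree for every $\bm{\xi}$, the unnormalized weights agree and hence so do the normalized distributions. The argument is essentially bookkeeping, so I expect no deep obstacle; the only points requiring care are justifying that the operator-valued $\phi$ coincides with polynomial evaluation in the first step, and correctly re-attributing the diagonal of the quadratic term to the bias via $\xi_i^2 = \xi_i$, which is precisely what produces the $aL_{ii}^2$ contribution in $h_i$ and the vanishing diagonal of $\bm{W}$.
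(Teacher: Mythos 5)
Your proposal is correct and follows essentially the same route as the paper's proof: reduce the operator $\phi$ to ordinary polynomial evaluation, expand $\tr\phi(\bm{L}[\mathcal{A}])$ into quadratic, linear, and constant traces, rewrite everything in terms of the indicator vector, split off the diagonal of the quadratic term using idempotency of binary variables, and match coefficients against the Boltzmann machine exponent. If anything, you are slightly more explicit than the paper on two minor points it leaves implicit — that the operator-valued $\phi$ agrees with polynomial evaluation via the spectral decomposition, and that the Hermitian identity $L_{ji} = \overline{L_{ij}}$ is what turns $\tr\bigl((\bm{L}[\mathcal{A}])^2\bigr)$ into $\sum_{i,j}\lvert L_{ij}\rvert^2 \xi_i \xi_j$ (the paper routes this through the Frobenius norm).
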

The proof of Proposition \ref{prop:dkpp_bm} is in Appendix \ref{app:proofs}.
Note that although any values are allowed for $W_{ij}$ in Boltzmann machines,
the relation \eqref{eq:l_to_w} implies that a DKPP with quadratic $\phi$ can only represent 
either all non-negative or all non-positive $\{W_{ij}\}_{i, j}$.

When $a = 0$, the connections $W_{ij}$ vanish in \eqref{eq:l_to_w},
meaning all elements of $\bm{\xi}$ become independent.
Formally, the following corollary holds as a special case of Proposition \ref{prop:dkpp_dpp}.
\begin{corollary}\label{prop:dkpp_bernoulli}
    Suppose that $\phi(x) = bx + c$. Then, the DKPP $P_\phi(\cdot; \bm{L})$ is equivalent to
    $N$ independent Bernoulli trials where the probability of success of the $i$-th trial is given by
    $p_i = \sigma(b L_{ii} + c)$, with $\sigma: \mathbb{R} \to (0, 1)$ being the logistic sigmoid function.
\end{corollary}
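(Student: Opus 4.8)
The plan is to reduce everything to the observation that an affine $\phi$ collapses the trace term into a sum of per-element contributions, so that the unnormalized weight $\tilde{P}_\phi(\mathcal{A};\bm{L})$ factorizes across the elements of $\mathcal{A}$, which is exactly what independence means.

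First I would invoke the functional calculus that defines $\phi$ as a matrix operator. Since $\phi(x) = bx + c$ is affine, for any Hermitian $\bm{X} = \bm{U}\mathrm{diag}(\lambda_1,\ldots,\lambda_n)\bm{U}^\ast$ applying $\phi$ eigenvalue-wise returns $\bm{U}\mathrm{diag}(b\lambda_1+c,\ldots,b\lambda_n+c)\bm{U}^\ast = b\bm{X} + c\bm{I}$. Applying this to the principal submatrix $\bm{L}[\mathcal{A}]$ and taking the trace gives $\tr\phi(\bm{L}[\mathcal{A}]) = b\,\tr\bm{L}[\mathcal{A}] + c\lvert\mathcal{A}\rvert = \sum_{i\in\mathcal{A}}(bL_{ii}+c)$, where only the diagonal entries enter because $\tr\bm{L}[\mathcal{A}] = \sum_{i\in\mathcal{A}}L_{ii}$.

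Substituting into the DKPP definition \eqref{eq:dkpp} then yields $\tilde{P}_\phi(\mathcal{A};\bm{L}) = \prod_{i\in\mathcal{A}} e^{bL_{ii}+c}$. The normalizing constant factorizes by distributing the sum over subsets into a product, $Z_\phi(\bm{L}) = \sum_{\mathcal{A}\subseteq\mathcal{Y}}\prod_{i\in\mathcal{A}} e^{bL_{ii}+c} = \prod_{i=1}^N\bigl(1 + e^{bL_{ii}+c}\bigr)$. Dividing and splitting the denominator over $i\in\mathcal{A}$ and $i\notin\mathcal{A}$ produces $P_\phi(\mathcal{A};\bm{L}) = \prod_{i\in\mathcal{A}} p_i \prod_{i\notin\mathcal{A}}(1-p_i)$ with $p_i = e^{bL_{ii}+c}/(1+e^{bL_{ii}+c}) = \sigma(bL_{ii}+c)$, which is precisely the law of $N$ independent Bernoulli trials. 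Equivalently, this follows at once from Proposition \ref{prop:dkpp_bm} with $a = 0$: the connection weights $W_{ij}$ vanish, the Boltzmann machine \eqref{eq:boltzmann_machine} degenerates into a product of independent single-variable factors with bias $h_i = bL_{ii}+c$, and the marginal success probability of the $i$-th variable is $\sigma(h_i)$.

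There is no substantive obstacle here; the argument is a direct computation. The only points requiring care are the functional-calculus identity $\phi(\bm{X}) = b\bm{X}+c\bm{I}$ for affine $\phi$, which is what lets $\tr\phi$ read off only the diagonal of the submatrix, and the sigmoid normalization identity $\sigma(t)/(1-\sigma(t)) = e^t$, which converts the product form of the unnormalized weight into the standard independent-Bernoulli factorization.
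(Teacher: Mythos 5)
Your proof is correct. Note that the paper never spells out a proof of Corollary \ref{prop:dkpp_bernoulli}: it presents the statement as an immediate special case of an earlier result (the surrounding discussion of $a=0$ in \eqref{eq:l_to_w} indicates the intended reference is Proposition \ref{prop:dkpp_bm}, even though the text cites Proposition \ref{prop:dkpp_dpp}), leaving to the reader the check that a connection-free Boltzmann machine \eqref{eq:boltzmann_machine} is a product of independent Bernoulli variables with success probabilities $\sigma(h_i)$. Your main argument supplies exactly that omitted computation and does so without passing through Proposition \ref{prop:dkpp_bm}: the functional-calculus identity $\phi(\bm{X}) = b\bm{X} + c\bm{I}$ for affine $\phi$ gives $\tr\phi(\bm{L}[\mathcal{A}]) = \sum_{i\in\mathcal{A}}(bL_{ii}+c)$, the unnormalized weights then factorize as $\prod_{i\in\mathcal{A}} e^{bL_{ii}+c}$, the partition function splits as $Z_\phi(\bm{L}) = \prod_{i=1}^N \bigl(1+e^{bL_{ii}+c}\bigr)$, and the ratio is the Bernoulli product with $p_i = \sigma(bL_{ii}+c)$. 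What the paper's route buys is brevity (one line, given Proposition \ref{prop:dkpp_bm}); what your route buys is self-containedness, in particular the explicit normalization step --- equivalently the observation that the affine-$\phi$ DKPP is a DPP with diagonal kernel $\mathrm{diag}(e^{bL_{ii}+c})$ --- which is the actual substance of the claimed sigmoid form. Your closing remark deriving the statement from Proposition \ref{prop:dkpp_bm} with $a=0$ is precisely the paper's argument, so your proposal subsumes it.
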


Additionally, the following proposition also holds for affine $\phi$.
\begin{proposition}\label{prop:affine_logmod}
    A DKPP $P_\phi(\cdot; \bm{L})$ is log-modular if and only if $\phi$ is affine for all $x \in \mathbb{R}_{\geq 0}$.
\end{proposition}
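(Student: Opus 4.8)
The plan is to reduce the statement to a property of the set function $f_{\bm{L}}(\mathcal{A}) \coloneqq \tr \phi(\bm{L}[\mathcal{A}])$. Since $\log P_\phi(\mathcal{A}; \bm{L}) = f_{\bm{L}}(\mathcal{A}) - \log Z_\phi(\bm{L})$ and the additive constant $-\log Z_\phi(\bm{L})$ cancels in the modularity identity $g(\mathcal{S}) + g(\mathcal{T}) = g(\mathcal{S} \cup \mathcal{T}) + g(\mathcal{S} \cap \mathcal{T})$, the process $P_\phi(\cdot; \bm{L})$ is log-modular if and only if $f_{\bm{L}}$ is a modular set function. I read the claim as asserting that $f_{\bm{L}}$ is modular for every $\bm{L} \in \mathbb{H}^N_{\geq 0}$ exactly when $\phi$ is affine, and I would prove the two implications separately.

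For the direction $\phi$ affine $\Rightarrow$ log-modular, write $\phi(x) = bx + c$. Then $\phi(\bm{X}) = b\bm{X} + c\bm{I}$ for Hermitian $\bm{X}$, so $f_{\bm{L}}(\mathcal{A}) = \tr\phi(\bm{L}[\mathcal{A}]) = b \sum_{i \in \mathcal{A}} L_{ii} + c\lvert \mathcal{A} \rvert$, a weighted cardinality, which is manifestly modular. Equivalently, $\phi' \equiv b$ is both operator monotone and operator antitone, so Theorem \ref{thm:friedland} makes $f_{\bm{L}}$ simultaneously super- and submodular; this is also consistent with Corollary \ref{prop:dkpp_bernoulli}.

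The substantive direction is the converse, and here I would not need the full matrix: a two-element obstruction suffices. Fix two coordinates and take $\bm{L}$ to be the $2 \times 2$ Hermitian block $\left(\begin{smallmatrix} m & r \\ r & m \end{smallmatrix}\right)$, whose eigenvalues are $m \pm r$ and which is positive semidefinite precisely when $0 \leq r \leq m$. Evaluating the modularity identity on $\mathcal{S} = \{1\}$, $\mathcal{T} = \{2\}$ gives $f_{\bm{L}}(\{1\}) + f_{\bm{L}}(\{2\}) = f_{\bm{L}}(\{1,2\}) + f_{\bm{L}}(\emptyset)$, that is, $2\phi(m) = \phi(m+r) + \phi(m-r)$ for all $m \geq 0$ and $0 \leq r \leq m$. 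Setting $u = m+r$ and $v = m-r$, this is exactly Jensen's functional equation $\phi\bigl(\tfrac{u+v}{2}\bigr) = \tfrac{1}{2}\bigl(\phi(u) + \phi(v)\bigr)$ for all $u, v \geq 0$, i.e. $\phi$ is midpoint-affine on $\mathbb{R}_{\geq 0}$. Because $\phi$ is assumed continuous in the definition of a DKPP, the only solutions are the affine functions $\phi(x) = bx + c$, which completes the proof.

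The step I expect to be the crux is recognizing that the off-diagonal freedom of a single positive semidefinite $2 \times 2$ block already forces the midpoint-affine condition; once this is in hand, the continuity hypothesis built into the DKPP definition rules out the pathological (non-measurable) solutions of Jensen's equation. No realizability issue arises, since the exhibited family of blocks is positive semidefinite by construction, and embedding it into an $N \times N$ matrix supported on the two chosen coordinates leaves the identity unchanged; in particular the converse holds even under the weaker hypothesis that $P_\phi(\cdot; \bm{L})$ is log-modular for all $\bm{L}$.
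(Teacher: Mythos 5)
Your proposal is correct and follows essentially the same route as the paper: both directions reduce to the set function $\mathcal{A} \mapsto \tr\phi(\bm{L}[\mathcal{A}])$, and the converse is obtained by testing modularity on two singletons against a suitably chosen positive semidefinite $2\times 2$ block and solving the resulting classical functional equation using the continuity of $\phi$. The only difference is the choice of test block: the paper takes the rank-one block with $L_{ij} = \sqrt{L_{ii}L_{jj}}$ (eigenvalues $L_{ii}+L_{jj}$ and $0$), which yields Cauchy's additive equation $g(u)+g(v)=g(u+v)$ for $g(x) \coloneqq \phi(x)-\phi(0)$, whereas your equal-diagonal block yields Jensen's midpoint equation --- two equivalent endgames.
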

The proof of Proposition \ref{prop:affine_logmod} is shown in Appendix \ref{app:proofs}.

\subsection{Positive and Negative Dependence of DKPPs}\label{subsec:posneg_dkpp}
The behavior of a DKPP $P_\phi$ is fundamentally determined by $\phi$,
allowing control over positive and negative dependence by appropriately choosing $\phi$.
The following corollary follows directly from Theorem \ref{thm:friedland}.

\begin{corollary}\label{cor:dkpp_posneg}
    Let $\phi$ be a primitive of a function $\phi'$.
    A DKPP $P_\phi(\cdot; \bm{L})$ is log-supermodular if $\phi'$ is operator monotone
    and log-submodular if $\phi'$ is operator antitone.
\end{corollary}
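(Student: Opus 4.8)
The plan is to reduce the claim to Theorem~\ref{thm:friedland} by separating the normalizing constant from the log-probability. Taking logarithms in the definition~\eqref{eq:dkpp} gives
\begin{align}
    \log P_\phi(\mathcal{A}; \bm{L}) = \tr \phi(\bm{L}[\mathcal{A}]) - \log Z_\phi(\bm{L}),
\end{align}
where the second term does not depend on $\mathcal{A}$ and is therefore an additive constant. First I would observe that adding a constant to a set function preserves both submodularity and supermodularity: the constant contributes equally to both sides of the defining inequality $f(\mathcal{S}) + f(\mathcal{T}) \geq f(\mathcal{S}\cup\mathcal{T}) + f(\mathcal{S}\cap\mathcal{T})$ (and of its reverse), so it cancels. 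Hence $\log P_\phi(\cdot; \bm{L})$ is supermodular (resp.\ submodular) if and only if the set function $f: \mathcal{A} \mapsto \tr \phi(\bm{L}[\mathcal{A}])$ is, and it suffices to analyze $f$.

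Next I would verify that the hypotheses of Theorem~\ref{thm:friedland} are satisfied on the interval $\mathcal{E} = \mathbb{R}_{\geq 0}$, which is precisely the domain on which $\phi$ is assumed continuous in the definition of a DKPP. Since $\bm{L} \in \mathbb{H}^N_{\geq 0}$ is positive semidefinite, all of its eigenvalues lie in $\mathbb{R}_{\geq 0}$, so the eigenvalue condition on the full matrix holds. One should also record that every principal submatrix $\bm{L}[\mathcal{A}]$ is itself positive semidefinite, which ensures $\tr \phi(\bm{L}[\mathcal{A}])$ is well defined through the spectral action of $\phi$ on $\mathbb{R}_{\geq 0}$. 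This is the only point needing a small check, and it follows at once: for any $\bm{x}$ indexed by $\mathcal{A}$, its zero-padded extension $\bm{y} \in \mathbb{C}^N$ satisfies $\bm{x}^\ast \bm{L}[\mathcal{A}] \bm{x} = \bm{y}^\ast \bm{L} \bm{y} \geq 0$.

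With these observations in place the conclusion follows directly. If $\phi'$ is operator monotone, then $\phi$ is a primitive of an operator monotone function on $\mathcal{E}$, so Theorem~\ref{thm:friedland} gives that $f$ is supermodular; combined with the first paragraph, $\log P_\phi(\cdot; \bm{L})$ is supermodular, i.e., $P_\phi(\cdot; \bm{L})$ is log-supermodular. Symmetrically, if $\phi'$ is operator antitone, Theorem~\ref{thm:friedland} yields a submodular $f$ and hence a log-submodular DKPP. I do not anticipate a serious obstacle, as the statement is essentially a translation of Theorem~\ref{thm:friedland} into the DKPP setting; the only care required is confirming that the spectral domain $\mathbb{R}_{\geq 0}$ is respected by all principal submatrices and that the partition function enters the log-probability only as an additive constant.
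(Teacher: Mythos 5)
Your proof is correct and follows exactly the route the paper intends: the paper states that Corollary~\ref{cor:dkpp_posneg} ``follows directly from Theorem~\ref{thm:friedland},'' and your argument simply makes this explicit by noting that $\log Z_\phi(\bm{L})$ is an additive constant (which preserves super/submodularity) and that the positive semidefiniteness of $\bm{L}$ places all relevant eigenvalues in $\mathcal{E} = \mathbb{R}_{\geq 0}$. The checks you flag (principal submatrices remaining PSD, the partition function entering only as a constant) are the right ones and are handled correctly.
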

An appropriate parameterization of $\phi$ enables a smooth transition between positive and negative dependence.
For example, we consider the scaled Box--Cox transformation for $\phi$:
\begin{align}
    \label{eq:box_cox}
    \phi_{\beta, \lambda}(x) \coloneqq
    \left \{ 
    \begin{aligned}
        &\beta \log x &~ (\lambda = 0),\\
        &\frac{\beta(x^\lambda - 1)}{\lambda} &~ (\mathrm{otherwise}),
    \end{aligned}
    \right .
\end{align}
with the hyperparameters $\lambda \in \mathbb{R}$ and $\beta \in \mathbb{R}_{>0}$.
According to Corollary \ref{cor:dkpp_posneg}, a DKPP $P_{\phi_{\beta, \lambda}}$ exhibits negative dependence for $\lambda \in [0, 1]$
and positive dependence for $\lambda \in [1, 2]$.
It reduces to a DPP for $\lambda = 0$ and a Boltzmann machine for $\lambda = 2$.
Hereafter, we denote $\phi_{\lambda} \coloneqq \phi_{1, \lambda}$ for simplicity.

\section{OPERATIONS AND INFERENCE OF DKPPs}
\subsection{Mode Exploration}
The mode exploration
$\mathrm{arg}\max_{\mathcal{A} \subseteq \mathcal{Y}} \log P_\phi(\mathcal{A}; \bm{L})$
is one of the most fundamental problems regarding probability model over sets,
but it is NP-hard for a general $\phi$ for DKPPs.
If $P_\phi(\cdot; \bm{L})$ is log-supermodular, it becomes a submodular minimization problem.
Although there are (strongly) polynomial-time combinatorial algorithms \citep{schrijver2000, iwata2001, orlin2009} for submodular minimization problems,
their computational complexity is expensive (typically $\mathcal{O}(N^6)$ for function calls).
The minimum-norm point algorithm \citep{fujishige2006, fujishige2011, chakrabarty2014} is an alternative to such combinatorial algorithms.
Although the minimum-point algorithm has weaker theoretical complexity, it usually performs better in practice.

When $P_\phi(\cdot; \bm{L})$ is log-submodular, the problem becomes a submodular maximization problem.
Although submodular maximization problems are NP-hard, many approximation algorithms have been proposed.
If $\log P_\phi(\cdot; \bm{L})$ is not only submodular but also monotone
(i.e., $\log P_\phi(\mathcal{S}; \bm{L}) \leq \log P_\phi(\mathcal{T}; \bm{L})$ holds for every $\mathcal{S} \subseteq \mathcal{T}$), the simple greedy algorithm
provides a $1-1/e$ approximation \citep{nemhauser1978}.
However, we have no guarantee of monotonicity in general.
For example, a DPP is a special case of DKPPs and has a submodular but non-monotone log-probability function.
For general non-monotone submodular maximization problems,
a deterministic 1/3-approximate algorithm and a randomized 1/2-approximate algorithm (in expectation) are proposed in \citep{buchbinder2012}.

In constrained settings, submodular minimization problems are generally NP-hard \citep{garey1979, feige2001},
and there are no algorithms with a polynomial approximation factor
even for constraints with a cardinality lower bound \citep{svitkina2011}.
Nevertheless, some practical approximation techniques can be applied \citep{svitkina2011, iyer2013}.
Conversely, there are approximate algorithms yielding constant factor approximations
for submodular maximization problems under cardinality-constrained settings.
\citet{buchbinder2014} proposed efficient algorithms achieving the approximation factors
in the range $[1/e + 0.004, 1/2 - o(1)]$ for the constraint $\lvert \mathcal{A} \rvert \leq k$
and $[0.356, 1/2 - o(1)]$ for $\lvert \mathcal{A} \rvert = k$, in expectation.

\subsection{Sampling}
Although direct sampling from a DKPP $P_\phi(\cdot; \bm{L})$ is difficult for a general $\phi$,
Markov chain Monte Carlo (MCMC) samplers are effective.
Although many advanced MCMC samplers are not suitable for DKPPs owing to their discreteness,
the recently proposed Langevin-like sampler \citep{zhang2022} may be a viable option.
Classical MCMC samplers are also applicable.
For the probability function $P$ on the powerset $2^V$, mixing times of the Metropolis--Hastings sampler
(including the Gibb sampler) are studied both in the log-supermodular or log-submodular cases \citep{gotovos2015} and for general cases \citep{rebeschini2015}.

\subsection{Normalizing Constant and Expectation}\label{subsec:normalize}
The evaluation of the normalizing constant $Z_\phi (\bm{L})$ is also crucial,
as it is required for calculating marginal probabilities.
Several methods are developed for approximating or bounding $Z_\phi(\bm{L})$,
including the mean-field approximation for set distributions \citep{djolonga2018} and
the perturb-and-MAP method \citep{papandreou2011, hazan2012, balog2017}.
The L-field developed in \citep{djolonga2014, djolonga2018} is also applicable to obtain
the lower and upper bound of $\log Z_\phi(\bm{L})$ if the DKPP is log-submodular or log-supermodular.

The mean-field approximation is a basic method in statistical mechanics and Bayesian statistics to approximate target distributions.
For the DKPP $P_\phi(\cdot; \bm{L})$, we can employ $Q_{\bm{q}}$ defined in \eqref{eq:variational_distribution} for the variational distribution
and aim to minimize the Kullback--Leibler divergence (KLD) between the variational distribution and the DKPP.
Then, we obtain the following iterative update rule for $i = 1, \ldots, N$
from the coordinate ascent:
\begin{align}
    \label{eq:mean_field_update}
    &q_i \gets \sigma \left ( \mathbb{E}_{\bm{\xi}_{\backslash i} \sim Q_{\bm{q}_{\backslash i}}} [f(i \vert \mathcal{A}_{\bm{\xi}_{\backslash i}})] \right ),\\
    &\quad \mbox{where}\quad 
    Q_{\bm{q}_{\backslash i}}(\bm{\xi}_{\backslash i}) \coloneqq \prod_{j \neq i} \mathrm{Bernoulli}(\xi_j; q_j),\\
    &\hphantom{\quad \mbox{where}\quad}
    f(i \vert \mathcal{A}) \coloneqq \tr \phi (\bm{L}[\mathcal{A} \cup \{i\}]) - \tr \phi(\bm{L}[\mathcal{A}]).
\end{align}
Here, $\mathcal{A}_{\bm{\xi}_{\backslash i}}$ is defined as 
$\mathcal{A}_{\bm{\xi}_{\backslash i}} \coloneqq \{i \in\{1, \ldots, i-1, i+1, \ldots, N\}: \xi_i = 1\}$.
The derivation of \eqref{eq:mean_field_update} is presented in Appendix \ref{app:mean_field}.
The expectation in \eqref{eq:mean_field_update} can easily be approximated using Monte Carlo methods.
Once the optimal $\bm{q}$ is found, the tightened evidence lower bound (ELBO)
$L(\bm{q}) \coloneqq \mathbb{H}[Q_{\bm{q}}] + \mathbb{E}_{\bm{\xi} \sim Q_{\bm{q}}}[\tr\phi (\bm{L}[\mathcal{A}_{\bm{\xi}}])]$,
which ensures $\log Z_{\phi} \geq L(\bm{q})$,
can be computed ($\mathbb{H}[\cdot]$ means the entropy).

Our proposed method for evaluating the normalizing constant $Z_\phi(\bm{L})$ of a DKPP is
the combination of mean-field approximation and importance sampling.
For any set function $g$ and a proposal distribution $Q$ on $2^\mathcal{Y}$,
the expectation $\mathbb{E}_{\mathcal{A} \sim P_\phi}[g(\mathcal{A})]$ can be evaluated as
the weighted mean over $Q$ since
\begin{align}
    \label{eq:importance_sampling}
    \mathbb{E}_{\mathcal{A} \sim P_\phi}[g(\mathcal{A})]
    = \mathbb{E}_{\mathcal{A} \sim Q}[w(\mathcal{A}) g(\mathcal{A})],
\end{align}
where $w(\mathcal{A}) \coloneqq P_\phi(\mathcal{A}; \bm{L}) / Q(\mathcal{A})$.
For unnormalized $\tilde{P}_\phi$, 
\begin{align}
    &1 = \mathbb{E}_{\mathcal{A} \sim P_\phi}[1]
    = \frac{1}{Z_\phi(\bm{L})} \mathbb{E}_{\mathcal{A} \sim Q}\left [ \frac{\tilde{P}_\phi(\mathcal{A}; \bm{L})}{Q(\mathcal{A})}\right ]\\
    \label{eq:importance_sampling_Z}
    &\quad\Longleftrightarrow Z_\phi(\bm{L}) = \mathbb{E}_{\mathcal{A} \sim Q}\left [ \frac{\tilde{P}_\phi(\mathcal{A}; \bm{L})}{Q(\mathcal{A})}\right ]
\end{align}
yields the sampling-based method for evaluating the normalizing constant.
If our goal is to obtain \eqref{eq:importance_sampling},
the approximated $Z_\phi(\bm{L})$ can be plugged into the weight
$w(\mathcal{A}) = P_\phi(\mathcal{A}; \bm{L}) / Q(\mathcal{A}) = \tilde{P}_\phi(\mathcal{A}; \bm{L}) / (Z_\phi(\bm{L}) Q(\mathcal{A}))$.

When evaluating \eqref{eq:importance_sampling_Z}, i.e., $g(\mathcal{A}) \equiv 1$,
we employ the variational distribution $Q_{\bm{q}}$ obtained by iterating \eqref{eq:mean_field_update}
as the proposal distribution.
In general, $Q^\ast(\mathcal{A}) \propto \lvert g(\mathcal{A}) \rvert P_\phi(\mathcal{A}; \bm{L})$ minimizes $\mathrm{Var}_{\mathcal{A} \sim Q} [w(\mathcal{A}) g(\mathcal{A})]$ \citep{rubinstein2008}.
Because this choice of the proposal distribution is the solution of
$
\mathrm{arg}\min_{Q_{\bm{q}}} \mathrm{KL}(Q_{\bm{q}} \Vert Q^\ast)
= \mathrm{arg}\min_{Q_{\bm{q}}} \mathrm{KL}(Q_{\bm{q}} \Vert P_\phi(\cdot; \bm{L}))
$,
it is the reasonable choice for evaluating \eqref{eq:importance_sampling_Z}.

\subsection{Marginal and Conditional Probabilities}
Consider recommending $k$ items as a subset of size $k$.
The customer may already have items in the basket; then conditional probabilities naturally arise.
Marginal probabilities are also important because they define conditional probabilities.
We introduce some computational techniques to evaluate marginal and conditional probabilities of random subset models.

\subsubsection*{Marginal Probability}
Let $\mathcal{A}_{\mathrm{sub}}$ and $\mathcal{A}_{\mathrm{sup}}$ be given subsets $\mathcal{Y}$
such that $\mathcal{A}_{\mathrm{sub}} \subseteq \mathcal{A}_{\mathrm{sup}}$.
We consider approximating the marginal probability
$\mathbb{P}(\mathcal{A}_{\mathrm{sub}} \subseteq \mathcal{A} \subseteq \mathcal{A}_{\mathrm{sup}})$
when $\mathcal{A}$ is a random subset following a probabilistic function on $2^\mathcal{Y}$.
A straightforward way to evaluate this marginal probability is
to use the importance sampling as in \eqref{eq:importance_sampling}:
\begin{align}
    \mathbb{P}(\mathcal{A}_{\mathrm{sub}} \subseteq \mathcal{A} \subseteq \mathcal{A}_{\mathrm{sup}})
    &= \mathbb{E}_{P_\phi}[\mathds{1} (\mathcal{A}_{\mathrm{sub}} \subseteq \mathcal{A} \subseteq \mathcal{A}_{\mathrm{sup}})]\\
    \label{eq:marginal_is}
    &= \mathbb{E}_{Q}[w(\mathcal{A}) \mathds{1} (\mathcal{A}_{\mathrm{sub}} \subseteq \mathcal{A} \subseteq \mathcal{A}_{\mathrm{sup}})].
\end{align}
The Monte Carlo method for \eqref{eq:marginal_is} is unbiased for 
$\mathbb{P}(\mathcal{A}_{\mathrm{sub}} \subseteq \mathcal{A} \subseteq \mathcal{A}_{\mathrm{sup}})$,
but the critical issue remains:
most Monte Carlo samples yield zero when
$\lvert \mathcal{A}_{\mathrm{sup}}\backslash\mathcal{A}_{\mathrm{sub}} \rvert$
is small owing to the indicator function.
If we use Monte Carlo samples on
$2^{\mathcal{A}_{\mathrm{sup}}\backslash\mathcal{A}_{\mathrm{sub}}}$
instead of $2^{\mathcal{Y}}$, the number of wasted samples could be reduced.
The following proposition realizes this.
\begin{proposition}\label{prop:marginal_rao_blackwell}
    Let $P$ be a probability function on $2^\mathcal{Y}$
    and $\bm{\xi}$ be a random vector following the independent Bernoulli trials $Q_{\bm{q}}$ defined in \eqref{eq:variational_distribution}.
    Then, 
    \begin{align}
        &\mathbb{E}_{\bm{\xi}}\left [
        \frac{P(\mathcal{A}_{\bm{\xi}})}{\prod_{i \in \mathcal{A}_{\mathrm{sup}}\backslash\mathcal{A}_{\mathrm{sub}}} Q_{q_i}(\xi_i)}
        \left \vert
        \begin{aligned}
            & \xi_i = 1~ &(i \in \mathcal{A}_{\mathrm{sub}})\\
            & \xi_j = 0~ &(j \in \mathcal{Y} \backslash \mathcal{A}_{\mathrm{sup}})
        \end{aligned}
        \right . \right]\\
        \label{eq:rao_blackwellized}
        &\quad= 
        \mathbb{P}(\mathcal{A}_{\mathrm{sub}} \subseteq \mathcal{A} \subseteq \mathcal{A}_{\mathrm{sup}})
    \end{align}
    and
    \begin{align}
        &\mathrm{Var}_{\bm{\xi}}\left [
        \frac{P(\mathcal{A}_{\bm{\xi}})}{\prod_{i \in \mathcal{A}_{\mathrm{sup}}\backslash\mathcal{A}_{\mathrm{sub}}} Q_{q_i}(\xi_i)}
        \left \vert
        \begin{aligned}
            & \xi_i = 1~ &(i \in \mathcal{A}_{\mathrm{sub}})\\
            & \xi_j = 0~ &(j \in \mathcal{Y} \backslash \mathcal{A}_{\mathrm{sup}})
        \end{aligned}
        \right . \right]\\
        &\quad\leq 
        \mathrm{Var}_{\bm{\xi}}[w(\mathcal{A}_{\bm{\xi}}) \mathds{1} (\mathcal{A}_{\mathrm{sub}} \subseteq \mathcal{A}_{\bm{\xi}} \subseteq \mathcal{A}_{\mathrm{sup}})]
    \end{align}
    hold, where $w(\mathcal{A}_{\bm{\xi}}) = P(\mathcal{A}_{\bm{\xi}})/Q_{\bm{q}}(\bm{\xi})$.
\end{proposition}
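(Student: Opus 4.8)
The plan is to recognize the proposition as an instance of Rao--Blackwellization, with the conditioning statistic being the restriction of $\bm{\xi}$ to the free coordinates $\mathcal{A}_{\mathrm{out}} \setminus \mathcal{A}_{\mathrm{in}}$. First I would partition the ground set into three blocks: the forced-in indices $\mathcal{A}_{\mathrm{in}}$, the free indices $\mathcal{A}_{\mathrm{out}} \setminus \mathcal{A}_{\mathrm{in}}$, and the forced-out indices $\mathcal{Y} \setminus \mathcal{A}_{\mathrm{out}}$. Write $X \coloneqq w(\mathcal{A}_{\bm{\xi}}) \mathbbm{1}(\mathcal{A}_{\mathrm{in}} \subseteq \mathcal{A}_{\bm{\xi}} \subseteq \mathcal{A}_{\mathrm{out}})$ for the plain importance-sampling estimator with $w = P/Q_{\bm{q}}$, and let $T$ denote the sub-$\sigma$-algebra generated by the free coordinates. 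The entire claim reduces to showing that the bracketed quantity on the right-hand side of both displays is exactly $\mathbb{E}[X \mid T]$; once this is established, the expectation identity follows from the tower property together with the standard importance-sampling identity \eqref{eq:importance_sampling}, and the variance inequality follows from the law of total variance.

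The key step is the explicit computation of $\mathbb{E}[X \mid T]$. Conditioning on the free coordinates, the remaining coordinates $\{\xi_i\}_{i \in \mathcal{A}_{\mathrm{in}}}$ and $\{\xi_j\}_{j \in \mathcal{Y} \setminus \mathcal{A}_{\mathrm{out}}}$ are still independent Bernoulli, and the indicator $\mathbbm{1}(\mathcal{A}_{\mathrm{in}} \subseteq \mathcal{A}_{\bm{\xi}} \subseteq \mathcal{A}_{\mathrm{out}})$ vanishes unless $\xi_i = 1$ for all $i \in \mathcal{A}_{\mathrm{in}}$ and $\xi_j = 0$ for all $j \in \mathcal{Y} \setminus \mathcal{A}_{\mathrm{out}}$. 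Hence only the single configuration meeting these constraints contributes, weighted by $\prod_{i \in \mathcal{A}_{\mathrm{in}}} Q_{q_i}(1) \prod_{j \in \mathcal{Y}\setminus \mathcal{A}_{\mathrm{out}}} Q_{q_j}(0)$. Because $Q_{\bm{q}}$ factorizes as in \eqref{eq:variational_distribution}, these very factors appear in the denominator of $w = P/Q_{\bm{q}}$ and cancel, leaving precisely $P(\mathcal{A}_{\bm{\xi}}) / \prod_{i \in \mathcal{A}_{\mathrm{out}} \setminus \mathcal{A}_{\mathrm{in}}} Q_{q_i}(\xi_i)$ evaluated at the forced values. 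This is exactly the bracketed estimator in the proposition, so $\mathbb{E}[X \mid T]$ coincides with it.

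Given this, the expectation identity is immediate: $\mathbb{E}[\mathbb{E}[X \mid T]] = \mathbb{E}[X] = \mathbb{E}_{\mathcal{A} \sim P}[\mathbbm{1}(\mathcal{A}_{\mathrm{in}} \subseteq \mathcal{A} \subseteq \mathcal{A}_{\mathrm{out}})] = \mathbb{P}(\mathcal{A}_{\mathrm{in}} \subseteq \mathcal{A} \subseteq \mathcal{A}_{\mathrm{out}})$, and I would note that conditioning on the forced values (as in the displayed notation) does not change this, since the free coordinates are independent of the forced ones and their conditional law given the forced event is just their marginal. For the variance inequality, the law of total variance gives $\mathrm{Var}(X) = \mathbb{E}[\mathrm{Var}(X \mid T)] + \mathrm{Var}(\mathbb{E}[X \mid T])$; dropping the nonnegative first term yields $\mathrm{Var}(X) \geq \mathrm{Var}(\mathbb{E}[X \mid T])$, which is the asserted inequality once $\mathbb{E}[X \mid T]$ is identified with the conditional estimator and the same independence remark equates its unconditional variance with the displayed conditional one.

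I expect the only real obstacle to be bookkeeping: keeping the three coordinate blocks straight and verifying that the Bernoulli factors attached to the forced coordinates cancel exactly against the corresponding factors in $w$. The probabilistic content is just the classical Rao--Blackwell theorem, so no additional machinery beyond the factorization of $Q_{\bm{q}}$ and the law of total variance is needed.
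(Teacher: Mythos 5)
Your proposal is correct and takes essentially the same route as the paper's proof: the same partition of $\mathcal{Y}$ into forced-in, forced-out, and free blocks, the importance-sampling identity \eqref{eq:marginal_is} as the starting point, the tower property of expectation for the identity, and the law of total variance for the inequality. The only difference is organizational --- you condition once on the free coordinates and identify the bracketed estimator as $\mathbb{E}[X \mid T]$, whereas the paper collapses the indicator by conditioning sequentially on the forced-in and then forced-out blocks; your arrangement has the minor advantage that the variance step becomes immediate, since the law of total variance needs exactly the identification of the Rao--Blackwellized estimator as a conditional expectation given the free coordinates, which the paper's expectation derivation leaves implicit.
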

The proof of Proposition \ref{prop:marginal_rao_blackwell} mainly depends on
the tower properties of expectation and variance
(the detailed proof is in Appendix \ref{app:proofs}).
Such variance reduction techniques are called Rao--Blackwellization and are commonly used in computational statistics \citep{casella1996, doucet2000}.
Since only $\{\xi_i\}_{i \in \mathcal{A}_{\mathrm{sup}}\backslash\mathcal{A}_{\mathrm{sub}}}$ 
are required as Monte Carlo samples owing to the independence of the proposal distribution $Q_{\bm{q}}$,
and the indicator function no longer appears in \eqref{eq:rao_blackwellized},
we can expect a significant reduction in variance when evaluating the marginal probability.
Rao--Blackwellization is effective in estimating
marginal probabilities other than
$\mathbb{P}(\mathcal{A}_{\mathrm{sub}} \subseteq \mathcal{A} \subseteq \mathcal{A}_{\mathrm{sup}})$.

\begin{proposition}\label{prop:marginal_cardinality}
    Let $\mathcal{A}$ be a random subset following
    the probability function
    $P: 2^\mathcal{Y} \to [0, 1]$.
    Then, we have
    \begin{align}
        \label{eq:rao_blackwell_cardinality}
        \mathbb{P}(\lvert\mathcal{A}\rvert = k)
        = \binom{N}{k} \mathbb{E}_{\mathcal{A}^k \sim Q^k}[P(\mathcal{A}^k)],
    \end{align}
    where $Q^k$ is the uniform distribution on
    $\{\mathcal{A} \subseteq \mathcal{Y} : \lvert \mathcal{A} \rvert = k\}$.
\end{proposition}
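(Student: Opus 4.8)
The plan is to prove the identity by direct expansion of both sides, exploiting the fact that $Q^k$ is uniform on a collection of known cardinality. Conceptually this is the same Rao--Blackwellization idea as in Proposition \ref{prop:marginal_rao_blackwell}: by restricting the sampling space to subsets of size exactly $k$, no Monte Carlo sample is wasted on the event $\{\lvert\mathcal{A}\rvert \neq k\}$, so the estimator based on \eqref{eq:rao_blackwell_cardinality} is far more sample-efficient than naively sampling from $2^\mathcal{Y}$ and discarding subsets of the wrong size.

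First I would write the left-hand side as a sum over the event of interest. By the definition of the cardinality marginal,
\[
    \mathbb{P}(\lvert\mathcal{A}\rvert = k) = \sum_{\mathcal{B} \subseteq \mathcal{Y},\ \lvert\mathcal{B}\rvert = k} P(\mathcal{B}).
\]
Next I would record the mass assigned by $Q^k$ to each summand. Since $Q^k$ is the uniform distribution on the collection $\{\mathcal{B} \subseteq \mathcal{Y} : \lvert\mathcal{B}\rvert = k\}$, and this collection has exactly $\binom{N}{k}$ elements, we have $Q^k(\mathcal{B}) = 1/\binom{N}{k}$ for every such $\mathcal{B}$ and $Q^k(\mathcal{B}) = 0$ otherwise.

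Finally I would expand the right-hand expectation against $Q^k$ and cancel the binomial factor:
\[
    \binom{N}{k}\, \mathbb{E}_{\mathcal{A}^k \sim Q^k}[P(\mathcal{A}^k)]
    = \binom{N}{k} \sum_{\lvert\mathcal{B}\rvert = k} \frac{1}{\binom{N}{k}}\, P(\mathcal{B})
    = \sum_{\lvert\mathcal{B}\rvert = k} P(\mathcal{B}),
\]
which is exactly the left-hand side computed above. I expect no substantive obstacle: the statement is an elementary reweighting identity that holds for any set function $P$ (the normalization $\sum_{\mathcal{A}} P(\mathcal{A}) = 1$ is not even needed). The only point to state carefully is the normalization $Q^k(\mathcal{B}) = 1/\binom{N}{k}$, which follows immediately from $\lvert\{\mathcal{B} \subseteq \mathcal{Y} : \lvert\mathcal{B}\rvert = k\}\rvert = \binom{N}{k}$. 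The value of the proposition is therefore practical rather than mathematical: it recasts the cardinality marginal as an expectation over size-$k$ subsets, which can subsequently be combined with importance sampling as in \eqref{eq:importance_sampling} when $P = P_\phi$ is known only up to its normalizing constant $Z_\phi(\bm{L})$.
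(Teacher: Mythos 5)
Your proof is correct, and it takes a genuinely different and more elementary route than the paper. The paper derives \eqref{eq:rao_blackwell_cardinality} by first writing $\mathbb{P}(\lvert\mathcal{A}\rvert = k)$ as an importance-sampling expectation over i.i.d.\ Bernoulli trials $Q_{\bm{q}}$ with common parameter $q$, then introducing the auxiliary binomial variable $\zeta = \sum_i \xi_i$, applying the tower property of expectation to condition on $\zeta = k$, cancelling the factor $q^k(1-q)^{N-k}$, and finally invoking exchangeability to identify the conditional law of $\bm{\xi}$ given $\zeta = k$ with the uniform distribution $Q^k$ on size-$k$ subsets. You instead expand both sides directly: the left side is $\sum_{\lvert\mathcal{B}\rvert = k} P(\mathcal{B})$ by definition, and the right side is the same sum because $Q^k$ puts mass $1/\binom{N}{k}$ on each size-$k$ subset. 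Your argument is shorter, avoids the exchangeability step entirely, and makes transparent that the identity is purely algebraic --- it holds for any set function $P$ with no normalization assumption, a point the paper's route obscures. What the paper's derivation buys in exchange is the methodological framing: by exhibiting the right-hand estimator as a Rao--Blackwellization (a conditional expectation) of the naive importance-sampling estimator, the variance decomposition $\mathrm{Var}_X[f(X)] = \mathbb{E}_Y[\mathrm{Var}_X[f(X)\mid Y]] + \mathrm{Var}_Y[\mathbb{E}_X[f(X)\mid Y]]$ immediately implies that the conditioned estimator has no larger variance, which is exactly the sample-efficiency claim you state informally in your opening paragraph but do not prove. If you wanted that variance guarantee as well, you would need to add the conditioning argument on top of your direct computation.
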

The proof of \ref{prop:marginal_cardinality} is also shown in
Appendix \ref{app:proofs}.
Proposition \ref{prop:marginal_cardinality} provides
an accurate Monte Carlo method to evaluate the marginal probability
$\mathbb{P}(\lvert \mathcal{A} \rvert = k)$ for a given $k \in \{1, \ldots, N\}$.
Simply take Monte Carlo samples consisting of uniformly randomly chosen $k$ items in $\mathcal{Y}$
and evaluate \eqref{eq:rao_blackwell_cardinality}.

\subsubsection*{Conditional Probability}
The conditional probability
\begin{align}
    \label{eq:cond_prob}
    P_\phi(\mathcal{A} \vert \mathcal{A}_{\mathrm{sub}} \subseteq \mathcal{A} \subseteq \mathcal{A}_{\mathrm{sup}}; \bm{L})
    = \frac{P_\phi(\mathcal{A}; \bm{L})}{\mathbb{P}(\mathcal{A}_{\mathrm{sub}} \subseteq \mathcal{A} \subseteq \mathcal{A}_{\mathrm{sup}})}\quad
\end{align}
can be accurately estimated by approximating the marginal probability
$\mathbb{P}(\mathcal{A}_{\mathrm{sub}} \subseteq \mathcal{A} \subseteq \mathcal{A}_{\mathrm{sup}})$
using Proposition \ref{prop:marginal_rao_blackwell}.
Note that the approximation of the normalizing constants within \eqref{eq:rao_blackwellized} is
not required if we are only interested in the conditional probability
because they cancel out in the numerator and denominator in \eqref{eq:cond_prob}.
The same applies to
$P_\phi(\mathcal{A} \vert \lvert \mathcal{A} \rvert = k; \bm{L})$.

\section{LEARNING DKPPs}\label{sec:learning}
Since the kernel matrix $\bm{L}$ of DKPPs indicates the pairwise similarity for each $(i, j) \in \mathcal{Y} \times \mathcal{Y}$,
we can build it directly when the items have feature vectors.
Otherwise, we may need to learn the kernel matrix $\bm{L}$ from observations.

Let $\mathcal{A}_1, \ldots, \mathcal{A}_M \in \mathcal{Y}$ be the observed data series with $M$ samples.
The simplest way to learn $\bm{L}$ is through the maximum likelihood estimation (MLE).
For $\phi = \log$, say for DPPs, several algorithms have been developed to learn $\bm{L}$ through MLE \citep{gillenwater2014, mariet2015, mariet2016, gartrell2017, dupuy2018, kawashima2023}.
However, the term $\log Z_\phi(\bm{L})$ makes it difficult to learn $\bm{L}$ directly for general $\phi$.

As an alternative to MLE, we propose using ratio matching, which was developed by \citep{hyvarinen2007} as a discrete extension of score matching \citep{hyvarinen2005}.
According to the prescription of the ratio matching,
the objective function to minimize is
\begin{align}
    \label{eq:ratio_matching_dkpp}
    &J(\bm{L})
    \coloneqq \frac{1}{M} \sum_{m, n}
    g \left( \exp(\tr \phi(\bm{L}[\mathcal{A}_m]) - \tr\phi(\bm{L}[\mathcal{A}^{\bar{n}}_m] ))\right )^2,\\
    &\mbox{where}\quad 
    g(x) \coloneqq \frac{1}{1+x},~
    \mathcal{A}^{\bar{n}} \coloneqq
    \left \{
    \begin{aligned}
        &\mathcal{A} \backslash \{n\} & ~~ (n \in \mathcal{A}),\\
        &\mathcal{A} \cup \{n\} & ~~ (n \notin \mathcal{A}),
    \end{aligned}
    \right .
\end{align}
for DKPPs.
Although we cannot evaluate the exponent term in \eqref{eq:ratio_matching_dkpp}
more efficiently than by direct computation for general $\phi$,
we can employ stochastic gradient descent (SGD) because $J$ has a summation form.
Let $\Omega \subseteq \{(m, n) : m = 1, \ldots, M,~ n = 1, \ldots, N\}$ be the minibatch
and $\kappa \coloneqq \max\{\lvert \mathcal{A}_1 \rvert, \ldots, \lvert \mathcal{A}_M \rvert\}$.
The time complexity of computing the gradient $\partial J / \partial \bm{L}$
is $\mathcal{O}(\lvert \Omega \rvert \kappa^3 + \lvert \Omega \rvert N^2)$, which no longer depends on
the sample size $M$ and the number of items $N$,
except for the term $\lvert \Omega \rvert N^2$ owing to
$\lvert \Omega \rvert$ additions of $N \times N$ matrices (see Appendix \ref{app:gradient} for derivation).
This ensures the good scalability of the algorithm.
Optionally, the variance reduction technique for SGD of the ratio matching can be applied \citep{liu2022}.

\section{EXPERIMENTS}\label{sec:experiments}
All experiments were performed on a MacBook Pro (2023, macOS 14.3) with an Apple M3 Pro chip and 36GB RAM.

\subsection{Controllability of Positive and Negative Dependence}
To verify that DKPPs can flexibly control positive and negative dependence,
we experiment on the behavior of the conditional probability $P_\phi(\mathcal{A} ~\vert~ \lvert \mathcal{A} \rvert = k; \bm{L})$.
We prepare a $20 \times 20$ grid of points in $\mathbb{R}^2$ as the ground set $\mathcal{Y}$,
so that $N = 20^2 = 400$.
The kernel matrix $\bm{L}$ is constructed from the Gaussian kernel with unit bandwidth.
Then, we create two types of realization on $2^\mathcal{Y}$: \texttt{SCATTERED} and \texttt{GATHERED},
both containing $k = \lvert \mathcal{A} \rvert = 25$ items.
\texttt{SCATTERED} is obtained by applying an approximate algorithm
for submodular maximization \citep{buchbinder2014} to a DPP,
whereas \texttt{GATHERED} is created manually.
We evaluate the conditional probabilities
$P_{\phi_\lambda}(\mathcal{A} ~\vert~ \lvert \mathcal{A} \rvert = k; \bm{L})$
where $\phi_\lambda$ is the Box--Cox transformation \eqref{eq:box_cox}.
We execute 30 trials of Monte Carlo approximations using \eqref{eq:rao_blackwell_cardinality}
with \num{1000} samples to estimate the conditional probabilities.

The results are shown in Figure \ref{fig:cond_probs}.
The error bars are present but nearly invisible owing to the remarkably small approximation variance.
As expected, the probability of \texttt{SCATTERED} decreases monotonically
and the probability of \texttt{GATHERED} increases monotonically with $\lambda$.
Interestingly, the probabilities of \texttt{SCATTERED} and \texttt{GATHERED} reverse at $\lambda = 1$.
This result aligns exactly with the theory;
the switch between positive and negative dependence occurs at $\lambda = 1$, as explained in Subsection \ref{subsec:posneg_dkpp}.

\subsection{Subset Acquisition}
\begin{algorithm}[t]
    \SetKwComment{Comment}{//}{}
    \caption{Subset Acquiring Algorithm}\label{alg:acquiring}
    \KwIn{DKPP $P_\phi(\cdot; \bm{L})$ and the subset size $k$}
    \KwOut{$\mathcal{A}$}
    Initialize $\mathcal{A} \gets \lbrace \rbrace,~ \mathcal{B} \gets \lbrace1, \ldots, N\rbrace$;\\
    \For{$t = 1$ to $k$}{
        Sample $j \in \mathcal{B}$ with probability proportional to weights $w_i = \tilde{P}_\phi(\mathcal{A} \cup \{i\} | \boldsymbol{L}),~ i \in \mathcal{B}$;\\
        Update $\mathcal{A} \gets \mathcal{A} \cup \{i\},~ \mathcal{B} \gets \mathcal{B} \backslash \{j\}$;
    }
\end{algorithm}
\begin{figure}[t]
    \centering
    \includegraphics[width = 0.75\columnwidth]{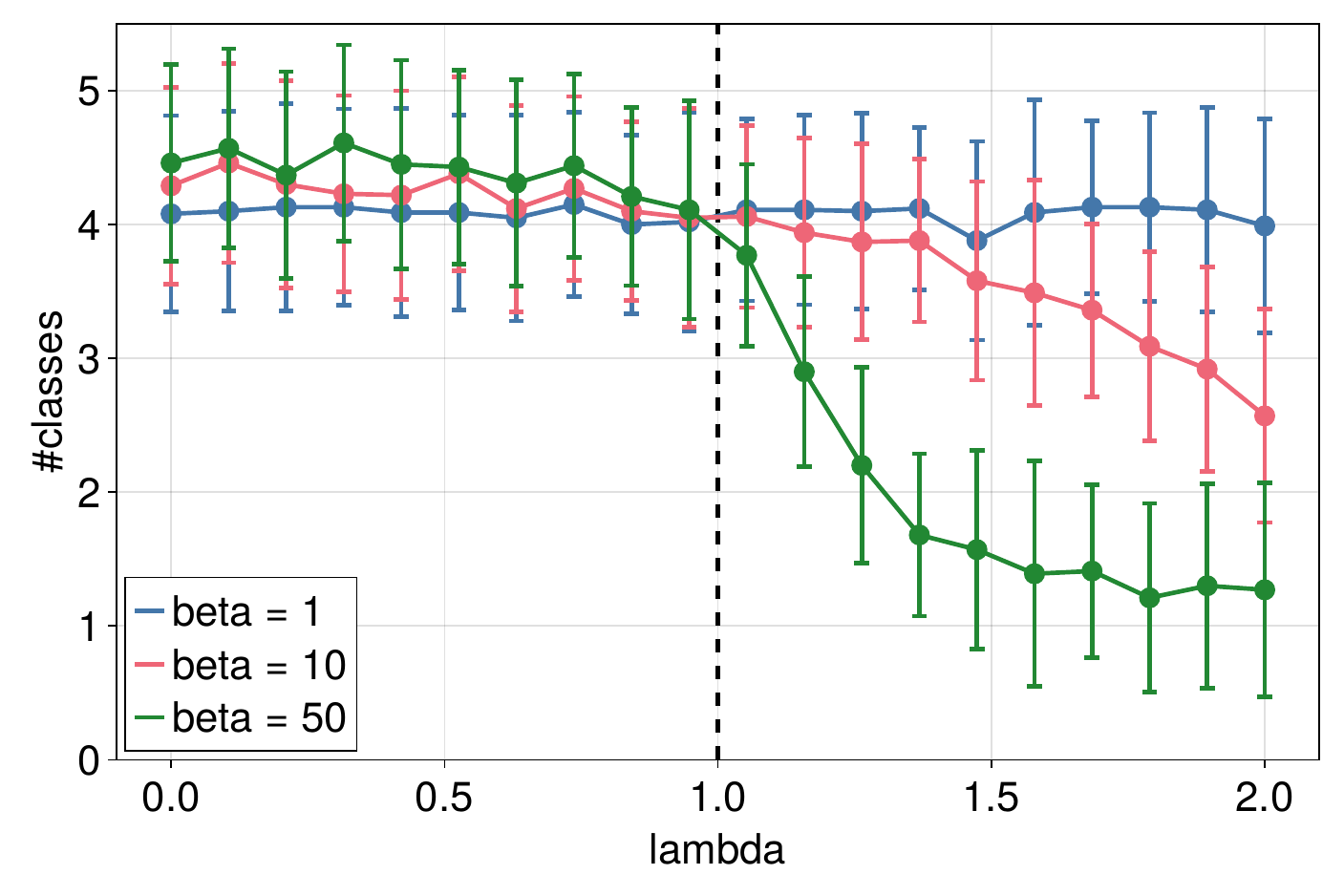}
    \caption{Number of distinct classes within the acquired subsets.}
    \label{fig:mnist_metrics}
\end{figure}

Attractive and repulsive subset acquisition may be the most direct application of the DKPPs.
We randomly pick $N = \num{2000}$ images of the MNIST dataset \citep{lecun1998} with containing $200$ instances from each digit class.
Then, the Gaussian kernel with the median heuristic is applied to make the kernel matrix $\bm{L} \in \mathbb{R}^{\num{2000} \times \num{2000}}$.
Algorithm \ref{alg:acquiring} is the heuristic algorithm to acquire subsets from a DKPP.
Given the kernel matrix, we run the algorithm to acquire subsets of size $5$ from $P_{\phi_{\beta, \lambda}}$
and measure the number of distinct digit classes within each subset.
A larger value of this metric indicates more repulsive subset acquisition, while a smaller value suggests more attractive.

Figure \ref{fig:mnist_metrics} shows the metric values for $\beta \in \{1, 10, 50\},~ \lambda \in [0, 2]$ with the mean and standard deviations of 100 trials.
Although the values are not significantly different in $\beta = 1$, it smoothly decreases as $\lambda$ increases for $\beta = 10$;
the dependency control is realized.
We can find the dependency is more intensified for $\beta = 50$.
Additionally, we visually show the acquired instances in Appendix \ref{app:experiments}.

Notably this procedure involves no learning process at all.
The entire process is forward-only, demonstrating the advantage of the DKPP’s clear and practicable parameterization via the kernel matrix and the dependency controlling function $\phi$.

\subsection{Approximating the Normalizing Constant}
We assess some approximating or bounding methods for $\log Z_\phi(\bm{L})$.
We use the Box--Cox transformation \eqref{eq:box_cox} for $\phi$ and the hyperparameter $\lambda \in [0, 2]$.
We calculate the mean-field ELBOs and demonstrate the importance sampling described in Subsection \ref{subsec:normalize}.
Additionally, we apply the L-field \citep{djolonga2014, djolonga2018} to obtain both the upper and lower bounds of $\log Z_\phi(\bm{L})$
since the DKPP is log-submodular or log-supermodular for such $\lambda$.
For all experiments reported in this subsection, we ran 30 trials and showed the means and standard deviations in the figures.

\begin{figure}[t]
    \centering
    \includegraphics[width = 0.75\columnwidth]{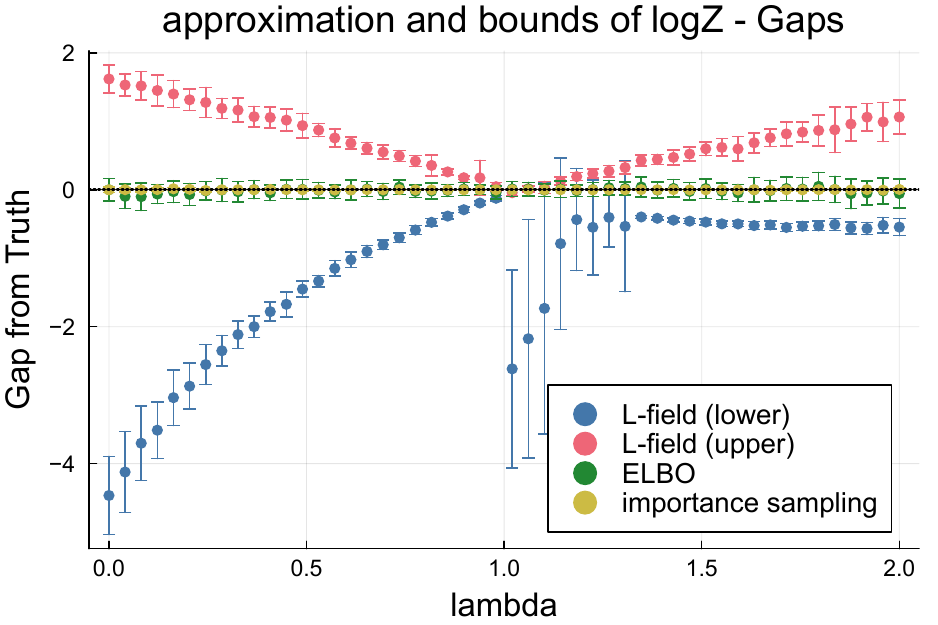}
    \caption{
    Evaluated gaps $\log Z^{\mathrm{approx}}_\phi(\bm{L}) - \log Z_\phi(\bm{L})$.
    }
    \label{fig:Z_gaps}
    \includegraphics[width = 0.75\columnwidth]{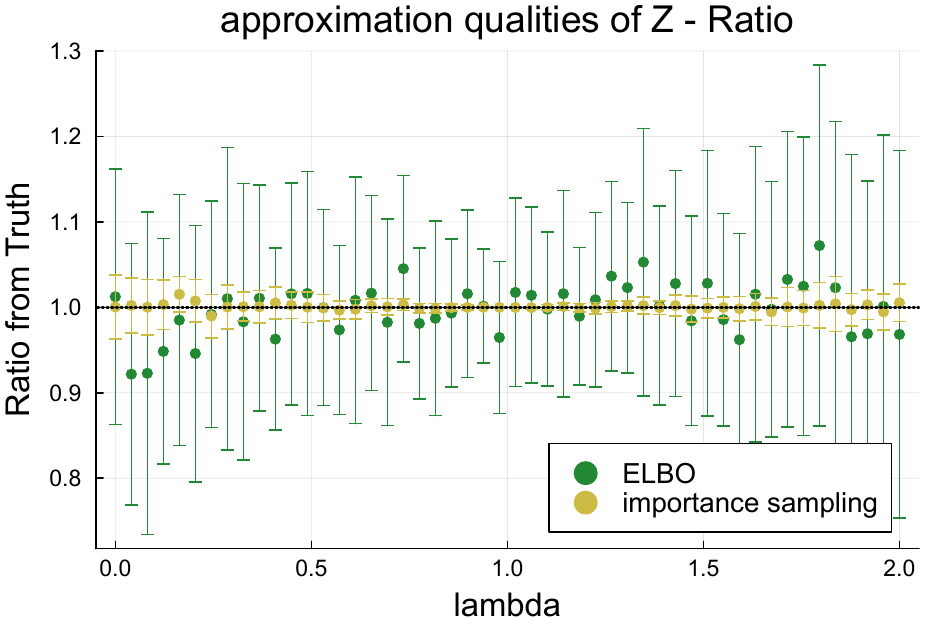}
    \caption{
    Evaluated ratios $Z^{\mathrm{approx}}_\phi(\bm{L}) / Z_\phi(\bm{L})$.
    }
    \label{fig:Z_ratios}
    \includegraphics[width = 0.75\columnwidth]{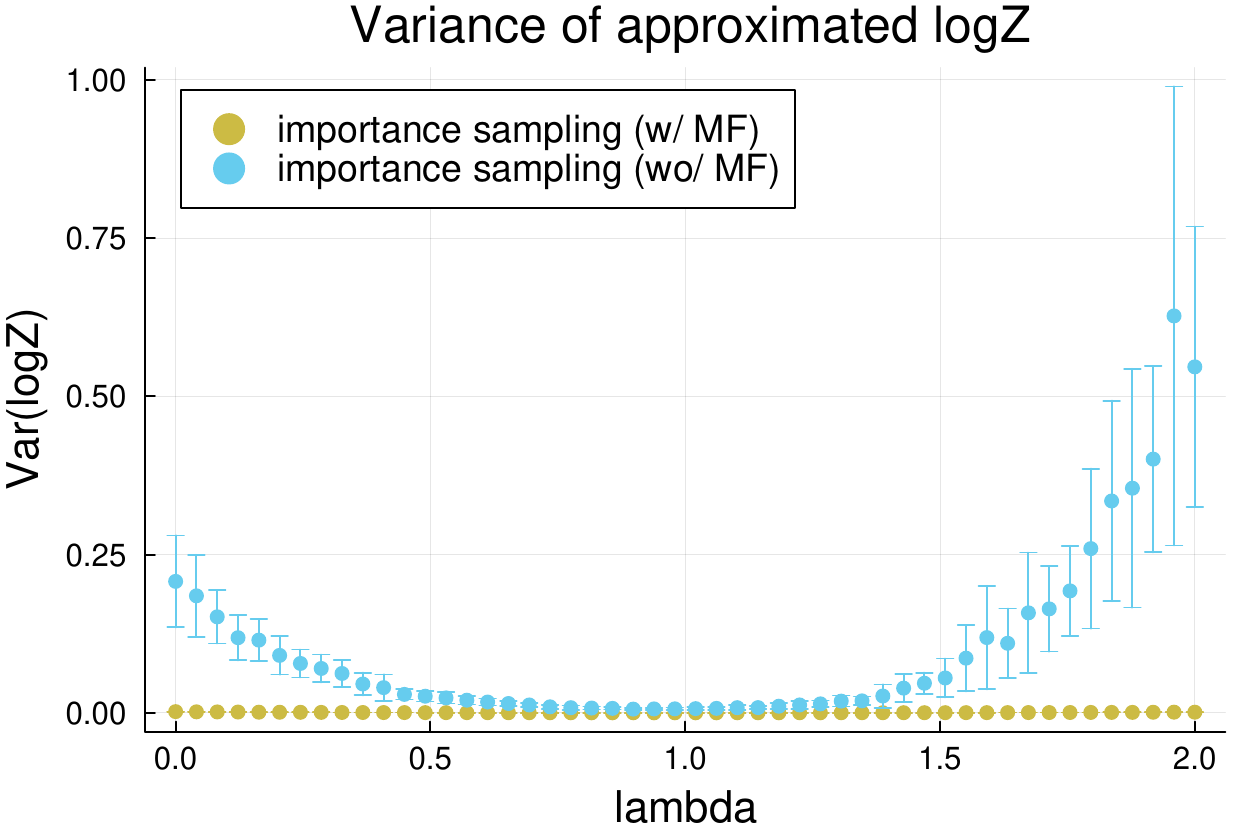}
    \caption{
    Evaluated variances of importance sampling with and without mean-field approximation.
    }
    \label{fig:Z_variance}
\end{figure}

First, we set $N = 16$ and obtain the kernel parameters from $\bm{L} \sim \mathrm{Wishart}(\bm{L}; N, \bm{I}) / N$ for each hyperparameter $\lambda \in [0, 2]$.
Then, we evaluate the gaps from the ground truth
$\log Z^{\mathrm{approx}}_{\phi_\lambda}(\bm{L}) - \log Z_{\phi_\lambda}(\bm{L})$
for each pair of $(\lambda, \bm{L})$.
Figure \ref{fig:Z_gaps} shows the results.
Although the ELBO and importance sampling seem to yield good estimates of the normalizing constant,
the gaps of the L-fields become larger as $\lambda$ moves away from $1$.
Figure \ref{fig:Z_ratios} shows the more detailed behavior of the ELBO and proposed importance sampling
through the ratio $Z^{\mathrm{approx}}_{\phi_\lambda}(\bm{L}) / Z_{\phi_\lambda}(\bm{L})$.
Although the ELBO constructs a theoretical lower bound of the normalizing constant,
it can take larger values than the ground truth owing to
the evaluation error of the Monte Carlo approximation for the multilinear extension.
We can see that the importance sampling achieves a better estimator than the ELBO,
and the variance of the importance sampling also increases as $\lambda$ moves away from $1$,
since the mean-field approximation for the proposal distribution becomes less accurate.

As an ablation study, we compare the importance sampling with and without the mean-field approximation for the proposal distribution.
We set $N = 64$ and evaluate $\mathrm{Var}(\log Z^{\mathrm{approx}}_{\phi_\lambda}(\bm{L}))$
for each $\lambda$.
We use $q_1 = \cdots = q_N = 0.5$ for the without-mean-field proposal.
As shown in Figure \ref{fig:Z_variance}, the proposal distributions with the mean-field approximation
significantly reduce the variance of the importance sampling.

\subsection{Learning the Kernel Parameter}
\begin{figure}[t]
    \centering
    \includegraphics[width = 0.75\columnwidth]{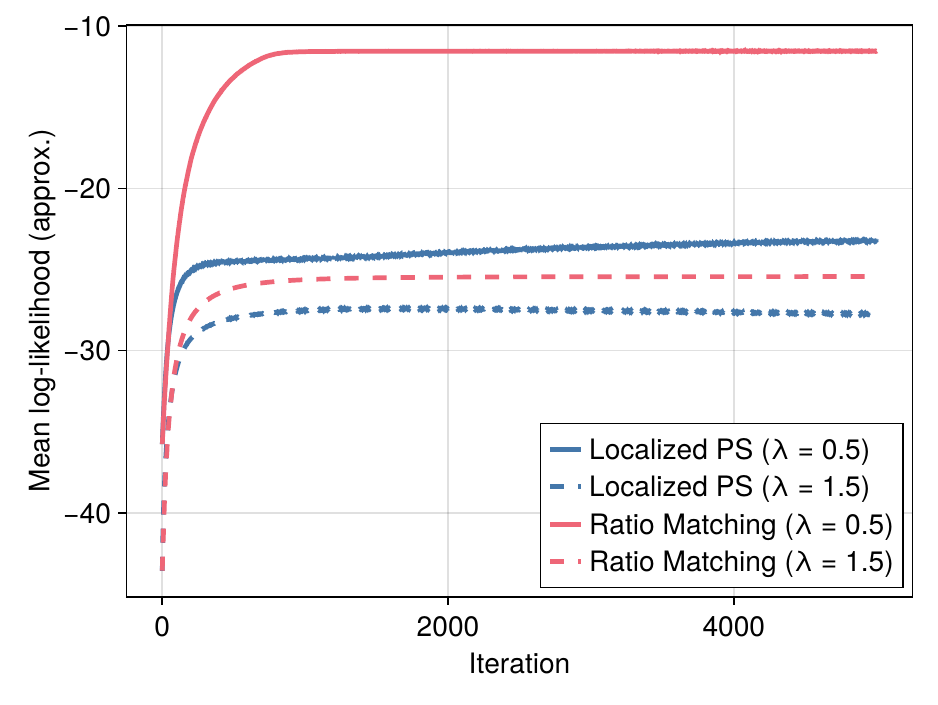}
    \caption{Learning curves with \texttt{media}. The LPSD takes $\SI{7.925 E-2}{\second}$ per iteration and ratio matching takes $\SI{2.174 E-3}{\second}$ per iteration.}
    \label{fig:rm_lpsd_amazon_media}
\end{figure}

\begin{figure}[t]
    \centering
    \includegraphics[width = 0.75\columnwidth]{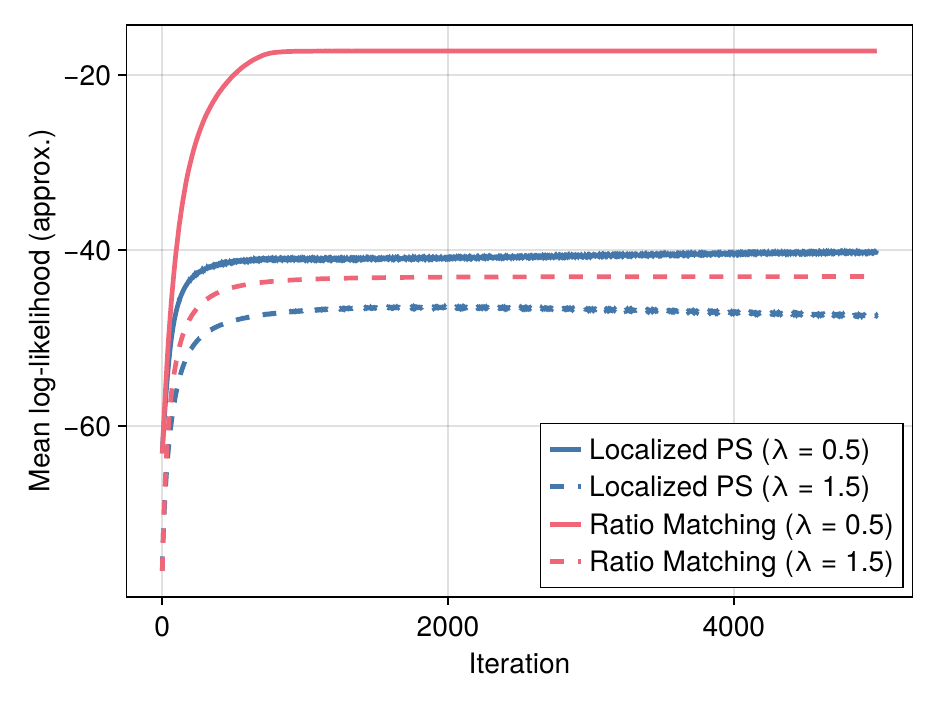}
    \caption{Learning curves with \texttt{apparel}. The LPSD takes $\SI{6.091 E-1}{\second}$ per iteration and ratio matching takes $\SI{4.155 E-3}{\second}$ per iteration.}
    \label{fig:rm_lpsd_amazon_apparel}
\end{figure}


We demonstrate kernel learning methods on the Amazon Baby Registry dataset \citep{gillenwater2014}.
The dataset contains 13 categories of childcare products,
and we use \texttt{media} ($N = 58,~ M = \num{5904}$) and \texttt{apparel} ($N = 100, M = \num{14970})$ categories.
We compare the ratio matching introduced in Section \ref{sec:learning} with the localized pseudo-spherical divergence (LPSD) \citep{takenouchi2015, takenouchi2017},
which can measure the discrepancy between an empirical distribution and
a probability function with finite support
while avoiding the calculation of the normalizing constant.
We set the Box--Cox transformation hyperparameter $\lambda = 0.5, 1.5$ and the minibatch size for ratio matching $\lvert \Omega \rvert = 100$.
Figures \ref{fig:rm_lpsd_amazon_media} and \ref{fig:rm_lpsd_amazon_apparel} show the learning curves.
The normalizing constants are evaluated approximately by importance sampling.
Overall, ratio matching performs better in both convergence and computational time.

\section{CONCLUSION}
In this study, we introduced DKPPs, a new family of distributions for random subsets
that can seamlessly control positive and negative dependence.
We developed methods for the probabilistic operations and inference in DKPPs for practical use,
including the evaluation of expectations, the normalizing constant,
and marginal and conditional probabilities.
We also proposed an efficient learning method based on ratio matching.
Empirically, we demonstrated the controllability of positive and negative dependence in DKPPs and conducted a subset acquisition experiment as the representative application.
The effectiveness of the proposed computational algorithms was also shown numerically.

Although in this paper, we mainly focused on conceptual and practical motivation,
many theoretical questions remain open.
How do DKPPs connect to point processes on continuous spaces like the marginal kernel representation of DPPs?
When do stronger conditions than log-submodularity hold as negative dependence in a DKPP?
Furthermore, we will seek an understanding of how broad the DKPP family is as a set of distributions.

\subsubsection*{Acknowledgements}
We thank Satoshi Kuriki, Keisuke Yano, Yuki Saito, Yuki Hirakawa, Takuya Furusawa, and anonymous reviewers for helpful comments.
This work was supported by JST CREST Grant Number JPMJCR2015, JSPS KAKENHI Grant Numbers JP22H03653 and 23H04483, and JST the establishment of university fellowships towards the creation of science technology innovation Grant Number JPMJFS2136.

\bibliographystyle{apalike}
\bibliography{DKPP}

     \section*{Checklist}
     
%

      \begin{enumerate}

      \item For all models and algorithms presented, check if you include:
      \begin{enumerate}
        \item A clear description of the mathematical setting, assumptions, algorithm, and/or model. [Yes]
        \item An analysis of the properties and complexity (time, space, sample size) of any algorithm. [No]
        \item (Optional) Anonymized source code, with specification of all dependencies, including external libraries. [Yes]
      \end{enumerate}

      \item For any theoretical claim, check if you include:
      \begin{enumerate}
        \item Statements of the full set of assumptions of all theoretical results. [Yes]
        \item Complete proofs of all theoretical results. [Yes]
        \item Clear explanations of any assumptions. [Yes]     
      \end{enumerate}

      \item For all figures and tables that present empirical results, check if you include:
      \begin{enumerate}
        \item The code, data, and instructions needed to reproduce the main experimental results (either in the supplemental material or as a URL). [Yes]
        \item All the training details (e.g., data splits, hyperparameters, how they were chosen). [Yes]
              \item A clear definition of the specific measure or statistics and error bars (e.g., with respect to the random seed after running experiments multiple times). [Yes]
              \item A description of the computing infrastructure used. (e.g., type of GPUs, internal cluster, or cloud provider). [Yes]
      \end{enumerate}
     
      \item If you are using existing assets (e.g., code, data, models) or curating/releasing new assets, check if you include:
      \begin{enumerate}
        \item Citations of the creator If your work uses existing assets. [Yes]
        \item The license information of the assets, if applicable. [Not Applicable]
        \item New assets either in the supplemental material or as a URL, if applicable. [Not Applicable]
        \item Information about consent from data providers/curators. [Yes]
        \item Discussion of sensible content if applicable, e.g., personally identifiable information or offensive content. [Not Applicable]
      \end{enumerate}
     
      \item If you used crowdsourcing or conducted research with human subjects, check if you include:
      \begin{enumerate}
        \item The full text of instructions given to participants and screenshots. [Not Applicable]
        \item Descriptions of potential participant risks, with links to Institutional Review Board (IRB) approvals if applicable. [Not Applicable]
        \item The estimated hourly wage paid to participants and the total amount spent on participant compensation. [Not Applicable]
      \end{enumerate}
     
      \end{enumerate}

\newpage
\appendix
\onecolumn

\renewcommand{\theequation}{\thesection.\arabic{equation}}
\setcounter{equation}{0}

\section{PROOFS}\label{app:proofs}
\subsection{Proof of Proposition \ref{prop:dkpp_bm}}
\begin{proof}
    Let $\mathcal{A} \subseteq \mathcal{Y}$ be a random subset following the DKPP $P_\phi(\mathcal{A}; \bm{L})$,
    and let $\bm{z} = (z_1, \ldots, z_N)^\top$ denote the indicator vector of $\mathcal{A}$:
    \begin{align}
        z_i =
        \left \{~
        \begin{aligned}
            0 & \quad\mbox{if}~ i \notin \mathcal{A},\\
            1 & \quad\mbox{if}~ i \in \mathcal{A},\\
        \end{aligned}
        \right .
    \end{align}
    for $i = 1, \ldots, N$.
    Then, we obtain
    \begingroup
    \allowdisplaybreaks
    \begin{align}
        P_\phi(\mathcal{A}; \bm{L}) &\propto \exp \tr \phi(\bm{L}[\mathcal{A}])\\
        &= \exp \tr \left ( a \bm{L}[\mathcal{A}]\bm{L}[\mathcal{A}]  + b \bm{L}[\mathcal{A}] + c\bm{I} \right )\\
        &= \exp \left ( a \tr \bm{L}[\mathcal{A}]\bm{L}[\mathcal{A}] + b \tr \bm{L}[\mathcal{A}] + c \lvert \mathcal{A} \rvert \right )\\
        &= \exp \left ( a \lVert \bm{L}[\mathcal{A}] \rVert^2_F + b \tr \bm{L}[\mathcal{A}] + c \lvert \mathcal{A} \rvert \right )\\
        &= \exp \left ( a \sum_{i,j \in \mathcal{A}} \lvert L_{ij} \rvert^2 + \sum_{i \in \mathcal{A}} (b L_{ii} + c) \right )\\
        &= \exp \left ( a \sum^N_{i,j = 1} \lvert L_{ij} \rvert^2 z_i z_j + \sum^N_{i = 1} (b L_{ii} + c) z_i \right )\\
        &= \exp \left ( a \sum_{i \neq j} \lvert L_{ij} \rvert^2 z_i z_j + a\sum^N_{i=1} L^2_{ii} z^2_i + \sum^N_{i = 1} (b L_{ii} + c) z_i \right )\\
        &= \exp \left ( a \sum_{i \neq j} \lvert L_{ij} \rvert^2 z_i z_j + a\sum^N_{i=1} L^2_{ii} z_i + \sum^N_{i = 1} (b L_{ii} + c) z_i  \right )\\
        \label{eq:proof1}
        &= \exp \left ( a \sum_{i \neq j} \lvert L_{ij} \rvert^2 z_i z_j + \sum^N_{i=1} (a L^2_{ii} + b L_{ii} + c) z_i \right ).
    \end{align}
    \endgroup
    By comparing \eqref{eq:proof1} and \eqref{eq:boltzmann_machine}, we prove the proposition.
\end{proof}

\subsection{Proof of Proposition \ref{prop:affine_logmod}}
\begin{proof}
    \ \\
    \textsf{[$\phi$ is affine$~\Longrightarrow~$DKPP is log-modular]}\par
    We denote the diagonal matrix with the eigenvalues of $\bm{L}[\mathcal{A}]$ by $\bm{\Lambda}_\mathcal{A}$.
    By letting $\phi(x) = bx + c$ from the assumption, we find that the log-likelihood of the DKPP is
    \begin{align}
        \log P_\phi(\mathcal{A}; \bm{L}) = \tr\phi(\bm{L}[\mathcal{A}]) - \log Z_\phi(\bm{L})
        &= \tr(b \bm{\Lambda}_\mathcal{A} + c\bm{I}) - \log Z_\phi(\bm{L})\\
        &= b \tr\bm{\Lambda}_\mathcal{A} + c \lvert \mathcal{A} \rvert - \log Z_\phi(\bm{L})\\
        &= b \tr\bm{L}_\mathcal{A} + c \lvert \mathcal{A} \rvert - \log Z_\phi(\bm{L})\\
        \label{eq:dkpp_linear_loglikelihood}
        &= \sum_{i \in \mathcal{A}} (b L_{ii} + c) - \log Z_\phi(\bm{L}).
    \end{align}
    This is a modular function for $\mathcal{A} \subseteq \mathcal{Y}$.
    
    \textsf{[DKPP is log-modular $~\Longrightarrow~$ $\phi$ is affine]}\par
    The goal of this proof is show the affinity of the single-variable function $\phi: \mathbb{R}_{\geq 0} \to \mathbb{R}$.
    From the log-modularity of the DKPP, we have
    \begin{align}
        \label{eq:dkpp_logmodularity}
        \tr\phi(\bm{L}[\mathcal{S}]) + \tr\phi(\bm{L}[\mathcal{T}])
        = \tr\phi(\bm{L}[\mathcal{S} \cup \mathcal{T}]) + \tr\phi(\bm{L}[\mathcal{S} \cap \mathcal{T}])
    \end{align}
    for every $\mathcal{S}, \mathcal{T} \subseteq \mathcal{Y}$.
    By defining $g(x) \coloneqq \phi(x) - \phi(0)$,
    we can represent $\phi$ as $\phi(x) = g(x) + \phi(0)$.
    Note that $g$ is continuous because of the continuity of $\phi$, assumed in Definition \ref{def:dkpp}.
    Now, we take $\mathcal{S} = \{i\}, \mathcal{T} = \{j\}~ (i \neq j)$ and
    the kernel matrix $\bm{L}$ arbitrarily to satisfy $L_{ij} = \sqrt{L_{ii} L_{jj}} \in \mathbb{R}$.
    Since $\bm{L}$ and $\phi$ are independent, this choice of $\bm{L}$ does not impose any restriction on $\phi$.
    
    The two eigenvalues of $\bm{L}[\mathcal{S} \cup \mathcal{T}] \in \mathbb{R}^{2 \times 2}$ are given by $\lambda_1 = L_{ii} + L_{jj}$ and $\lambda_2 = 0$.
    The l.h.s. of \eqref{eq:dkpp_logmodularity} is
    \begin{align}
        \label{eq:logmodular_linear_lhs}
        \tr\phi(\bm{L}[\mathcal{S}]) + \tr\phi(\bm{L}[\mathcal{T}])
        = \phi(L_{ii}) + \phi(L_{jj}),
    \end{align}
    and the r.h.s. is
    \begin{align}
        \tr\phi(\bm{L}[\mathcal{S} \cup \mathcal{T}]) + \tr\phi(\bm{L}[\mathcal{S} \cap \mathcal{T}])
        = \tr\phi(\bm{L}[\mathcal{S} \cup \mathcal{T}])
        = \phi(\lambda_1) + \phi(\lambda_2).
    \end{align}
    Therefore, we have the identity
    \begin{align}
        \label{eq:logmodular_linear}
        \phi(L_{ii}) + \phi(L_{jj})
        = \phi(\lambda_1) + \phi(\lambda_2).
    \end{align}
    By substituting $\phi(x) = g(x) + \phi(0)$ into \eqref{eq:logmodular_linear}, it becomes
    \begin{align}
        & \{g(L_{ii}) + \phi(0)\} + \{g(L_{jj}) + \phi(0)\}
        = \{g(\lambda_1) + \phi(0)\} + \{g(\lambda_2) + \phi(0)\}\\
        &\quad \Longleftrightarrow~ g(L_{ii}) + g(L_{jj}) = g(\lambda_1) + g(\lambda_2) = g(L_{ii} + L_{jj}) + g(0) = g(L_{ii} + L_{jj}).
    \end{align}
    Because $L_{ii}$ and $L_{jj}$ are arbitrary on $\mathbb{R}_{\geq 0}$, we have
    $g(x) + g(y) = g(x + y)$ for all $x, y \in \mathbb{R}_{\geq 0}$.
    From the continuity of $g$, this means that $g$ must be linear and $\phi$ must be affine.
\end{proof}

\subsection{Proof of Proposition \ref{prop:marginal_rao_blackwell}}
\begin{proof}
We consider the partition of $\mathcal{Y}$ given by
$\mathcal{A}_{\mathrm{sub}}, \mathcal{Y} \backslash \mathcal{A}_{\mathrm{sup}}, \mathcal{A}_{\mathrm{sup}} \backslash \mathcal{A}_{\mathrm{sub}}$
and
separate the random vector $\bm{\xi} = (\xi_1, \ldots, \xi_N)^\top$ by
$
\bm{\xi}^+ \coloneqq (\xi_i)_{i \in \mathcal{A}_{\mathrm{sub}}},
\bm{\xi}^- \coloneqq (\xi_i)_{i \in \mathcal{Y} \backslash \mathcal{A}_{\mathrm{sup}}}
$, and
$\bm{\xi}^\circ \coloneqq (\xi_i)_{i \in \mathcal{A}_{\mathrm{sup}} \backslash \mathcal{A}_{\mathrm{sub}}}$.
For example, when
$\mathcal{Y} = \{1, 2, 3, 4, 5\}, \mathcal{A}_{\mathrm{sub}} = \{1, 2\}$,
and $\mathcal{A}_{\mathrm{sup}} = \{1, 2, 4, 5\} (\supseteq \mathcal{A}_{\mathrm{sub}})$,
the separation of $\bm{\xi}$ becomes
$\bm{\xi}^+ = (\xi_1, \xi_2)^\top, \bm{\xi}^- = (\xi_3)$, and $\bm{\xi}^\circ = (\xi_4, \xi_5)^\top$.
We also denote
$N^+ \coloneqq \lvert \mathcal{A}_{\mathrm{sub}} \rvert,~
N^- \coloneqq \lvert \mathcal{Y} \backslash \mathcal{A}_{\mathrm{sup}} \rvert$,
and
$N^\circ \coloneqq \lvert \mathcal{A}_{\mathrm{sup}} \backslash \mathcal{A}_{\mathrm{sub}} \rvert$
and
$\bm{\xi}^+ \sim Q_{\bm{q}^+},\bm{\xi}^- \sim Q_{\bm{q}^-}$,
and $\bm{\xi}^\circ \sim Q_{\bm{q}^\circ}$,
where
$Q_{\bm{q}^+} \coloneqq \prod_{i \in \mathcal{A}_{\mathrm{sub}}} Q_{q_i}(\xi_i),
Q_{\bm{q}^-} \coloneqq \prod_{i \in \mathcal{Y} \backslash \mathcal{A}_{\mathrm{sup}}} Q_{q_i}(\xi_i)$,
and 
$Q_{\bm{q}^\circ} \coloneqq \prod_{i \in \mathcal{A}_{\mathrm{sup}} \backslash \mathcal{A}_{\mathrm{sub}}} Q_{q_i}(\xi_i)$.
We denote the probability measure that induces $Q_{\bm{q}^+}$ by $\mathbb{Q}_{\bm{q}^+}$ and the same for $\mathbb{Q}_{\bm{q}^-}$ and $\mathbb{Q}_{\bm{q}^\circ}$.

As shown in \eqref{eq:marginal_is}, 
$
    \mathbb{E}_{\bm{\xi} \sim Q_{\bm{q}}}[w(\mathcal{A}_{\bm{\xi}}) \mathds{1} (\mathcal{A}_{\mathrm{sub}} \subseteq \mathcal{A}_{\bm{\xi}} \subseteq \mathcal{A}_{\mathrm{sup}})]
$
is equal to
$\mathbb{P}(\mathcal{A}_{\mathrm{sub}} \subseteq \mathcal{A} \subseteq \mathcal{A}_{\mathrm{sup}})$.
The tower property of expectation states that
$\mathbb{E}_X[f(X)] = \mathbb{E}_Y [\mathbb{E}_X [f(X) \vert Y]]$ generally holds
for an arbitrary pair of random variables $(X, Y)$ and an arbitrary function $f$.
By choosing $X \gets \bm{\xi}$ and
$Y \gets \bm{\xi}^+$ (and $Y \gets \bm{\xi}^-$), we have
\begin{align}
    \mathbb{P}(\mathcal{A}_{\mathrm{sub}} \subseteq \mathcal{A} \subseteq \mathcal{A}_{\mathrm{sup}})
    &= \mathbb{E}_{\bm{\xi} \sim Q_{\bm{q}}}[w(\mathcal{A}_{\bm{\xi}})
    \mathds{1} (\mathcal{A}_{\mathrm{sub}} \subseteq \mathcal{A}_{\bm{\xi}} \subseteq
    \mathcal{A}_{\mathrm{sup}})]\\
    &= \mathbb{E}_{\bm{\xi}^+}[\mathbb{E}_{\bm{\xi}^-, \bm{\xi}^\circ}[w(\mathcal{A}_{\bm{\xi}})
    \mathds{1} (\mathcal{A}_{\mathrm{sub}} \subseteq \mathcal{A}_{\bm{\xi}} \subseteq
    \mathcal{A}_{\mathrm{sup}}) \vert \bm{\xi}^+]]\\
    &= \sum_{z_1 \in \{0, 1\}} \cdots \sum_{z_{N^+} \in \{0, 1\}}
    \mathbb{Q}_{\bm{q}^+}(\xi^+_1 = z_1, \ldots, \xi^+_{N^+} = z_{N^+})\\
    &\hphantom{
        \sum_{z_1 \in \{0, 1\}} \cdots \sum_{z_{N^+} \in \{0, 1\}}
    }
    \times\mathbb{E}_{\bm{\xi}^-, \bm{\xi}^\circ}[w(\mathcal{A}_{\bm{\xi}})
    \underbrace{
    \mathds{1} (\mathcal{A}_{\mathrm{sub}} \subseteq \mathcal{A}_{\bm{\xi}} \subseteq
    \mathcal{A}_{\mathrm{sup}})}_{\mathclap{\mbox{{\footnotesize takes $0$ if not $\xi^+_1 = \cdots =  \xi^+_{N^+} = 1$}}}}
    \vert \bm{\xi}^+ = \bm{z}]\\
    &= \mathbb{Q}_{\bm{q}^+}(\xi^+_1 = 1, \ldots, \xi^+_{N^+} = 1)
    \mathbb{E}_{\bm{\xi}^-, \bm{\xi}^\circ}[w(\mathcal{A}_{\bm{\xi}})
    \mathds{1} (\mathcal{A}_{\mathrm{sub}} \subseteq \mathcal{A}_{\bm{\xi}} \subseteq
    \mathcal{A}_{\mathrm{sup}}) \vert \bm{\xi}^+ = \bm{1}]\\
    &= Q_{\bm{q}^+}(\bm{1})
    \mathbb{E}_{\bm{\xi}^-}[
    \mathbb{E}_{\bm{\xi}^\circ}[w(\mathcal{A}_{\bm{\xi}})
    \mathds{1} (\mathcal{A}_{\mathrm{sub}} \subseteq \mathcal{A}_{\bm{\xi}} \subseteq
    \mathcal{A}_{\mathrm{sup}}) \vert \bm{\xi}^+ = \bm{1}, \bm{\xi}^-]]\\
    &= Q_{\bm{q}^+}(\bm{1})
    \sum_{z_1 \in \{0, 1\}} \cdots \sum_{z_{N^-} \in \{0, 1\}}
    \mathbb{Q}_{\bm{q}^-}(\xi^-_1 = z_1, \ldots, \xi^-_{N^-} = z_{N^-})\\
    &\hphantom{
        = Q_{\bm{q}^+}(\bm{1})
        \sum_{z_1 \in \{0, 1\}} \cdots \sum_{z_{N^-} \in \{0, 1\}}
    }\times
    \mathbb{E}_{\bm{\xi}^\circ}[w(\mathcal{A}_{\bm{\xi}})
    \underbrace{
    \mathds{1} (\mathcal{A}_{\mathrm{sub}} \subseteq \mathcal{A}_{\bm{\xi}} \subseteq
    \mathcal{A}_{\mathrm{sup}})}_{\mathclap{\mbox{{\footnotesize takes $0$ if not $\xi^-_1 = \cdots = \xi^-_{N^+} = 0$}}}}
    \vert \bm{\xi}^+ = \bm{1}, \bm{\xi}^- = \bm{z}]\\
    &=Q_{\bm{q}^+}(\bm{1}) Q_{\bm{q}^-}(\bm{0})
    \mathbb{E}_{\bm{\xi}^\circ}[w(\mathcal{A}_{\bm{\xi}})
    \mathds{1} (\mathcal{A}_{\mathrm{sub}} \subseteq \mathcal{A}_{\bm{\xi}} \subseteq
    \mathcal{A}_{\mathrm{sup}})
    \vert \bm{\xi}^+ = \bm{1}, \bm{\xi}^- = \bm{0}]\\
    &=Q_{\bm{q}^+}(\bm{1}) Q_{\bm{q}^-}(\bm{0})
    \mathbb{E}_{\bm{\xi}^\circ}[w(\mathcal{A}_{\bm{\xi}})
    \vert \bm{\xi}^+ = \bm{1}, \bm{\xi}^- = \bm{0}]\\
    &=Q_{\bm{q}^+}(\bm{1}) Q_{\bm{q}^-}(\bm{0})
    \mathbb{E}_{\bm{\xi}^\circ} \left [ \left .
    \frac{P(\mathcal{A}_{\bm{\xi}})}{Q_{\bm{q}}(\bm{\xi})}
    \right \vert \bm{\xi}^+ = \bm{1}, \bm{\xi}^- = \bm{0} \right ]\\
    &= \mathbb{E}_{\bm{\xi}^\circ} \left [ \left .
    \frac{P(\mathcal{A}_{\bm{\xi}})}{Q_{\bm{q}^\circ}(\bm{\xi}^\circ)}
    \right \vert \bm{\xi}^+ = \bm{1}, \bm{\xi}^- = \bm{0} \right ],
\end{align}
which is the first half of Proposition \ref{prop:marginal_rao_blackwell}.

On the other hand, the tower property of variance states that
$
\mathrm{Var}_X[f(X)] = \mathbb{E}_Y[\mathrm{Var}_X[f(X) \vert Y]] + \mathrm{Var}_Y[\mathbb{E}_X[f(X) \vert Y]]
\leq \mathrm{Var}_Y[\mathbb{E}_X[f(X) \vert Y]].
$
This ensures the latter half of Proposition \ref{prop:marginal_rao_blackwell}.
\end{proof}

\subsection{Proof of Proposition \ref{prop:marginal_cardinality}}
\begin{proof}
    First, we define independent Bernoulli trials
    $Q_{\bm{q}}$ as in \eqref{eq:variational_distribution}
    with $q_1 = \cdots = q_N \eqqcolon q$.
    By importance sampling, we have
    \begin{align}
        \mathbb{P}(\lvert \mathcal{A} \rvert = k)
        = \sum_{\mathcal{A}: \lvert \mathcal{A} \rvert = k} P (\mathcal{A})
        = \mathbb{E}_{\mathcal{A} \sim P}[\mathds{1} (\lvert \mathcal{A} \rvert = k)]
        \label{eq:marginal_ls_cardinality_1}
        = \mathbb{E}_{\bm{\xi} \sim Q_{\bm{q}}}[w(\mathcal{A}_{\bm{\xi}}) \mathds{1} (\lvert \mathcal{A}_{\bm{\xi}} \rvert = k)]
    \end{align}
    in the similar way to \eqref{eq:marginal_is},
    where $w(\mathcal{A}_{\bm{\xi}}) = P(\mathcal{A}_{\bm{\xi}}) / Q_{\bm{q}}(\bm{\xi})$
    is the weight function.
    Then, we introduce a new random variable,
    $\zeta \coloneqq \sum^N_{i=1} \xi_i$,
    that follows the binomial distribution: $\xi \sim \mathrm{Bin}(N, q)$.
    Now, we consider the Rao--Blackwellization of \eqref{eq:marginal_ls_cardinality_1} by the auxiliary random variable $\zeta$:
    \begin{align}
        \mathbb{P}(\lvert \mathcal{A} \rvert = k)
        &= \mathbb{E}_{\bm{\xi}}[w(\mathcal{A}_{\bm{\xi}}) \mathds{1} (\lvert \mathcal{A}_{\bm{\xi}} \rvert = k)]\\
        &= \mathbb{E}_{\zeta} \left [\mathbb{E}_{\bm{\xi}} \left [w(\mathcal{A}_{\bm{\xi}}) \mathds{1} (\lvert \mathcal{A}_{\bm{\xi}} \rvert = k) \left \vert \sum^N_{i=1} \xi_i = \zeta \right . \right ] \right ]\\
        &= \sum^N_{n = 0} \mathbb{P}(\zeta = n) \mathbb{E}_{\bm{\xi}} \left [ w(\mathcal{A}_{\bm{\xi}}) 
        \underbrace{\mathds{1} (\lvert \mathcal{A}_{\bm{\xi}} \rvert = k)}_{\mathclap{\mbox{{\footnotesize takes 0 if $\zeta \neq k$}}}}
        \left \vert \sum^N_{i=1} \xi_i = n \right . \right ]\\
        &= \mathbb{P}(\zeta = k) \mathbb{E}_{\bm{\xi}}\left [w(\mathcal{A}_{\bm{\xi}}) \left \vert \sum^N_{i=1} \xi_i = k \right . \right ]\\
        &= \binom{N}{k} q^{k} (1-q)^{N-k} \mathbb{E}_{\bm{\xi}} \left [ \frac{P(\mathcal{A}_{\bm{\xi}})}{Q_{\bm{q}}(\mathcal{A}_{\bm{\xi}})} \left \vert \sum^N_{i=1} \xi_i = k \right .\right ]\\
        &= \binom{N}{k} q^{k} (1-q)^{N-k} \mathbb{E}_{\bm{\xi}} \left [ \frac{P(\mathcal{A}_{\bm{\xi}})}{q^k (1-q)^{N-k}} \left \vert \sum^N_{i=1} \xi_i = k \right .\right ]\\
        \label{eq:marginal_ls_cardinality}
        &= \binom{N}{k} \mathbb{E}_{\bm{\xi}} \left [ P(\mathcal{A}_{\bm{\xi}}) \left \vert \sum^N_{i=1} \xi_i = k \right . \right ].
    \end{align}
    Because $\xi_1, \ldots, \xi_N$ are i.i.d. such that $\xi_i \sim \mathrm{Bernoulli}(q)$,
    the conditional expectation in the r.h.s. of \eqref{eq:marginal_ls_cardinality}
    equals to the expectation over the uniform distribution on $\{\mathcal{A} \subseteq \mathcal{Y} : \lvert \mathcal{A} \rvert = k\}$.
\end{proof}

\section{MEAN-FIELD APPROXIMATION}\label{app:mean_field}
We derive the update rule of the mean-field approximation \eqref{eq:mean_field_update} for completeness.
It is recommended to also refer to the thesis by \citet[Section 3]{djolonga2018} since the derivation is equivalent.
Let $\bm{\xi} \in \{0, 1\}^N$ denote a binary random vector,
$f: 2^\mathcal{Y} \to \mathbb{R}$ be a set function,
and $P: \bm{\xi} \mapsto Z^{-1} \exp (f(\mathcal{A}_{\bm{\xi}}))$ be a probabilistic function on $\{0, 1\}^N$, or equivalently $2^\mathcal{Y}$.
Now, we consider minimizing $\mathrm{KL}(Q_{\bm{q}} \Vert P)$, where $Q_{\bm{q}}$ is defined in \eqref{eq:variational_distribution}.
Given
\begin{align}
    \mathrm{KL}(Q_{\bm{q}} \Vert P)
    = \mathbb{E}_{\bm{\xi} \sim Q_{\bm{q}}}\left [ \log \frac{Q_{\bm{q}} (\bm{\xi})}{P(\bm{\xi})} \right ]
    = \mathbb{E}_{\bm{\xi} \sim Q_{\bm{q}}} [ \log Q_{\bm{q}} (\bm{\xi}) ]
    - \mathbb{E}_{\bm{\xi} \sim Q_{\bm{q}}}[ f(\mathcal{A}_{\bm{\xi}}) ] + \log Z,
\end{align}
the minimization of $\mathrm{KL}(Q_{\bm{q}} \Vert P)$ is equivalent to maximizing the ELBO,
defined as
\begin{align}
    L(\bm{q}) \coloneqq
    -\mathbb{E}_{\bm{\xi} \sim Q_{\bm{q}}} [ \log Q_{\bm{q}} (\bm{\xi}) ]
    + \mathbb{E}_{\bm{\xi} \sim Q_{\bm{q}}}[ f(\mathcal{A}_{\bm{\xi}}) ]
    ~(= \mathbb{H}[Q_{\bm{q}}] + \tilde{f}(\bm{q})).
\end{align}
We solve the problem $\max_{\bm{q}} L(\bm{q})$ by using the coordinate ascent.
The derivative $\frac{\partial L(\bm{q})}{\partial q_i}$ is 
\begin{align}
    \frac{\partial L(\bm{q})}{\partial q_i}
    &= \frac{\partial L(\bm{q})}{\partial q_i}
    \left \{
    \sum^N_{j=1} (-q_j \log q_j - (1 - q_j) \log (1 - q_j))
    + \sum_{\mathcal{A} \subseteq \mathcal{Y}} f(\mathcal{A}) \prod_{j \in \mathcal{A}} q_j \prod_{j \notin \mathcal{A}} (1 - q_j)
    \right \}\\
    &= \log \frac{1-q_i}{q_i}
    + \sum_{\mathcal{A}: i \in \mathcal{A}} f(\mathcal{A}) \prod_{\substack{j \in \mathcal{A}\\j \neq i}} q_j \prod_{j \notin \mathcal{A}} (1 - q_j)
    - \sum_{\mathcal{A}: i \notin \mathcal{A}} f(\mathcal{A}) \prod_{j \in \mathcal{A}} q_j \prod_{\substack{j \notin \mathcal{A} \\ j \neq i}} (1 - q_j)\\
    &= \log \frac{1-q_i}{q_i}
    + \sum_{\mathcal{A} \subseteq \mathcal{Y} \backslash \{i\}} [f(\mathcal{A} \cup \{i\}) - f(\mathcal{A})]
    \prod_{j \in \mathcal{A}} q_j \prod_{j \notin \mathcal{A}} (1 - q_j)\\
    &= \log \frac{1-q_i}{q_i} + \mathbb{E}_{\bm{\xi}_{\backslash i} \sim Q_{\bm{q}_{\backslash i}}} [f(i \vert \mathcal{A}_{\bm{\xi}_{\backslash i}})].
\end{align}
By solving the equation
\begin{align}
    \log \frac{1-q_i}{q_i} + \mathbb{E}_{\bm{\xi}_{\backslash i} \sim Q_{\bm{q}_{\backslash i}}} [f(i \vert \mathcal{A}_{\bm{\xi}_{\backslash i}})] = 0,
\end{align}
we obtain the update rule \eqref{eq:mean_field_update}.
We know $\mathrm{KL}(Q_{\bm{q}} \Vert P) \geq 0$, leading to the inequality $\log Z \geq L(\bm{q})$.
Therefore, we can evaluate the tightened lower bound of $\log Z$ using the optimized $\bm{q}$.


\section{GRADIENT OF RATIO MATCHING}\label{app:gradient}
In this section, we derive the gradient of the loss function from ratio matching \eqref{eq:ratio_matching_dkpp}
in analytical form.
By defining
$u_{m, n} \coloneqq \exp(\tr \phi(\bm{L}[\mathcal{A}_m]) - \tr\phi(\bm{L}[\mathcal{A}^{\bar{n}}_m] ))$,
we obtain
\begin{align}
    \frac{\partial J(\bm{L})}{\partial\bm{L}}
    = \sum_{m,n} \frac{d g(u_{m,n})^2}{d u_{m,n}} \frac{\partial u_{m,n}}{\partial \bm{L}}
    \label{eq:rm_gradient_1}
    = -2\sum_{m,n} \frac{g(u_{m,n})}{(1 + u_{m,n})^2} \frac{\partial u_{m,n}}{\partial \bm{L}}.
\end{align}
Here, $\bm{U}_\mathcal{A}$ denotes the $N \times \lvert \mathcal{A} \rvert$ binary matrix
such that $\bm{L}[\mathcal{A}] = \bm{U}^\top_\mathcal{A} \bm{L} \bm{U}_\mathcal{A}$.
Then, 
\begin{align}
    \frac{\partial}{\partial \bm{L}} \tr \phi(\bm{L}[\mathcal{A}])
    = \bm{U}^\top_\mathcal{A} \phi' (\bm{L}[\mathcal{A}]) \bm{U}_\mathcal{A},
\end{align}
where $\phi'$ is the derivative of the univariate scalar function $\phi$.
Therefore, the remaining term in \eqref{eq:rm_gradient_1} becomes
\begin{align}
   \frac{\partial u_{m,n}}{\partial \bm{L}}
   = u_{m,n} (\bm{U}^\top_{\mathcal{A}_m} \phi' (\bm{L}[\mathcal{A}_m]) \bm{U}_{\mathcal{A}_m}
   - \bm{U}^\top_{\mathcal{A}^{\bar{n}}_m} \phi' (\bm{L}[\mathcal{A}^{\bar{n}}_m]) \bm{U}_{\mathcal{A}^{\bar{n}}_m}).
\end{align}
Consequently, the derivative we seek is
\begin{align}
    \label{eq:rm_gradient_L}
    \frac{\partial J(\bm{L})}{\partial\bm{L}}
    = -2\sum_{m,n} \frac{u_{m,n} g(u_{m,n})}{(1 + u_{m,n})^2}
    (\bm{U}^\top_{\mathcal{A}_m} \phi' (\bm{L}[\mathcal{A}_m]) \bm{U}_{\mathcal{A}_m}
    - \bm{U}^\top_{\mathcal{A}^{\bar{n}}_m} \phi' (\bm{L}[\mathcal{A}^{\bar{n}}_m]) \bm{U}_{\mathcal{A}^{\bar{n}}_m}).
\end{align}
For evaluating the gradient \eqref{eq:rm_gradient_L},
computing $u_{m,n}$ requires $\mathcal{O}(\lvert \mathcal{A}_m \rvert^3)$ time complexity,
and
$\bm{U}^\top_{\mathcal{A}_m} \phi' (\bm{L}[\mathcal{A}_m]) \bm{U}_{\mathcal{A}_m}$
also takes $\mathcal{O}(\lvert \mathcal{A}_m \rvert^3)$
because $\bm{U}_{\mathcal{A}_m}$ has only $\lvert \mathcal{A}_m \rvert$ non-zero elements.
Computing
$\bm{U}^\top_{\mathcal{A}^{\bar{n}}_m} \phi' (\bm{L}[\mathcal{A}^{\bar{n}}_m]) \bm{U}_{\mathcal{A}^{\bar{n}}_m}$
takes $\mathcal{O}((\lvert \mathcal{A}_m \rvert + 1)^3) =\mathcal{O}(\lvert \mathcal{A}_m \rvert^3)$.
By taking the complexity from the sum of $N \times N$ matrices into account,
we obtain the whole complexity
$\mathcal{O}(\sum_{(m,n) \in \Omega} \lvert \mathcal{A}_m \rvert^3 + \lvert \Omega \rvert N^2)
= \mathcal{O}(\lvert \Omega \rvert (\kappa^3 + N^2))$
with the minibatch $\Omega$.

In practical scenarios, $\bm{V} \in \mathbb{R}^{N \times D}$ such that $\bm{L} = \bm{V} \bm{V}^\top$
is often learned to keep $\bm{L}$ positive (semi-)definite in learning steps.
If $D < N$, the low-rank kernel matrix is obtained.
Then, the gradient with respect to $\bm{V}$ becomes
\begin{align}
    \frac{\partial J(\bm{L})}{\partial\bm{V}}
    &= 2\frac{\partial J(\bm{L})}{\partial\bm{L}} \bm{V}\\
    \label{eq:rm_gradient_V}
    &= -4\sum_{m,n} \frac{u_{m,n} g(u_{m,n})}{(1 + u_{m,n})^2}
    (\bm{U}^\top_{\mathcal{A}_m} \phi' (\bm{L}[\mathcal{A}_m]) \bm{U}_{\mathcal{A}_m} \bm{V}
    - \bm{U}^\top_{\mathcal{A}^{\bar{n}}_m} \phi' (\bm{L}[\mathcal{A}^{\bar{n}}_m]) \bm{U}_{\mathcal{A}^{\bar{n}}_m} \bm{V}).
\end{align}
Since $\bm{U}_{\mathcal{A}_m} \bm{V}$ is the $\lvert \mathcal{A}_m \rvert \times D$ dense matrix,
the time complexity of \eqref{eq:rm_gradient_V} is
$\mathcal{O}(\sum_{(m,n) \in \Omega}(\lvert \mathcal{A}_m \rvert^3 + D\lvert \mathcal{A}_m \rvert^2) + \lvert \Omega \rvert ND)
= \mathcal{O}(\lvert \Omega \rvert (\kappa^2 \max\{\kappa, D\} + ND))$.
The term $\mathcal{O}(\lvert \Omega \rvert ND)$ arises from matrix additions,
which still ensures scalability as $M$ and/or $N$ increases even if $D = N$.

\section{FURTHER EXPERIMENTS}\label{app:experiments}
\subsection{Subset Acquiring Experiment}
\begin{figure}[t]
    \centering
    \includegraphics[width = 0.75\columnwidth]{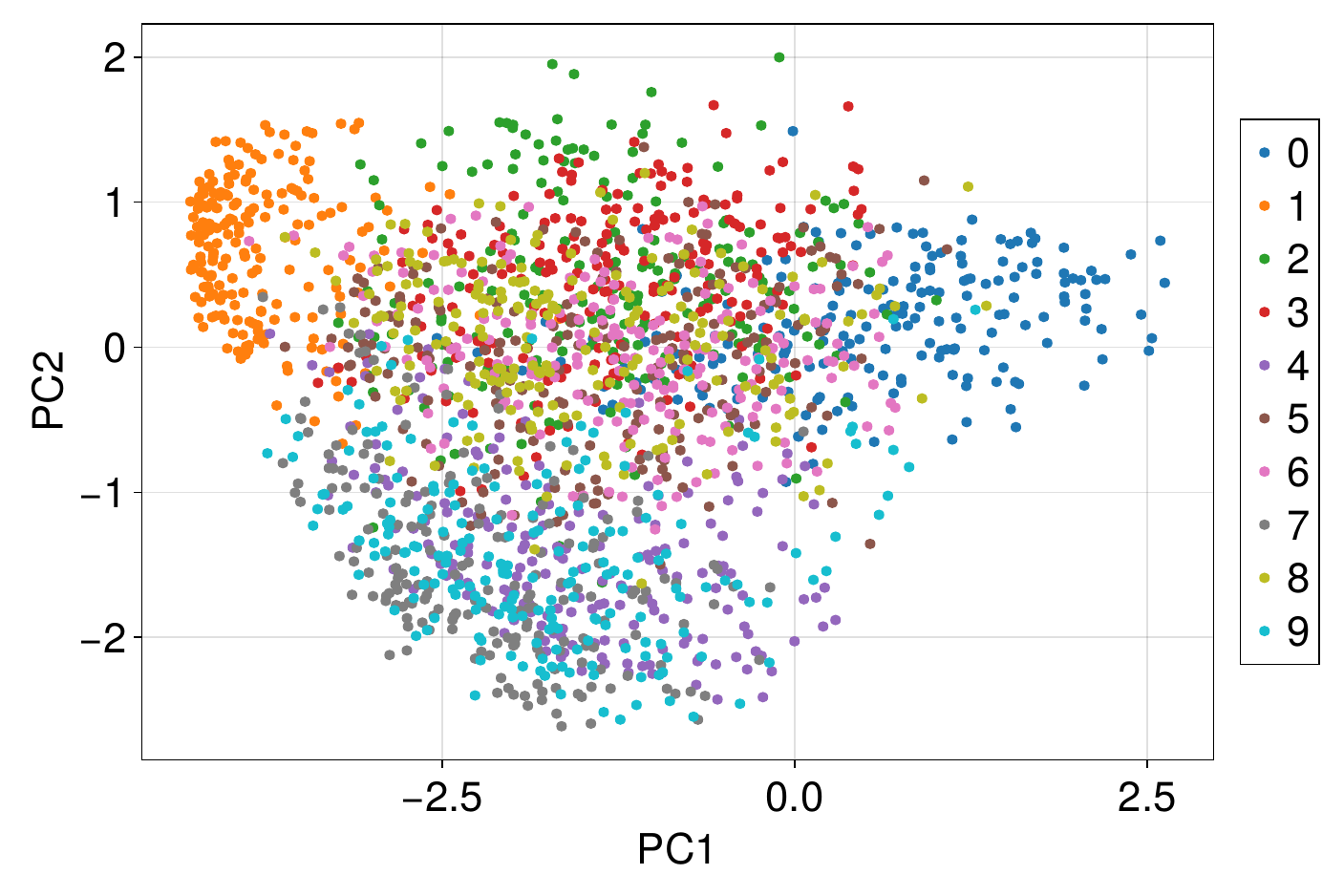}
    \caption{Kernel PCA for MNIST.}
    \label{fig:mnist_kernelpca}
\end{figure}

Here, we show additional results of the subset acquiring experiment in Section \ref{sec:experiments}.
As stated in Section \ref{sec:experiments}, the kernel matrix $\bm{L}$ is constructed from the Gaussian kernel with the bandwidth parameter determined by the median heuristic.
We can make the centered kernel matrix
$\tilde{\bm{L}} \coloneqq (\bm{I} - \bm{1}\bm{1}^\top / N) \bm{L}(\bm{I} - \bm{1}\bm{1}^\top / N)$
and apply kernel principal component analysis (PCA) to $\tilde{\bm{L}}$ \citep{scholkopf1997}.
Figure \ref{fig:mnist_kernelpca} shows the first and second principal components of the MNIST obtained by the kernel.
We can see that the kernel has a certain capability for class separation.

In Figures \ref{fig:beta_1}, \ref{fig:beta_10}, and \ref{fig:beta_50}, we show 10 randomly chosen acquired subsets for each $(\beta, \lambda) \in \{1, 10, 50\} \times \{0, 1, 2\}$.
It is visually evident that the attractive power increases as $\lambda$ and $\beta$ become larger.

\begin{figure}[t]
    \centering
    \begin{minipage}{0.32\columnwidth}
        \centering
        \includegraphics[width = 0.8\columnwidth]{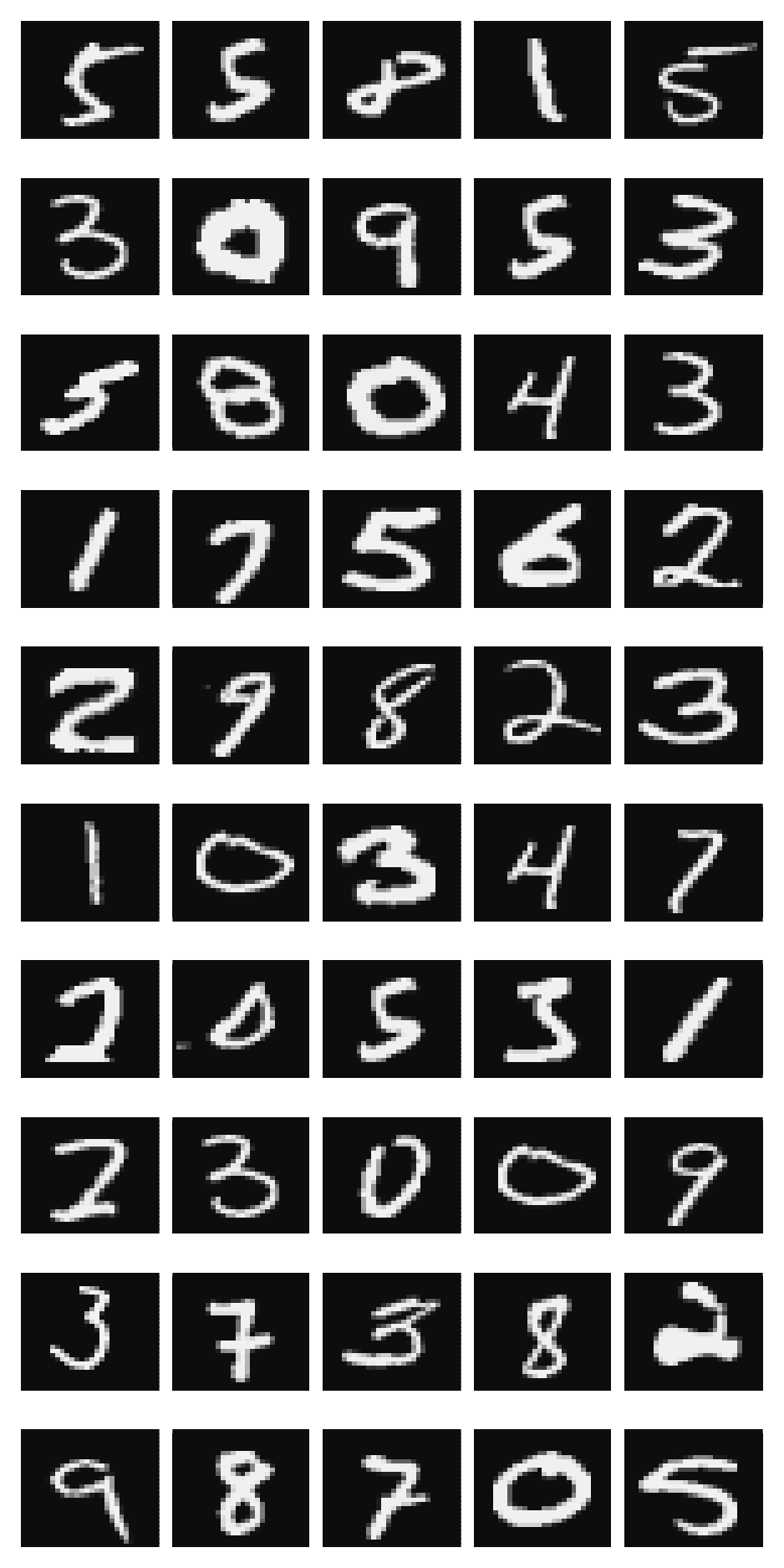}
        \subcaption{$\lambda = 0, \beta = 1$}
        \label{subfig:lambda_0_beta_1}
    \end{minipage}
    \begin{minipage}{0.32\columnwidth}
        \centering
        \includegraphics[width = 0.8\columnwidth]{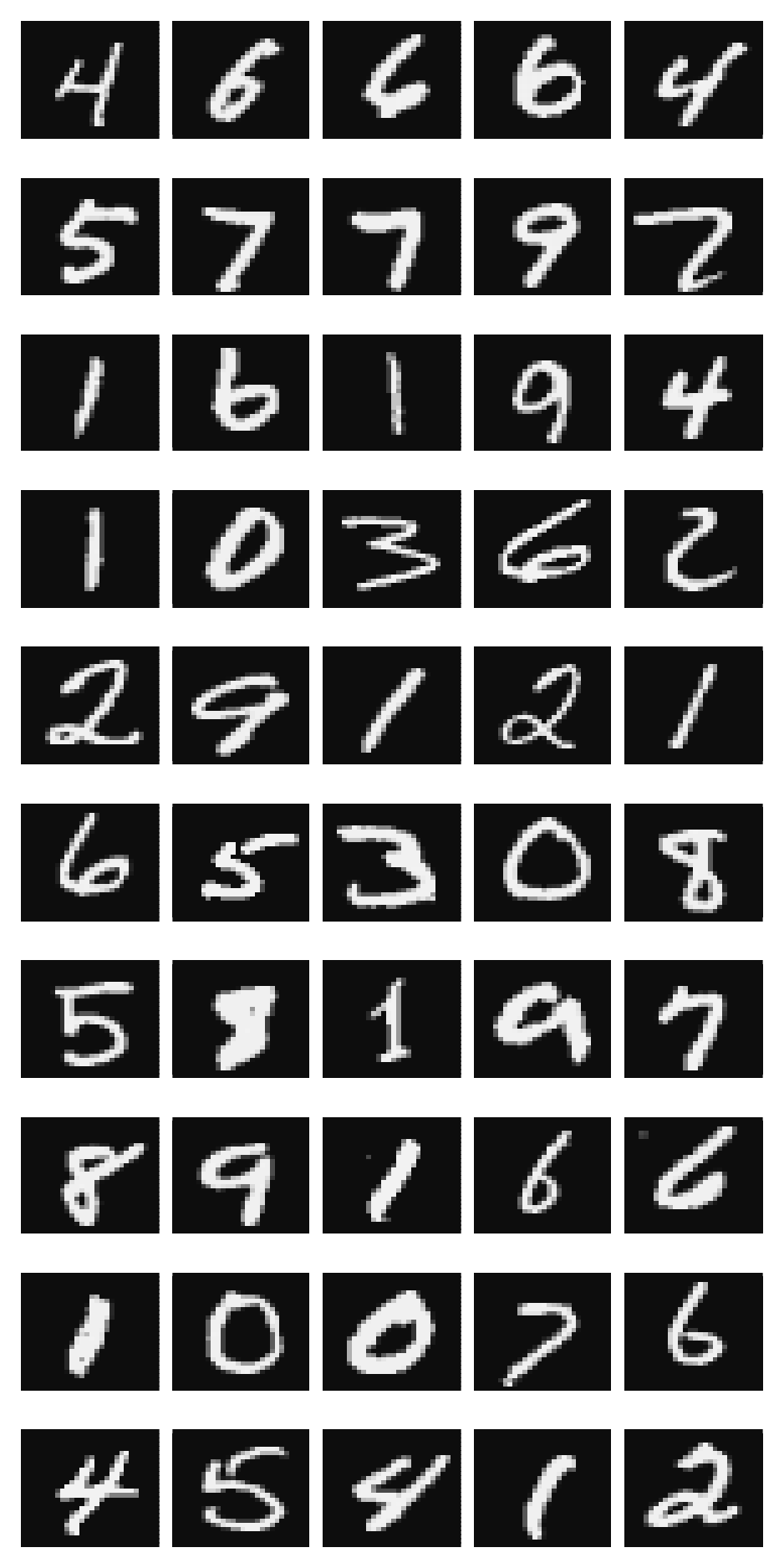}
        \subcaption{$\lambda = 1, \beta = 1$}
        \label{subfig:lambda_1_beta_1}
    \end{minipage}
    \begin{minipage}{0.32\columnwidth}
        \centering
        \includegraphics[width = 0.8\columnwidth]{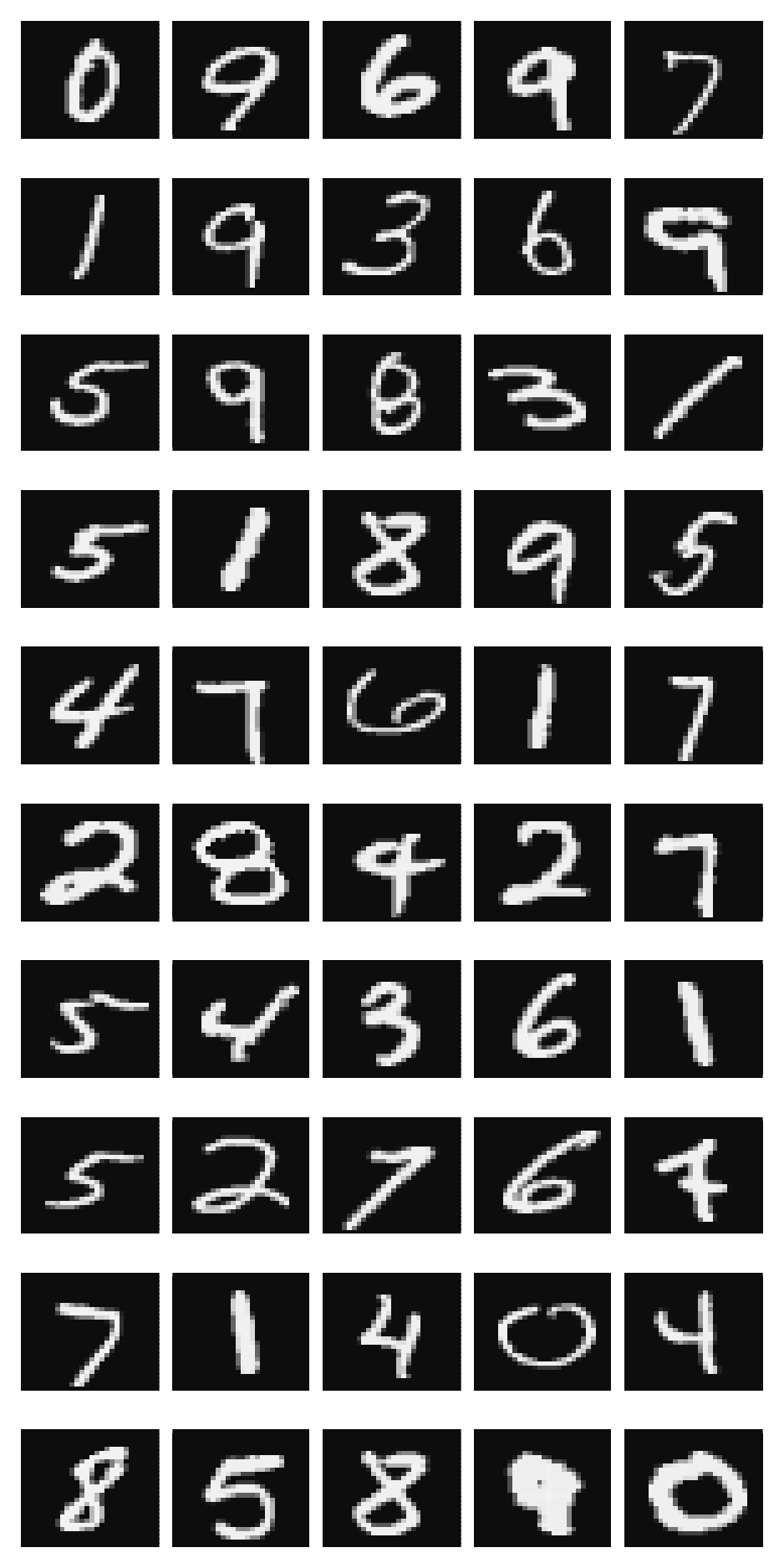}
        \subcaption{$\lambda = 2, \beta = 1$}
        \label{subfig:lambda_2_beta_1}
    \end{minipage}
    \caption{Examples of the acquired subsets of MNIST for $\beta = 1$.}
    \label{fig:beta_1}
\end{figure}

\begin{figure}[t]
    \centering
    \begin{minipage}{0.32\columnwidth}
        \centering
        \includegraphics[width = 0.8\columnwidth]{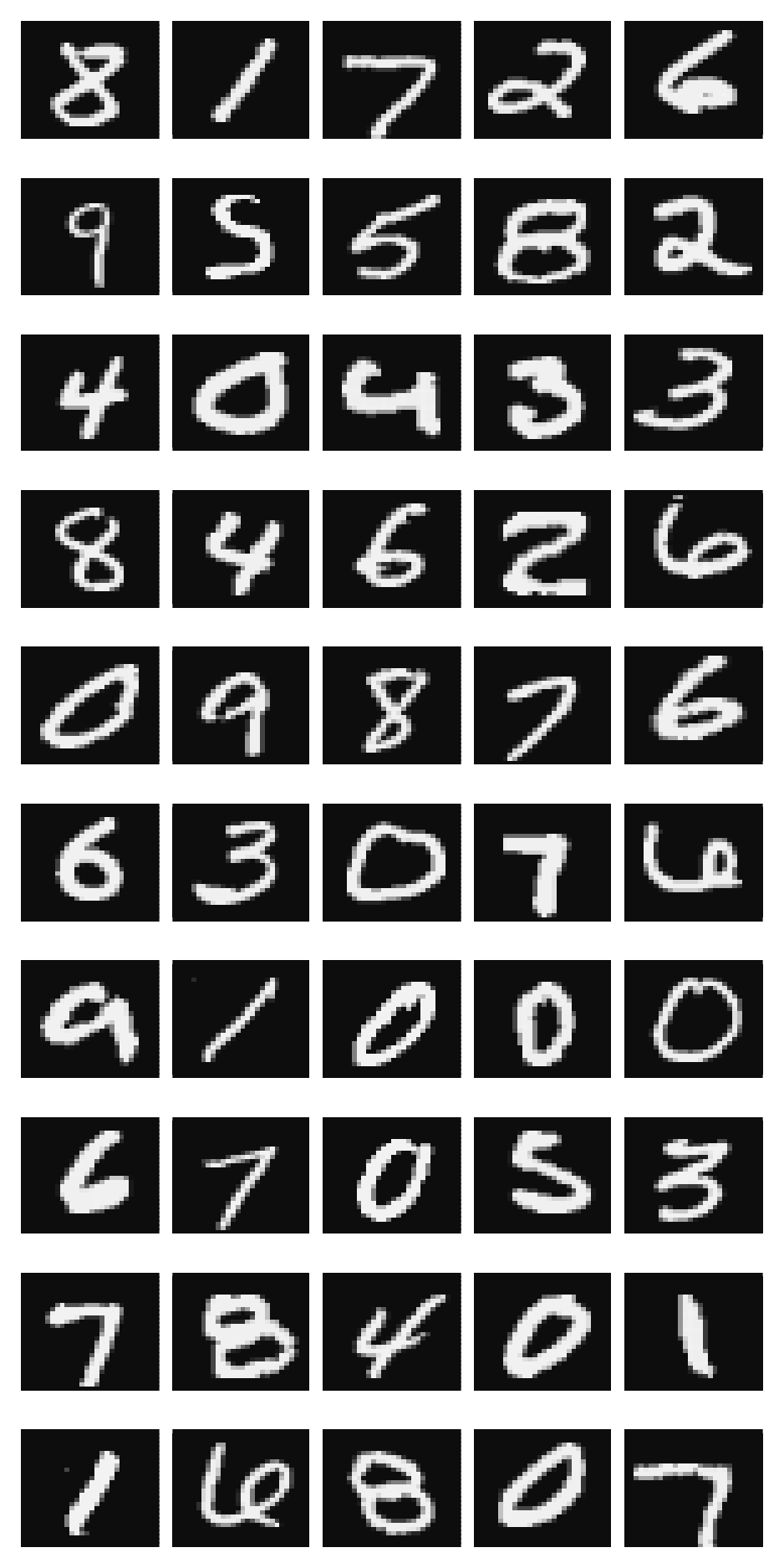}
        \subcaption{$\lambda = 0, \beta = 10$}
        \label{subfig:lambda_0_beta_10}
    \end{minipage}
    \begin{minipage}{0.32\columnwidth}
        \centering
        \includegraphics[width = 0.8\columnwidth]{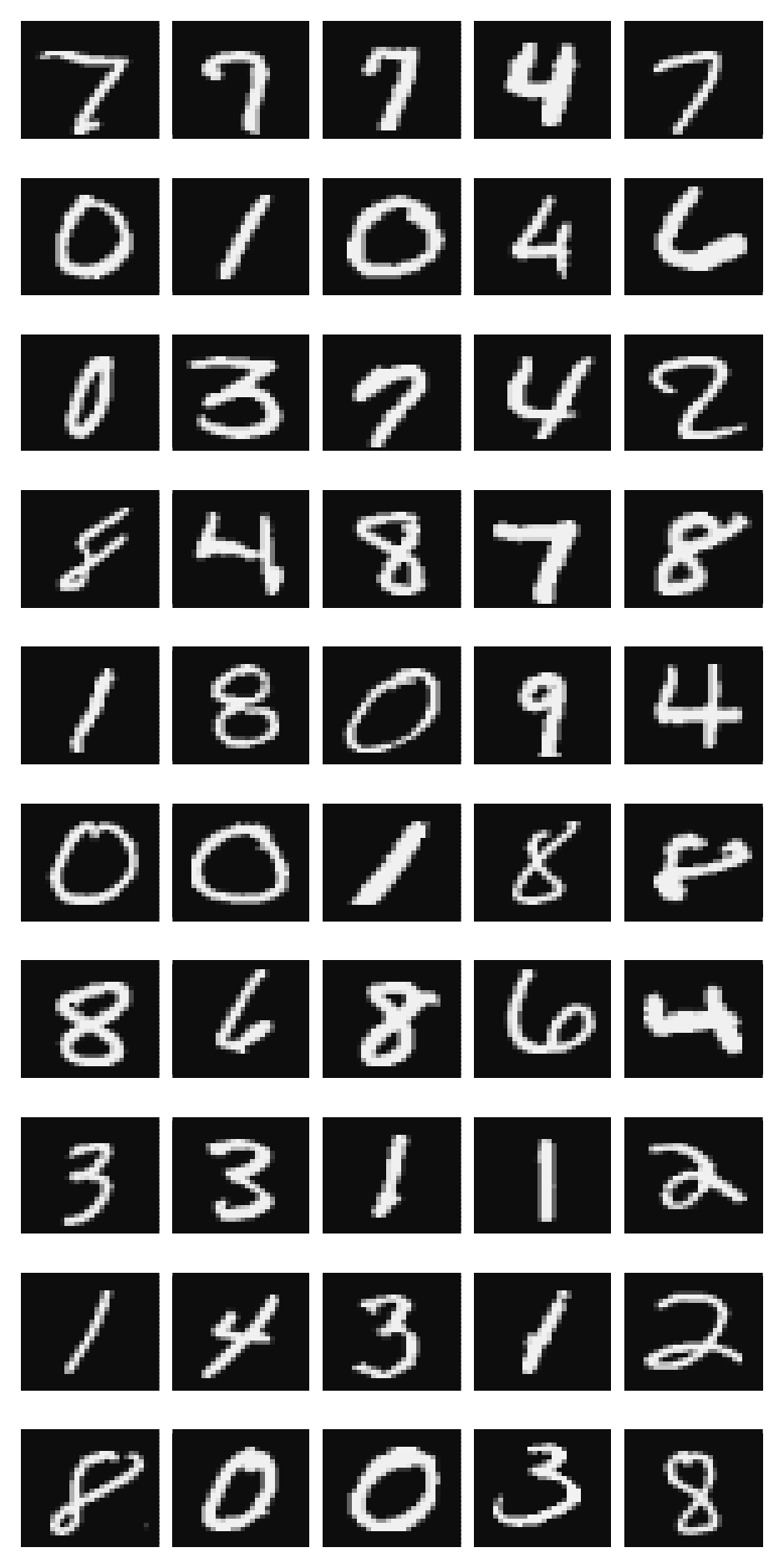}
        \subcaption{$\lambda = 1, \beta = 10$}
        \label{subfig:lambda_1_beta_10}
    \end{minipage}
    \begin{minipage}{0.32\columnwidth}
        \centering
        \includegraphics[width = 0.8\columnwidth]{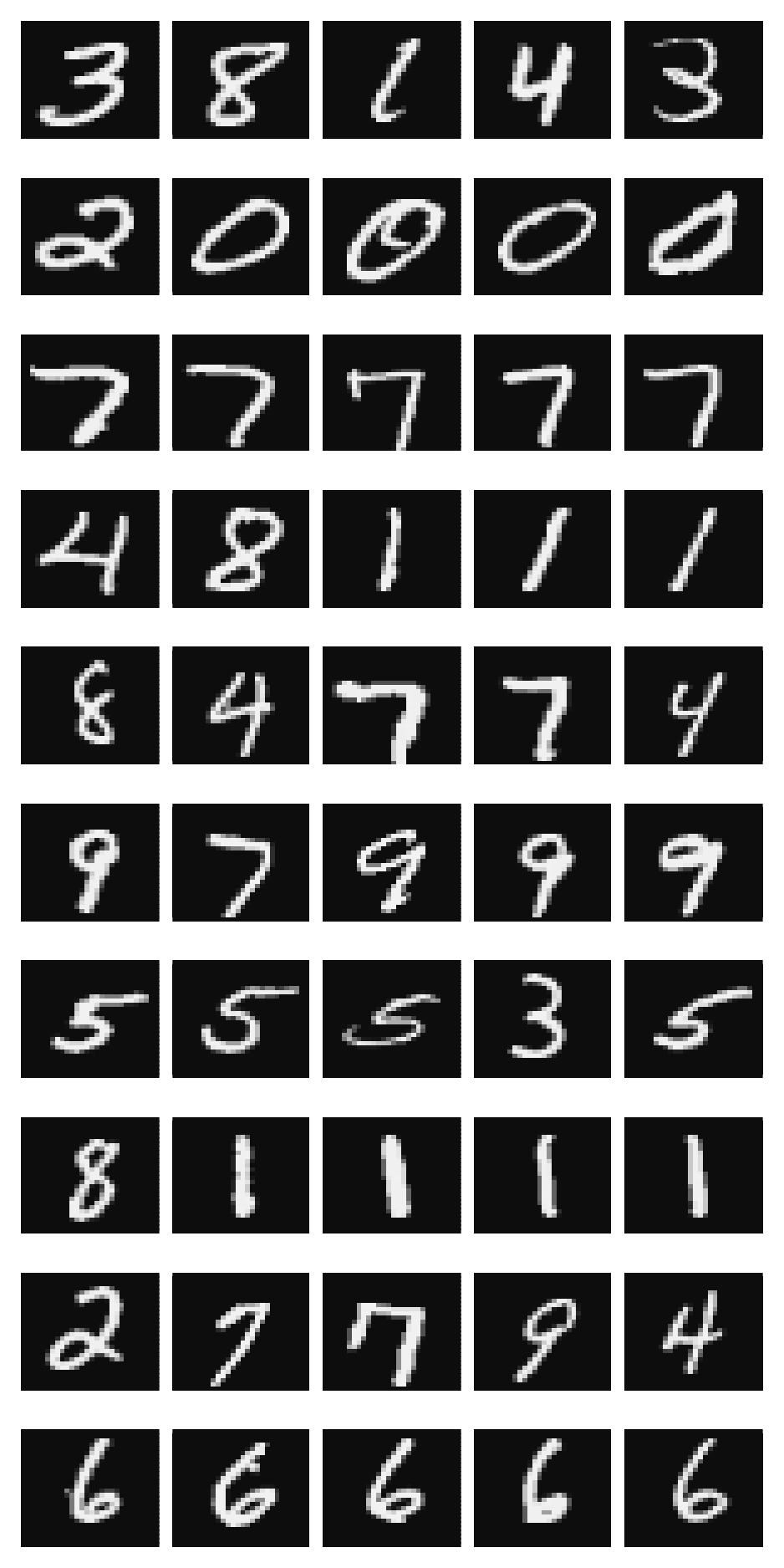}
        \subcaption{$\lambda = 2, \beta = 10$}
        \label{subfig:lambda_2_beta_10}
    \end{minipage}
    \caption{Examples of the acquired subsets of MNIST for $\beta = 10$.}
    \label{fig:beta_10}
\end{figure}

\begin{figure}[t]
    \centering
    \begin{minipage}{0.32\columnwidth}
        \centering
        \includegraphics[width = 0.8\columnwidth]{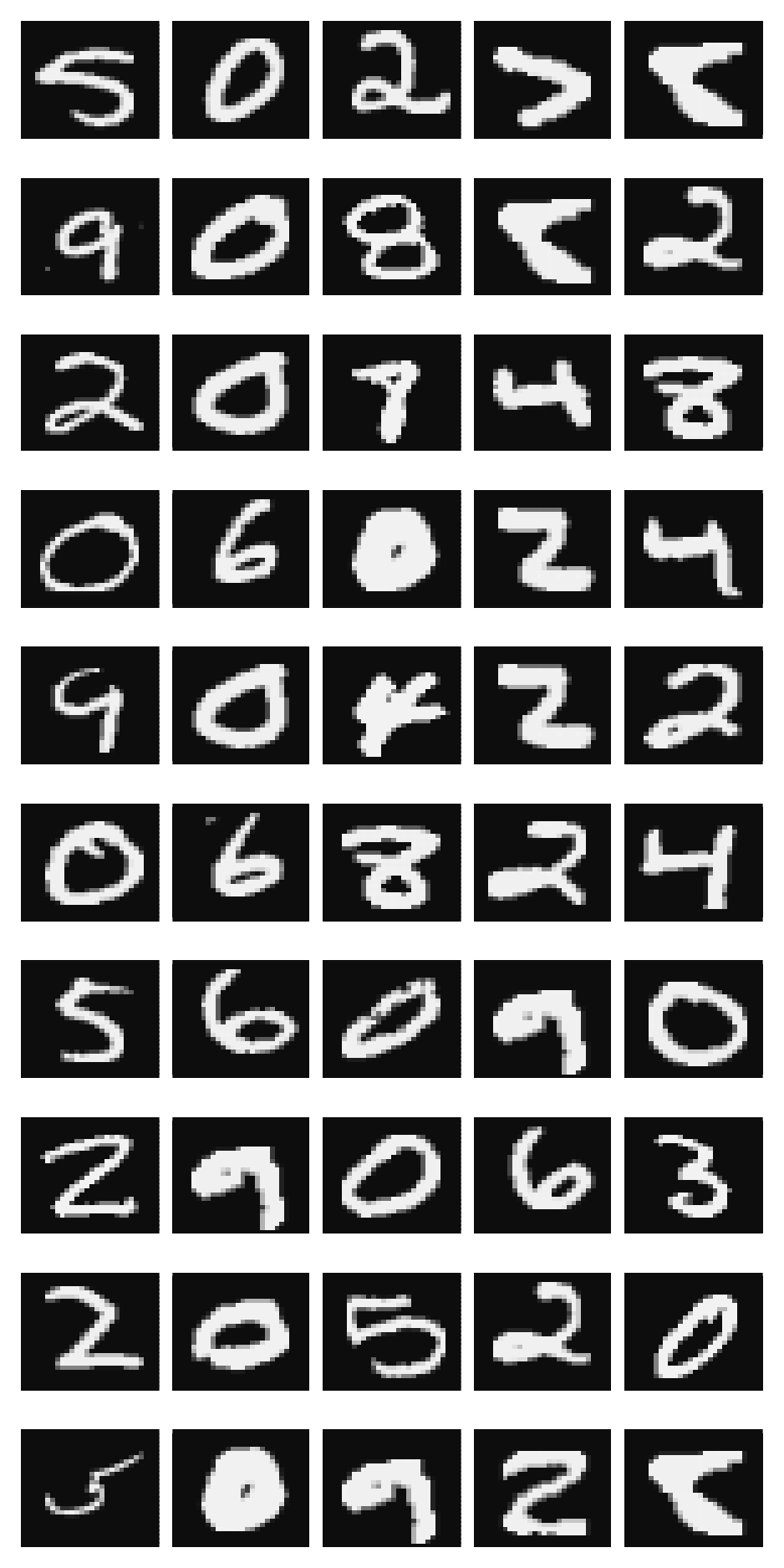}
        \subcaption{$\lambda = 0, \beta = 50$}
        \label{subfig:lambda_0_beta_50}
    \end{minipage}
    \begin{minipage}{0.32\columnwidth}
        \centering
        \includegraphics[width = 0.8\columnwidth]{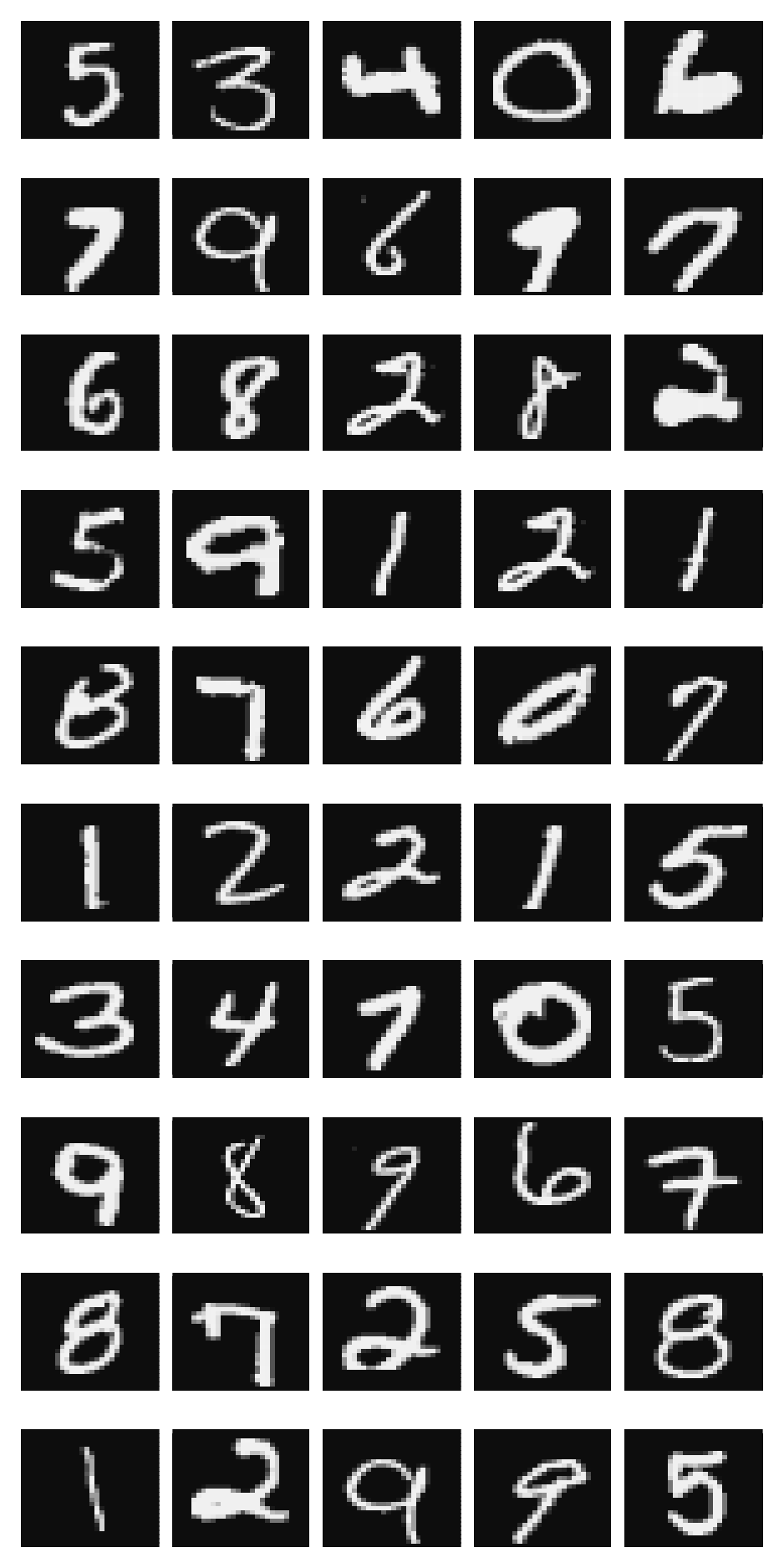}
        \subcaption{$\lambda = 1, \beta = 50$}
        \label{subfig:lambda_1_beta_50}
    \end{minipage}
    \begin{minipage}{0.32\columnwidth}
        \centering
        \includegraphics[width = 0.8\columnwidth]{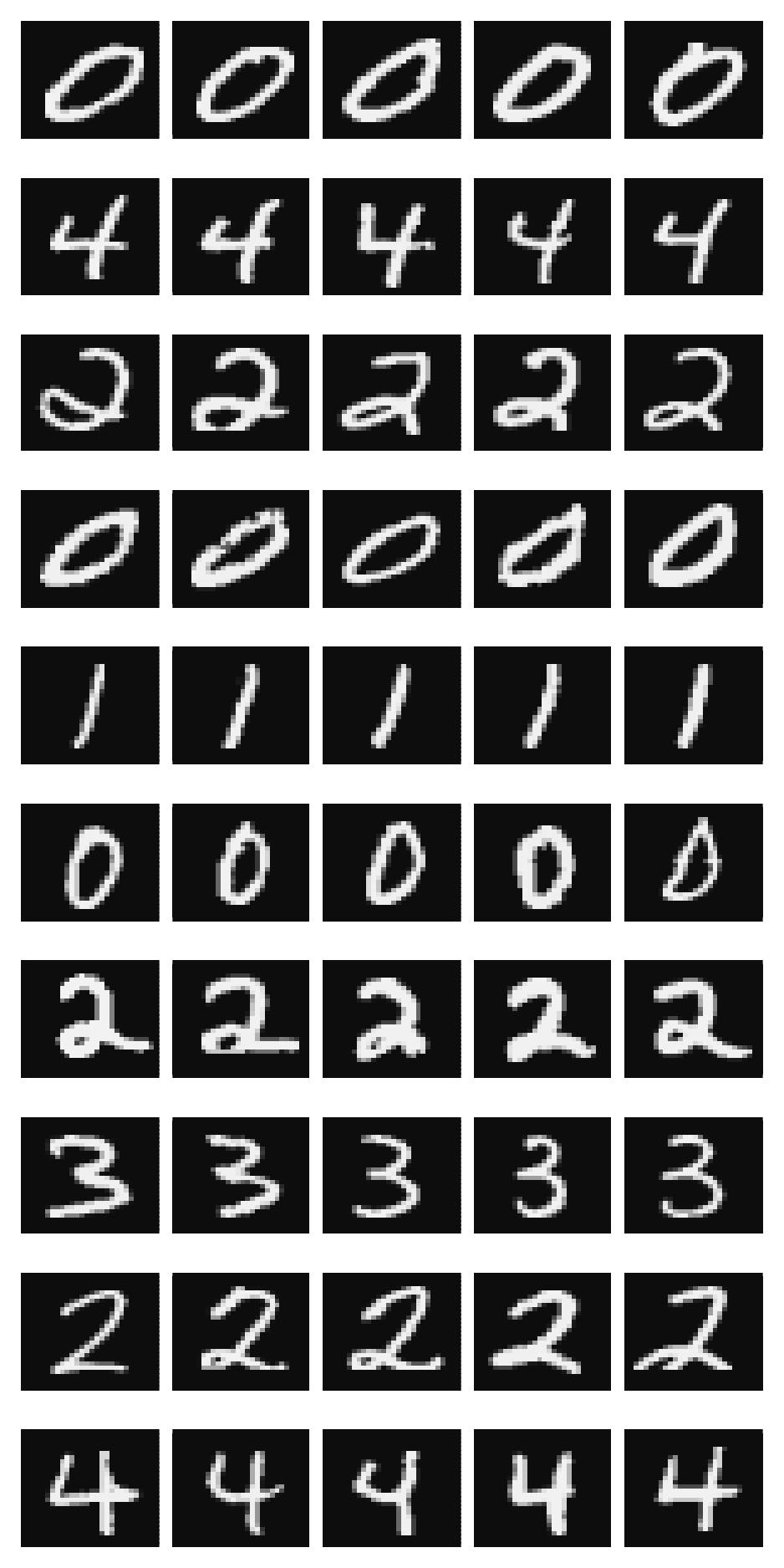}
        \subcaption{$\lambda = 2, \beta = 50$}
        \label{subfig:lambda_2_beta_50}
    \end{minipage}
    \caption{Examples of the acquired subsets of MNIST for $\beta = 50$.}
    \label{fig:beta_50}
\end{figure}

\end{document}


%

%

\onecolumn
\aistatstitle{Instructions for Paper Submissions to AISTATS 2025: \\
Supplementary Materials}

\renewcommand{\theequation}{\thesection.\arabic{equation}}
\setcounter{equation}{0}

\section{Proofs}\label{app:proofs}
\subsection{Proof of Proposition \ref{prop:dkpp_bm}}
\begin{proof}
    Let $\mathcal{A} \subseteq \mathcal{Y}$ be a random subset following the DKPP $P_\phi(\mathcal{A}; \bm{L})$,
    and let $\bm{z} = (z_1, \ldots, z_N)^\top$ denote the indicator vector of $\mathcal{A}$:
    \begin{align}
        z_i =
        \left \{~
        \begin{aligned}
            0 & \quad\mbox{if}~ i \notin \mathcal{A},\\
            1 & \quad\mbox{if}~ i \in \mathcal{A},\\
        \end{aligned}
        \right .
    \end{align}
    for $i = 1, \ldots, N$.
    Then, we obtain
    \begingroup
    \allowdisplaybreaks
    \begin{align}
        P_\phi(\mathcal{A}; \bm{L}) &\propto \exp \tr \phi(\bm{L}[\mathcal{A}])\\
        &= \exp \tr \left ( a \bm{L}[\mathcal{A}]\bm{L}[\mathcal{A}]  + b \bm{L}[\mathcal{A}] + c\bm{I} \right )\\
        &= \exp \left ( a \tr \bm{L}[\mathcal{A}]\bm{L}[\mathcal{A}] + b \tr \bm{L}[\mathcal{A}] + c \lvert \mathcal{A} \rvert \right )\\
        &= \exp \left ( a \lVert \bm{L}[\mathcal{A}] \rVert^2_F + b \tr \bm{L}[\mathcal{A}] + c \lvert \mathcal{A} \rvert \right )\\
        &= \exp \left ( a \sum_{i,j \in \mathcal{A}} \lvert L_{ij} \rvert^2 + \sum_{i \in \mathcal{A}} (b L_{ii} + c) \right )\\
        &= \exp \left ( a \sum^N_{i,j = 1} \lvert L_{ij} \rvert^2 z_i z_j + \sum^N_{i = 1} (b L_{ii} + c) z_i \right )\\
        &= \exp \left ( a \sum_{i \neq j} \lvert L_{ij} \rvert^2 z_i z_j + a\sum^N_{i=1} L^2_{ii} z^2_i + \sum^N_{i = 1} (b L_{ii} + c) z_i \right )\\
        &= \exp \left ( a \sum_{i \neq j} \lvert L_{ij} \rvert^2 z_i z_j + a\sum^N_{i=1} L^2_{ii} z_i + \sum^N_{i = 1} (b L_{ii} + c) z_i  \right )\\
        \label{eq:proof1}
        &= \exp \left ( a \sum_{i \neq j} \lvert L_{ij} \rvert^2 z_i z_j + \sum^N_{i=1} (a L^2_{ii} + b L_{ii} + c) z_i \right ).
    \end{align}
    \endgroup
    By comparing \eqref{eq:proof1} and \eqref{eq:boltzmann_machine}, we prove the proposition.
\end{proof}

\subsection{Proof of Proposition \ref{prop:affine_logmod}}
\begin{proof}
    \ \\
    \textsf{[$\phi$ is affine$~\Longrightarrow~$DKPP is log-modular]}\par
    We denote the diagonal matrix with the eigenvalues of $\bm{L}[\mathcal{A}]$ by $\bm{\Lambda}_\mathcal{A}$.
    By letting $\phi(x) = bx + c$ from the assumption, we find that the log-likelihood of the DKPP is
    \begin{align}
        \log P_\phi(\mathcal{A}; \bm{L}) = \tr\phi(\bm{L}[\mathcal{A}]) - \log Z_\phi(\bm{L})
        &= \tr(b \bm{\Lambda}_\mathcal{A} + c\bm{I}) - \log Z_\phi(\bm{L})\\
        &= b \tr\bm{\Lambda}_\mathcal{A} + c \lvert \mathcal{A} \rvert - \log Z_\phi(\bm{L})\\
        &= b \tr\bm{L}_\mathcal{A} + c \lvert \mathcal{A} \rvert - \log Z_\phi(\bm{L})\\
        \label{eq:dkpp_linear_loglikelihood}
        &= \sum_{i \in \mathcal{A}} (b L_{ii} + c) - \log Z_\phi(\bm{L}).
    \end{align}
    This is a modular function for $\mathcal{A} \subseteq \mathcal{Y}$.
    
    \textsf{[DKPP is log-modular $~\Longrightarrow~$ $\phi$ is affine]}\par
    From the log-modularity of the DKPP, we have
    \begin{align}
        \label{eq:dkpp_logmodularity}
        \tr\phi(\bm{L}[\mathcal{S}]) + \tr\phi(\bm{L}[\mathcal{T}])
        = \tr\phi(\bm{L}[\mathcal{S} \cup \mathcal{T}]) + \tr\phi(\bm{L}[\mathcal{S} \cap \mathcal{T}])
    \end{align}
    for every $\mathcal{S}, \mathcal{T} \subseteq \mathcal{Y}$.
    By defining $g(x) \coloneqq \phi(x) - \phi(0)$,
    we can represent $\phi$ as $\phi(x) = g(x) + \phi(0)$.
    Now, we take $\mathcal{S} = \{i\}, \mathcal{T} = \{j\}~ (i \neq j)$ and
    the kernel matrix $\bm{L}$ arbitrarily to satisfy $L_{ij} = \sqrt{L_{ii} L_{jj}} \in \mathbb{R}$.
    Since $\bm{L}$ and $\phi$ are independent, this choice of $\bm{L}$ does not impose any restriction on $\phi$.
    
    The two eigenvalues of $\bm{L}[\mathcal{S} \cup \mathcal{T}] \in \mathbb{R}^{2 \times 2}$ are given by $\lambda_1 = L_{ii} + L_{jj}$ and $\lambda_2 = 0$.
    The l.h.s. of \eqref{eq:dkpp_logmodularity} is
    \begin{align}
        \label{eq:logmodular_linear_lhs}
        \tr\phi(\bm{L}[\mathcal{S}]) + \tr\phi(\bm{L}[\mathcal{T}])
        = \phi(L_{ii}) + \phi(L_{jj}),
    \end{align}
    and the r.h.s. is
    \begin{align}
        \tr\phi(\bm{L}[\mathcal{S} \cup \mathcal{T}]) + \tr\phi(\bm{L}[\mathcal{S} \cap \mathcal{T}])
        = \tr\phi(\bm{L}[\mathcal{S} \cup \mathcal{T}])
        = \phi(\lambda_1) + \phi(\lambda_2).
    \end{align}
    Therefore, we have the identity
    \begin{align}
        \label{eq:logmodular_linear}
        \phi(L_{ii}) + \phi(L_{jj})
        = \phi(\lambda_1) + \phi(\lambda_2).
    \end{align}
    By substituting $\phi(x) = g(x) + \phi(0)$ into \eqref{eq:logmodular_linear}, it becomes
    \begin{align}
        & \{g(L_{ii}) + \phi(0)\} + \{g(L_{jj}) + \phi(0)\}
        = \{g(\lambda_1) + \phi(0)\} + \{g(\lambda_2) + \phi(0)\}\\
        &\quad \Longleftrightarrow~ g(L_{ii}) + g(L_{jj}) = g(\lambda_1) + g(\lambda_2) = g(L_{ii} + L_{jj}) + g(0) = g(L_{ii} + L_{jj}).
    \end{align}
    This means that $g$ must be linear and $\phi$ must be affine.
\end{proof}

\subsection{Proof of Proposition \ref{prop:marginal_rao_blackwell}}
\begin{proof}
We consider the partition of $\mathcal{Y}$ given by
$\mathcal{A}_{\mathrm{in}}, \mathcal{Y} \backslash \mathcal{A}_{\mathrm{out}}, \mathcal{A}_{\mathrm{out}} \backslash \mathcal{A}_{\mathrm{in}}$
and
separate the random vector $\bm{\xi} = (\xi_1, \ldots, \xi_N)^\top$ by
$
\bm{\xi}^+ \coloneqq (\xi_i)_{i \in \mathcal{A}_{\mathrm{in}}},
\bm{\xi}^- \coloneqq (\xi_i)_{i \in \mathcal{Y} \backslash \mathcal{A}_{\mathrm{out}}}
$, and
$\bm{\xi}^\circ \coloneqq (\xi_i)_{i \in \mathcal{A}_{\mathrm{out}} \backslash \mathcal{A}_{\mathrm{in}}}$.
For example, when
$\mathcal{Y} = \{1, 2, 3, 4, 5\}, \mathcal{A}_{\mathrm{in}} = \{1, 2\}$,
and $\mathcal{A}_{\mathrm{out}} = \{1, 2, 4, 5\} (\supseteq \mathcal{A}_{\mathrm{in}})$,
the separation of $\bm{\xi}$ becomes
$\bm{\xi}^+ = (\xi_1, \xi_2)^\top, \bm{\xi}^- = (\xi_3)$, and $\bm{\xi}^\circ = (\xi_4, \xi_5)^\top$.
We also denote
$N^+ \coloneqq \lvert \mathcal{A}_{\mathrm{in}} \rvert,~
N^- \coloneqq \lvert \mathcal{Y} \backslash \mathcal{A}_{\mathrm{out}} \rvert$,
and
$N^\circ \coloneqq \lvert \mathcal{A}_{\mathrm{out}} \backslash \mathcal{A}_{\mathrm{in}} \rvert$
and
$\bm{\xi}^+ \sim Q_{\bm{q}^+},\bm{\xi}^- \sim Q_{\bm{q}^-}$,
and $\bm{\xi}^\circ \sim Q_{\bm{q}^\circ}$,
where
$Q_{\bm{q}^+} \coloneqq \prod_{i \in \mathcal{A}_{\mathrm{in}}} Q_{q_i}(\xi_i),
Q_{\bm{q}^-} \coloneqq \prod_{i \in \mathcal{Y} \backslash \mathcal{A}_{\mathrm{out}}} Q_{q_i}(\xi_i)$,
and 
$Q_{\bm{q}^\circ} \coloneqq \prod_{i \in \mathcal{A}_{\mathrm{out}} \backslash \mathcal{A}_{\mathrm{in}}} Q_{q_i}(\xi_i)$.
We denote the probability measure that induces $Q_{\bm{q}^+}$ by $\mathbb{Q}_{\bm{q}^+}$ and the same for $\mathbb{Q}_{\bm{q}^-}$ and $\mathbb{Q}_{\bm{q}^\circ}$.

As shown in \eqref{eq:marginal_is}, 
$
    \mathbb{E}_{\bm{\xi} \sim Q_{\bm{q}}}[w(\mathcal{A}_{\bm{\xi}}) \mathbbm{1} (\mathcal{A}_{\mathrm{in}} \subseteq \mathcal{A}_{\bm{\xi}} \subseteq \mathcal{A}_{\mathrm{out}})]
$
is equal to
$\mathbb{P}(\mathcal{A}_{\mathrm{in}} \subseteq \mathcal{A} \subseteq \mathcal{A}_{\mathrm{out}})$.
The tower property of expectation states that
$\mathbb{E}_X[f(X)] = \mathbb{E}_Y [\mathbb{E}_X [f(X) \vert Y]]$ generally holds
for an arbitrary pair of random variables $(X, Y)$ and an arbitrary function $f$.
By choosing $X \gets \bm{\xi}$ and
$Y \gets \bm{\xi}^+$ (and $Y \gets \bm{\xi}^-$), we have
\begin{align}
    \mathbb{P}(\mathcal{A}_{\mathrm{in}} \subseteq \mathcal{A} \subseteq \mathcal{A}_{\mathrm{out}})
    &= \mathbb{E}_{\bm{\xi} \sim Q_{\bm{q}}}[w(\mathcal{A}_{\bm{\xi}})
    \mathbbm{1} (\mathcal{A}_{\mathrm{in}} \subseteq \mathcal{A}_{\bm{\xi}} \subseteq
    \mathcal{A}_{\mathrm{out}})]\\
    &= \mathbb{E}_{\bm{\xi}^+}[\mathbb{E}_{\bm{\xi}^-, \bm{\xi}^\circ}[w(\mathcal{A}_{\bm{\xi}})
    \mathbbm{1} (\mathcal{A}_{\mathrm{in}} \subseteq \mathcal{A}_{\bm{\xi}} \subseteq
    \mathcal{A}_{\mathrm{out}}) \vert \bm{\xi}^+]]\\
    &= \sum_{z_1 \in \{0, 1\}} \cdots \sum_{z_{N^+} \in \{0, 1\}}
    \mathbb{Q}_{\bm{q}^+}(\xi^+_1 = z_1, \ldots, \xi^+_{N^+} = z_{N^+})\\
    &\hphantom{
        \sum_{z_1 \in \{0, 1\}} \cdots \sum_{z_{N^+} \in \{0, 1\}}
    }
    \times\mathbb{E}_{\bm{\xi}^-, \bm{\xi}^\circ}[w(\mathcal{A}_{\bm{\xi}})
    \underbrace{
    \mathbbm{1} (\mathcal{A}_{\mathrm{in}} \subseteq \mathcal{A}_{\bm{\xi}} \subseteq
    \mathcal{A}_{\mathrm{out}})}_{\mathclap{\mbox{{\footnotesize takes $0$ if not $\xi^+_1 = \cdots =  \xi^+_{N^+} = 1$}}}}
    \vert \bm{\xi}^+ = \bm{z}]\\
    &= \mathbb{Q}_{\bm{q}^+}(\xi^+_1 = 1, \ldots, \xi^+_{N^+} = 1)
    \mathbb{E}_{\bm{\xi}^-, \bm{\xi}^\circ}[w(\mathcal{A}_{\bm{\xi}})
    \mathbbm{1} (\mathcal{A}_{\mathrm{in}} \subseteq \mathcal{A}_{\bm{\xi}} \subseteq
    \mathcal{A}_{\mathrm{out}}) \vert \bm{\xi}^+ = \bm{1}]\\
    &= Q_{\bm{q}^+}(\bm{1})
    \mathbb{E}_{\bm{\xi}^-}[
    \mathbb{E}_{\bm{\xi}^\circ}[w(\mathcal{A}_{\bm{\xi}})
    \mathbbm{1} (\mathcal{A}_{\mathrm{in}} \subseteq \mathcal{A}_{\bm{\xi}} \subseteq
    \mathcal{A}_{\mathrm{out}}) \vert \bm{\xi}^+ = \bm{1}, \bm{\xi}^-]]\\
    &= Q_{\bm{q}^+}(\bm{1})
    \sum_{z_1 \in \{0, 1\}} \cdots \sum_{z_{N^-} \in \{0, 1\}}
    \mathbb{Q}_{\bm{q}^-}(\xi^-_1 = z_1, \ldots, \xi^-_{N^-} = z_{N^-})\\
    &\hphantom{
        = Q_{\bm{q}^+}(\bm{1})
        \sum_{z_1 \in \{0, 1\}} \cdots \sum_{z_{N^-} \in \{0, 1\}}
    }\times
    \mathbb{E}_{\bm{\xi}^\circ}[w(\mathcal{A}_{\bm{\xi}})
    \underbrace{
    \mathbbm{1} (\mathcal{A}_{\mathrm{in}} \subseteq \mathcal{A}_{\bm{\xi}} \subseteq
    \mathcal{A}_{\mathrm{out}})}_{\mathclap{\mbox{{\footnotesize takes $0$ if not $\xi^-_1 = \cdots = \xi^-_{N^+} = 0$}}}}
    \vert \bm{\xi}^+ = \bm{1}, \bm{\xi}^- = \bm{z}]\\
    &=Q_{\bm{q}^+}(\bm{1}) Q_{\bm{q}^-}(\bm{0})
    \mathbb{E}_{\bm{\xi}^\circ}[w(\mathcal{A}_{\bm{\xi}})
    \mathbbm{1} (\mathcal{A}_{\mathrm{in}} \subseteq \mathcal{A}_{\bm{\xi}} \subseteq
    \mathcal{A}_{\mathrm{out}})
    \vert \bm{\xi}^+ = \bm{1}, \bm{\xi}^- = \bm{0}]\\
    &=Q_{\bm{q}^+}(\bm{1}) Q_{\bm{q}^-}(\bm{0})
    \mathbb{E}_{\bm{\xi}^\circ}[w(\mathcal{A}_{\bm{\xi}})
    \vert \bm{\xi}^+ = \bm{1}, \bm{\xi}^- = \bm{0}]\\
    &=Q_{\bm{q}^+}(\bm{1}) Q_{\bm{q}^-}(\bm{0})
    \mathbb{E}_{\bm{\xi}^\circ} \left [ \left .
    \frac{P(\mathcal{A}_{\bm{\xi}})}{Q_{\bm{q}}(\bm{\xi})}
    \right \vert \bm{\xi}^+ = \bm{1}, \bm{\xi}^- = \bm{0} \right ]\\
    &= \mathbb{E}_{\bm{\xi}^\circ} \left [ \left .
    \frac{P(\mathcal{A}_{\bm{\xi}})}{Q_{\bm{q}^\circ}(\bm{\xi}^\circ)}
    \right \vert \bm{\xi}^+ = \bm{1}, \bm{\xi}^- = \bm{0} \right ],
\end{align}
which is the first half of Proposition \ref{prop:marginal_rao_blackwell}.

On the other hand, the tower property of variance states that
$
\mathrm{Var}_X[f(X)] = \mathbb{E}_Y[\mathrm{Var}_X[f(X) \vert Y]] + \mathrm{Var}_Y[\mathbb{E}_X[f(X) \vert Y]]
\leq \mathrm{Var}_Y[\mathbb{E}_X[f(X) \vert Y]].
$
This ensures the latter half of Proposition \ref{prop:marginal_rao_blackwell}.
\end{proof}

\subsection{Proof of Proposition \ref{prop:marginal_cardinality}}
\begin{proof}
    First, we define independent Bernoulli trials
    $Q_{\bm{q}}$ as in \eqref{eq:variational_distribution}
    with $q_1 = \cdots = q_N \eqqcolon q$.
    By importance sampling, we have
    \begin{align}
        \mathbb{P}(\lvert \mathcal{A} \rvert = k)
        = \sum_{\mathcal{A}: \lvert \mathcal{A} \rvert = k} P (\mathcal{A})
        = \mathbb{E}_{\mathcal{A} \sim P}[\mathbbm{1} (\lvert \mathcal{A} \rvert = k)]
        \label{eq:marginal_ls_cardinality_1}
        = \mathbb{E}_{\bm{\xi} \sim Q_{\bm{q}}}[w(\mathcal{A}_{\bm{\xi}}) \mathbbm{1} (\lvert \mathcal{A}_{\bm{\xi}} \rvert = k)]
    \end{align}
    in the similar way to \eqref{eq:marginal_is},
    where $w(\mathcal{A}_{\bm{\xi}}) = P(\mathcal{A}_{\bm{\xi}}) / Q_{\bm{q}}(\bm{\xi})$
    is the weight function.
    Then, we introduce a new random variable,
    $\zeta \coloneqq \sum^N_{i=1} \xi_i$,
    that follows the binomial distribution: $\xi \sim \mathrm{Bin}(N, q)$.
    Now, we consider the Rao--Blackwellization of \eqref{eq:marginal_ls_cardinality_1} by the auxiliary random variable $\zeta$:
    \begin{align}
        \mathbb{P}(\lvert \mathcal{A} \rvert = k)
        &= \mathbb{E}_{\bm{\xi}}[w(\mathcal{A}_{\bm{\xi}}) \mathbbm{1} (\lvert \mathcal{A}_{\bm{\xi}} \rvert = k)]\\
        &= \mathbb{E}_{\zeta} \left [\mathbb{E}_{\bm{\xi}} \left [w(\mathcal{A}_{\bm{\xi}}) \mathbbm{1} (\lvert \mathcal{A}_{\bm{\xi}} \rvert = k) \left \vert \sum^N_{i=1} \xi_i = \zeta \right . \right ] \right ]\\
        &= \sum^N_{n = 0} \mathbb{P}(\zeta = n) \mathbb{E}_{\bm{\xi}} \left [ w(\mathcal{A}_{\bm{\xi}}) 
        \underbrace{\mathbbm{1} (\lvert \mathcal{A}_{\bm{\xi}} \rvert = k)}_{\mathclap{\mbox{{\footnotesize takes 0 if $\zeta \neq k$}}}}
        \left \vert \sum^N_{i=1} \xi_i = n \right . \right ]\\
        &= \mathbb{P}(\zeta = k) \mathbb{E}_{\bm{\xi}}\left [w(\mathcal{A}_{\bm{\xi}}) \left \vert \sum^N_{i=1} \xi_i = k \right . \right ]\\
        &= \binom{N}{k} q^{k} (1-q)^{N-k} \mathbb{E}_{\bm{\xi}} \left [ \frac{P(\mathcal{A}_{\bm{\xi}})}{Q_{\bm{q}}(\mathcal{A}_{\bm{\xi}})} \left \vert \sum^N_{i=1} \xi_i = k \right .\right ]\\
        &= \binom{N}{k} q^{k} (1-q)^{N-k} \mathbb{E}_{\bm{\xi}} \left [ \frac{P(\mathcal{A}_{\bm{\xi}})}{q^k (1-q)^{N-k}} \left \vert \sum^N_{i=1} \xi_i = k \right .\right ]\\
        \label{eq:marginal_ls_cardinality}
        &= \binom{N}{k} \mathbb{E}_{\bm{\xi}} \left [ P(\mathcal{A}_{\bm{\xi}}) \left \vert \sum^N_{i=1} \xi_i = k \right . \right ].
    \end{align}
    Because $\xi_1, \ldots, \xi_N$ are i.i.d. such that $\xi_i \sim \mathrm{Bernoulli}(q)$,
    the conditional expectation in the r.h.s. of \eqref{eq:marginal_ls_cardinality}
    equals to the expectation over the uniform distribution on $\{\mathcal{A} \subseteq \mathcal{Y} : \lvert \mathcal{A} \rvert = k\}$.
\end{proof}

\section{Mean-field Approximation}\label{app:mean_field}
We derive the update rule of the mean-field approximation \eqref{eq:mean_field_update} for completeness.
It is recommended to also refer to the thesis by \citet[Section 3]{djolonga2018} since the derivation is equivalent.
Let $\bm{\xi} \in \{0, 1\}^N$ denote a binary random vector,
$f: 2^\mathcal{Y} \to \mathbb{R}$ be a set function,
and $P: \bm{\xi} \mapsto Z^{-1} \exp (f(\mathcal{A}_{\bm{\xi}}))$ be a probabilistic function on $\{0, 1\}^N$, or equivalently $2^\mathcal{Y}$.
Now, we consider minimizing $\mathrm{KL}(Q_{\bm{q}} \Vert P)$, where $Q_{\bm{q}}$ is defined in \eqref{eq:variational_distribution}.
Given
\begin{align}
    \mathrm{KL}(Q_{\bm{q}} \Vert P)
    = \mathbb{E}_{\bm{\xi} \sim Q_{\bm{q}}}\left [ \log \frac{Q_{\bm{q}} (\bm{\xi})}{P(\bm{\xi})} \right ]
    = \mathbb{E}_{\bm{\xi} \sim Q_{\bm{q}}} [ \log Q_{\bm{q}} (\bm{\xi}) ]
    - \mathbb{E}_{\bm{\xi} \sim Q_{\bm{q}}}[ f(\mathcal{A}_{\bm{\xi}}) ] + \log Z,
\end{align}
the minimization of $\mathrm{KL}(Q_{\bm{q}} \Vert P)$ is equivalent to maximizing the ELBO,
defined as
\begin{align}
    L(\bm{q}) \coloneqq
    -\mathbb{E}_{\bm{\xi} \sim Q_{\bm{q}}} [ \log Q_{\bm{q}} (\bm{\xi}) ]
    + \mathbb{E}_{\bm{\xi} \sim Q_{\bm{q}}}[ f(\mathcal{A}_{\bm{\xi}}) ]
    ~(= \mathbb{H}[Q_{\bm{q}}] + \tilde{f}(\bm{q})).
\end{align}
We solve the problem $\max_{\bm{q}} L(\bm{q})$ by using the coordinate ascent.
The derivative $\frac{\partial L(\bm{q})}{\partial q_i}$ is 
\begin{align}
    \frac{\partial L(\bm{q})}{\partial q_i}
    &= \frac{\partial L(\bm{q})}{\partial q_i}
    \left \{
    \sum^N_{j=1} (-q_j \log q_j - (1 - q_j) \log (1 - q_j))
    + \sum_{\mathcal{A} \subseteq \mathcal{Y}} f(\mathcal{A}) \prod_{j \in \mathcal{A}} q_j \prod_{j \notin \mathcal{A}} (1 - q_j)
    \right \}\\
    &= \log \frac{1-q_i}{q_i}
    + \sum_{\mathcal{A}: i \in \mathcal{A}} f(\mathcal{A}) \prod_{\substack{j \in \mathcal{A}\\j \neq i}} q_j \prod_{j \notin \mathcal{A}} (1 - q_j)
    - \sum_{\mathcal{A}: i \notin \mathcal{A}} f(\mathcal{A}) \prod_{j \in \mathcal{A}} q_j \prod_{\substack{j \notin \mathcal{A} \\ j \neq i}} (1 - q_j)\\
    &= \log \frac{1-q_i}{q_i}
    + \sum_{\mathcal{A} \subseteq \mathcal{Y} \backslash \{i\}} [f(\mathcal{A} \cup \{i\}) - f(\mathcal{A})]
    \prod_{j \in \mathcal{A}} q_j \prod_{j \notin \mathcal{A}} (1 - q_j)\\
    &= \log \frac{1-q_i}{q_i} + \mathbb{E}_{\bm{\xi}_{\backslash i} \sim Q_{\bm{q}_{\backslash i}}} [f(i \vert \mathcal{A}_{\bm{\xi}_{\backslash i}})].
\end{align}
By solving the equation
\begin{align}
    \log \frac{1-q_i}{q_i} + \mathbb{E}_{\bm{\xi}_{\backslash i} \sim Q_{\bm{q}_{\backslash i}}} [f(i \vert \mathcal{A}_{\bm{\xi}_{\backslash i}})] = 0,
\end{align}
we obtain the update rule \eqref{eq:mean_field_update}.
We know $\mathrm{KL}(Q_{\bm{q}} \Vert P) \geq 0$, leading to the inequality $\log Z \geq L(\bm{q})$.
Therefore, we can evaluate the tightened lower bound of $\log Z$ using the optimized $\bm{q}$.


\section{Gradient of Ratio Matching}\label{app:gradient}
In this section, we derive the gradient of the loss function from ratio matching \eqref{eq:ratio_matching_dkpp}
in analytical form.
By defining
$u_{m, n} \coloneqq \exp(\tr \phi(\bm{L}[\mathcal{A}_m]) - \tr\phi(\bm{L}[\mathcal{A}^{\bar{n}}_m] ))$,
we obtain
\begin{align}
    \frac{\partial J(\bm{L})}{\partial\bm{L}}
    = \sum_{m,n} \frac{d g(u_{m,n})^2}{d u_{m,n}} \frac{\partial u_{m,n}}{\partial \bm{L}}
    \label{eq:rm_gradient_1}
    = -2\sum_{m,n} \frac{g(u_{m,n})}{(1 + u_{m,n})^2} \frac{\partial u_{m,n}}{\partial \bm{L}}.
\end{align}
Here, $\bm{U}_\mathcal{A}$ denotes the $N \times \lvert \mathcal{A} \rvert$ binary matrix
such that $\bm{L}[\mathcal{A}] = \bm{U}^\top_\mathcal{A} \bm{L} \bm{U}_\mathcal{A}$.
Then, 
\begin{align}
    \frac{\partial}{\partial \bm{L}} \tr \phi(\bm{L}[\mathcal{A}])
    = \bm{U}^\top_\mathcal{A} \phi' (\bm{L}[\mathcal{A}]) \bm{U}_\mathcal{A},
\end{align}
where $\phi'$ is the derivative of the univariate scalar function $\phi$.
Therefore, the remaining term in \eqref{eq:rm_gradient_1} becomes
\begin{align}
   \frac{\partial u_{m,n}}{\partial \bm{L}}
   = u_{m,n} (\bm{U}^\top_{\mathcal{A}_m} \phi' (\bm{L}[\mathcal{A}_m]) \bm{U}_{\mathcal{A}_m}
   - \bm{U}^\top_{\mathcal{A}^{\bar{n}}_m} \phi' (\bm{L}[\mathcal{A}^{\bar{n}}_m]) \bm{U}_{\mathcal{A}^{\bar{n}}_m}).
\end{align}
Consequently, the derivative we seek is
\begin{align}
    \label{eq:rm_gradient_L}
    \frac{\partial J(\bm{L})}{\partial\bm{L}}
    = -2\sum_{m,n} \frac{u_{m,n} g(u_{m,n})}{(1 + u_{m,n})^2}
    (\bm{U}^\top_{\mathcal{A}_m} \phi' (\bm{L}[\mathcal{A}_m]) \bm{U}_{\mathcal{A}_m}
    - \bm{U}^\top_{\mathcal{A}^{\bar{n}}_m} \phi' (\bm{L}[\mathcal{A}^{\bar{n}}_m]) \bm{U}_{\mathcal{A}^{\bar{n}}_m}).
\end{align}
For evaluating the gradient \eqref{eq:rm_gradient_L},
computing $u_{m,n}$ requires $\mathcal{O}(\lvert \mathcal{A}_m \rvert^3)$ time complexity,
and
$\bm{U}^\top_{\mathcal{A}_m} \phi' (\bm{L}[\mathcal{A}_m]) \bm{U}_{\mathcal{A}_m}$
also takes $\mathcal{O}(\lvert \mathcal{A}_m \rvert^3)$
because $\bm{U}_{\mathcal{A}_m}$ has only $\lvert \mathcal{A}_m \rvert$ non-zero elements.
Computing
$\bm{U}^\top_{\mathcal{A}^{\bar{n}}_m} \phi' (\bm{L}[\mathcal{A}^{\bar{n}}_m]) \bm{U}_{\mathcal{A}^{\bar{n}}_m}$
takes $\mathcal{O}((\lvert \mathcal{A}_m \rvert + 1)^3) =\mathcal{O}(\lvert \mathcal{A}_m \rvert^3)$.
By taking the complexity from the sum of $N \times N$ matrices into account,
we obtain the whole complexity
$\mathcal{O}(\sum_{(m,n) \in \Omega} \lvert \mathcal{A}_m \rvert^3 + \lvert \Omega \rvert N^2)
= \mathcal{O}(\lvert \Omega \rvert (\kappa^3 + N^2))$
with the minibatch $\Omega$.

In practical scenarios, $\bm{V} \in \mathbb{R}^{N \times D}$ such that $\bm{L} = \bm{V} \bm{V}^\top$
is often learned to keep $\bm{L}$ positive (semi-)definite in learning steps.
If $D < N$, the low-rank kernel matrix is obtained.
Then, the gradient with respect to $\bm{V}$ becomes
\begin{align}
    \frac{\partial J(\bm{L})}{\partial\bm{V}}
    &= 2\frac{\partial J(\bm{L})}{\partial\bm{L}} \bm{V}\\
    \label{eq:rm_gradient_V}
    &= -4\sum_{m,n} \frac{u_{m,n} g(u_{m,n})}{(1 + u_{m,n})^2}
    (\bm{U}^\top_{\mathcal{A}_m} \phi' (\bm{L}[\mathcal{A}_m]) \bm{U}_{\mathcal{A}_m} \bm{V}
    - \bm{U}^\top_{\mathcal{A}^{\bar{n}}_m} \phi' (\bm{L}[\mathcal{A}^{\bar{n}}_m]) \bm{U}_{\mathcal{A}^{\bar{n}}_m} \bm{V}).
\end{align}
Since $\bm{U}_{\mathcal{A}_m} \bm{V}$ is the $\lvert \mathcal{A}_m \rvert \times D$ dense matrix,
the time complexity of \eqref{eq:rm_gradient_V} is
$\mathcal{O}(\sum_{(m,n) \in \Omega}(\lvert \mathcal{A}_m \rvert^3 + D\lvert \mathcal{A}_m \rvert^2) + \lvert \Omega \rvert ND)
= \mathcal{O}(\lvert \Omega \rvert (\kappa^2 \max\{\kappa, D\} + ND))$.
The term $\mathcal{O}(\lvert \Omega \rvert ND)$ arises from matrix additions,
which still ensures scalability as $M$ and/or $N$ increases even if $D = N$.



\section{FORMATTING INSTRUCTIONS}

To prepare a supplementary pdf file, we ask the authors to use \texttt{aistats2025.sty} as a style file and to follow the same formatting instructions as in the main paper.
The only difference is that the supplementary material must be in a \emph{single-column} format.
You can use \texttt{supplement.tex} in our starter pack as a starting point, or append the supplementary content to the main paper and split the final PDF into two separate files.

Note that reviewers are under no obligation to examine your supplementary material.

\section{MISSING PROOFS}

The supplementary materials may contain detailed proofs of the results that are missing in the main paper.

\subsection{Proof of Lemma 3}

\textit{In this section, we present the detailed proof of Lemma 3 and then [ ... ]}

\section{ADDITIONAL EXPERIMENTS}

If you have additional experimental results, you may include them in the supplementary materials.

\subsection{The Effect of Regularization Parameter}

\textit{Our algorithm depends on the regularization parameter $\lambda$. Figure 1 below illustrates the effect of this parameter on the performance of our algorithm. As we can see, [ ... ]}

\vfill